\documentclass[twoside]{article}

\usepackage[accepted]{aistats2025}

\usepackage{xcolor}
\definecolor{mydarkblue}{rgb}{0,0.08,0.45}
\definecolor{mydarkgreen}{rgb}{0.2,0.75,0.15}
\usepackage[colorlinks, citecolor=mydarkgreen, linkcolor=blue, urlcolor=mydarkgreen]{hyperref}

\usepackage{amsmath}
\usepackage{amssymb}
\usepackage{mathtools}
\usepackage{url}
\usepackage{algorithm}
\usepackage{algorithmic}
\usepackage{amsthm}
\usepackage[authoryear]{natbib}
\usepackage{tikz}
\usepackage{graphicx}
\usepackage{booktabs}
\usepackage{cite}
\usepackage{subcaption}
\usetikzlibrary{positioning, shapes.geometric, patterns, arrows.meta}
%
%




\definecolor{blush}{rgb}{0.87, 0.36, 0.51}

\renewcommand{\d}{\textnormal{d}}

\newcommand\norm[1]{\left\lVert#1\right\rVert}

\renewcommand{\div}{\textnormal{div}}
\newcommand{\Id}{\textnormal{Id}}

\newcommand{\mmds}[2]{\textnormal{MMD}^2 \left(#1,#2\right)}
\newcommand{\R}{\mathbb{R}}
\newcommand{\bE}{\mathbb{E}}

\newcommand{\cF}{\mathcal{F}}
\newcommand{\cG}{\mathcal{G}}
\newcommand{\cL}{\mathcal{L}}

\newcommand{\cP}{\mathcal{P}_2}
\newcommand{\fS}{\mathfrak{S}}
\newcommand{\N}{\mathbb{N}}
\newcommand{\X}{\mathbf{X}}
\newcommand{\Z}{\mathbf{Z}}
\newcommand{\Y}{\mathbf{Y}}
\newcommand{\D}{\mathrm{D}}

\newcommand{\gW}{\nabla_W}

\DeclareMathOperator*{\argmin}{arg\,min}

\newtheorem{theoremcounter}{GlobalCounter}

\newtheorem{theorem}[theoremcounter]{Theorem}
\newtheorem{appendixtheorem}{Theorem}[section]

\newtheorem{appendixlemma}[appendixtheorem]{Lemma}

\newtheorem{corollary}[theoremcounter]{Corollary}

\newtheorem{proposition}[theoremcounter]{Proposition}
\newtheorem{appendixproposition}[appendixtheorem]{Proposition}

\newtheorem{appendixassumption}[appendixtheorem]{Assumption}

\theoremstyle{definition}

\newtheorem{definition}[theoremcounter]{Definition}
\newtheorem{appendixdefinition}[appendixtheorem]{Definition}

\newtheorem{appendixremark}[appendixtheorem]{Remark}

\usepackage[textwidth=0cm]{todonotes}

\definecolor{blush}{rgb}{0.87, 0.36, 0.51}

\begin{document}

%
\runningtitle{DDEQs: Distributional Deep Equilibrium Models through Wasserstein Gradient Flows}

%

\twocolumn[

\aistatstitle{DDEQs: Distributional Deep Equilibrium Models \\through Wasserstein Gradient Flows}

\aistatsauthor{ Jonathan Geuter \And Cl\'ement Bonet \And  Anna Korba \And David Alvarez-Melis }

\aistatsaddress{ Harvard University\And  CREST, ENSAE, IP Paris \And CREST, ENSAE, IP Paris \And Harvard University } ]

\begin{abstract}
    Deep Equilibrium Models (DEQs) are a class of implicit neural networks that solve for a fixed point of a neural network in their forward pass. Traditionally, DEQs take sequences as inputs, but have since been applied to a variety of data. In this work, we present Distributional Deep Equilibrium Models (DDEQs), extending DEQs to discrete measure inputs, such as sets or point clouds. We provide a theoretically grounded framework for DDEQs. Leveraging Wasserstein gradient flows, we show how the forward pass of the DEQ can be adapted to find fixed points of discrete measures under permutation-invariance, and derive adequate network architectures for DDEQs. In experiments, we show that they can compete with state-of-the-art models in tasks such as point cloud classification and point cloud completion, while being significantly more parameter-efficient.
\end{abstract}

\section{INTRODUCTION}

\looseness=-1 Implicit neural networks such as Deep Equilibrium Models (DEQs) \citep{deq} or Neural ODEs \citep{neuralodes} have recently emerged as a promising class of networks which, instead of explicitly computing outputs of a neural network, find equilibria of certain dynamics implicitly. They simulate ``infinitely deep'' networks with adaptive depth, while maintaining a constant memory footprint thanks to their implicit nature.

DEQs solve for the equilibrium point of a model in their forward pass, and in the backward pass, Jacobian-based linear fixed-point equations \citep{deq} or \textit{Phantom Gradients} \citep{phantomgradient} are used to approximate the gradient, without the need to backpropagate through a large number of stacked layers, which would be prohibitively expensive. Originally, DEQs were introduced for sequence-to-sequence tasks such as language modelling \citep{deq}, since the implicit nature of the network mandates the outputs to be of the same shape as the inputs. They have since been applied to tasks such as computer vision \citep{mdeqs} leveraging multi-scale network outputs, and exhibit competitive performance with state-of-the-art explicit models.

\begin{figure}[t]
\centering

\newcommand{\sh}{1}
\newcommand{\mh}{2}
\newcommand{\lh}{3}
\newcommand{\meh}{0.7}
\newcommand{\rh}{-6}
\definecolor{myyellow}{rgb}{1.0, 1.0, 0.6}
\definecolor{mygreen}{rgb}{.0, 1.0, .0}
\definecolor{encodercolor}{rgb}{.7, .6, .4}
\definecolor{meanpoolcolor}{rgb}{1.3,1,.4}

\newcommand{\trapezoid}[5][]{ 
\begin{scope}[#1]
  \draw[fill=blue!70!green!30, draw=black, line width=1pt] 
    (0, -#2/2 + #4) --           
    (1, -#3/2 + #5) --          
    (1, #3/2 - #5) --         
    (0, #2/2 - #4) --          
    cycle;              
  \end{scope}
}

\newcommand{\invertible}[6][]{ 
\begin{scope}[#1]
  \draw[fill=lime!40, draw=black, line width=1pt] 
    (0, -#2/2 + #4) --           
    (1, -#3/2 + #5) --          
    (1, #3/2 - #5) --         
    (0, #2/2 - #4) --          
    cycle;              
    \node at (0.5, -0.1) {#6};
  \end{scope}
}

\newcommand{\inputbox}[3][]{
\begin{scope}[#1]
    \draw[fill=olive!10, line width=1pt, rounded corners=2.5pt]
    (0, -#3/2) rectangle (1, #3/2);
    \node at (0.5, 0) {#2};
\end{scope}
}

\newcommand{\tinyinputbox}[3][]{
\begin{scope}[#1]
    \draw[fill=red!20, line width=1pt, rounded corners=2.5pt]
    (0, -#3/2) rectangle (0.2, #3/2);
    \node at (0.8, 0) {#2};
\end{scope}
}

\newcommand{\meanpool}[1][]{
\begin{scope}[#1]
    \draw[fill=pink!30,  line width=1pt, rounded corners=2.5pt]
    (0, -\lh/2) rectangle (0.5, \lh/2);
\end{scope}
}

\newcommand{\ddeq}[1][]{
\begin{scope}[#1]
    \draw[fill=teal!30,  line width=1pt, rounded corners=2.5pt]
    (0, -\lh/2) rectangle (2,\lh/2);
    \node at (1, 0) {\small{\textbf{DDEQ}}};
\end{scope}
}

\newcommand{\yellowbox}[2][]{
\begin{scope}[#1]
    \draw[fill=myyellow!80, line width=1pt, rounded corners=2.5pt]
    (0, -#2/2) rectangle (0.5, #2/2);
\end{scope}
}

\begin{tikzpicture}[scale=0.5]
    \sffamily
    \tikzset{
        myarrow/.style={
            -{Latex[length=2.5mm]},  
            thick,                  
            line width=1pt
        }
    }

    \inputbox[shift={(2, \rh-0.5)}]{$\Z$}{\lh}
    \inputbox[shift={(-2, \rh-0.5)}]{$\tilde{\X}$}{\sh}
    \draw[myarrow] (1,\rh-0.5)--(2,\rh-0.5);
    \draw[myarrow] (3,\rh-0.5)--(4,\rh-0.5);
    \draw[-, thick, line width=1pt] (-1,\rh-3.5)--(5,\rh-3.5);
    \draw[myarrow] (5,\rh-3.5)--(5,\rh-2);
    \draw[myarrow] (-1.5,\rh-3.5)--(-1.5,\rh-1);
    \draw[myarrow] (-1,\rh-0.5) -- (0,\rh-0.5);
    \inputbox[shift={(-2,\rh-3.5)}]{$\X$}{\sh} (x);
    \invertible[shift={(0,\rh-0.5)}]{1}{\lh}{0}{0.1}{$\mathbf{q}$}
    \ddeq[shift={(4,\rh-0.5)}];
    \draw[myarrow] (6,\rh-0.5)--(7,\rh-0.5);
    \inputbox[shift={(7,\rh-0.5)}]{$\Z^*$}{\lh}
    \invertible[shift={(7+2,\rh-0.5)}]{\lh}{1}{0.1}{0}{$\mathbf{q^{\scalebox{0.5}{-1}}}$};
    \inputbox[shift={(9.75+2, \rh-0.5)}]{$\Y^*$}{\sh}
    \draw[myarrow] (8,\rh-0.5)--(9,\rh-0.5);
    \draw[myarrow] (10,\rh-0.5) -- (11.75, \rh-0.5);
    
    \node at (-0.9, -12) [circle, draw, thick, line width=1pt, minimum size=1.1cm, path picture={
    \node at (path picture bounding box.center) {
            \includegraphics[width=1cm]{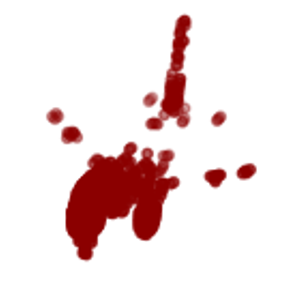}
        };
        \clip (path picture bounding box.center) circle(0.6cm); 
    }] (c1) {};
    \draw[dashed, -{Latex[length=1.5mm]}] (-1.5,\rh-4) to[out=250, in=120] ([shift=(120:1.1cm)]c1);

    \node at (3.25, -12) [circle, draw, thick, line width=1pt, minimum size=1.1cm, path picture={
    \node at (path picture bounding box.center) {
            \includegraphics[width=1cm]{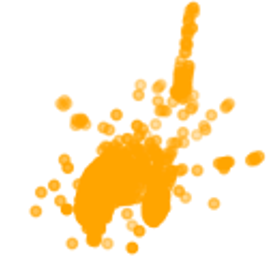}
        };
        \clip (path picture bounding box.center) circle(0.6cm); 
    }] (c2) {};
    \draw[dashed, -{Latex[length=1.5mm]}] (-1, \rh-0.7) to[out=320, in=120] ([shift=(120:1.1cm)]c2);

    \node at (7.5, -12) [circle, draw, thick, line width=1pt, minimum size=1.1cm, path picture={
    \node at (path picture bounding box.center) {
            \includegraphics[width=1cm]{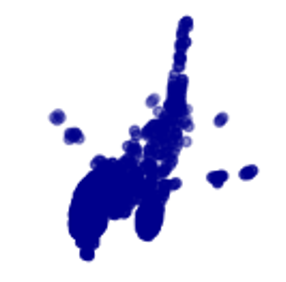}
        };
        \clip (path picture bounding box.center) circle(0.6cm); 
    }] (c3) {};
    \draw[dashed, -{Latex[length=1.5mm]}] (9.75+2.5, \rh-1) to[out=250, in=60] ([shift=(60:1.1cm)]c3);

    \node at (11.75, -12) [circle, draw, thick, line width=1pt, minimum size=1.1cm, path picture={
    \node at (path picture bounding box.center) {
            \includegraphics[width=1cm]{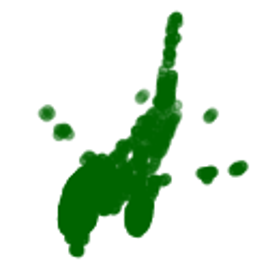}
        };
        \clip (path picture bounding box.center) circle(0.6cm); 
    }] (c4) {};
    \draw[dashed, -{Latex[length=1.5mm]}] (c3.east) to[out=0, in=180] (c4.west);
    \draw[-{Latex[length=1.5mm]}] (7.8+1.1,-12)--(7.5+1.1,-12);

\end{tikzpicture}
\caption{
\looseness=-1 Point cloud completion with DDEQs for an airplane. We add random particles to the partial input point cloud $\X$ (red) to create the DDEQ input $\tilde{\X}$ (orange), which we upscale by an invertible layer $q$. The DDEQ outputs a prediction $\Y^*$ (blue), which is compared against the target (green) with a MMD loss.
}
\label{fig:pipeline-completion}
\vspace{-1em}
\end{figure}

\looseness=-1 In this paper, we apply DEQs to data that comes in the shape of discrete probability measures seen as a set of particles.
Various types of data, such as sets, graphs, or point clouds, can be viewed through this distributional lens. It arises in areas such as autonomous driving, robotics, or augmented reality \citep{objectdetectionautonomous, robotics}, and often stems from sensors scanning their environments with discrete measurements. Various architectures for distributional data have been proposed, ranging from deep networks with simple feature transformations \citep{deepsets, qi2017pointnet, pointnet++} to attention based models \citep{settransformer, pointtransformer}. In this work, we present DDEQs, short for Distributional Deep Equilibrium Models, which process distributional data and feature very flexible architectures that can be applied to a variety of tasks. The main challenge in using DEQs on measure inputs lies in the fixed point solver: traditional solvers such as fixed point iteration, Newton's method, or Anderson acceleration \citep{deq} cannot produce fixed points under permutation invariance of the particles. 
We propose to minimize a discrepancy between probability distributions to find fixed points in the forward pass. To this end, we rely on Wasserstein gradient flows (WGFs) \citep{ambrosio2008gradient, santambrogio2017euclidean}, which are gradient flows of a functional in the space of probability measures.

 While many permutation-equivariant measure-to-measure architectures exist \citep{deepsets, qi2017pointnet, settransformer}, DDEQs are, to our knowledge, the first class of measure-to-measure neural networks whose forward pass is agnostic to permutations, which is an important step towards properly treating inputs as measures.

\looseness=-1 We provide a theoretically grounded framework for DDEQs, proving that they enjoy the same backward pass as traditional DEQs, deriving the Wasserstein gradient of the forward pass, and proving a convergence result for the average gradient of the optimization loss. In experiments on point cloud classification and point cloud completion, we show that DDEQs can compete with state-of-the-art architectures, and believe they can open the door to a new class of neural networks for point cloud processing.

\section{RELATED WORK}

\looseness=-1 \textbf{Deep Equilibrium Models.} 
Since their introduction \citep{deq}, DEQs have been extended in several key directions. To improve representational capabilities, multiscale DEQs \citep{mdeqs} introduce hidden states at different resolutions. Jacobian regularization \citep{bai2021stabilizingequilibriummodelsjacobian} and Lyapunov-stable DEQs were shown to stabilize training. One of the main drawbacks of DEQs, their slow training, is addressed by \textit{Phantom Gradients} \citep{phantomgradient}, an approximation of the true gradient direction, which we also utilize in training DDEQs. Existence guarantees for fixed points in DEQs, as well as convergence guarantees, are difficult to obtain in theory however, and typically involve imposing restrictions on the model weights or activations \citep{winston2021monotoneoperatorequilibriumnetworks,ghaoui2020implicitdeeplearning, gabor2024positiveconcavedeepequilibrium, ling2024deepequilibriummodelsequivalent}. These works also study the uniqueness of the DEQ fixed point under the assumption that the model takes the form $f(z,x)=\phi(Wz+Ux+b)$ for weights $W$, $U$, and $b$. This setting only covers single-layer affine networks with fixed-size inputs; in particular, it does not include attention-based networks like transformers as we are using in this work. This goes to show that studying questions of uniqueness and existence of fixed points for DEQs is challenging, but establishing results for attention-based networks is beyond the scope of this work.

\textbf{Bilevel Optimization.} DEQ training can be seen as a particular instance of a \textit{bilevel optimization} problem  \citep{bileveloptimization}, where the goal is to minimize an outer objective
\begin{equation*}
    \mathcal{L}(\theta)=f(\theta,y^*_\theta)
\end{equation*}
over parameters $\theta\in\R^w$, with the inner problem
\begin{equation*}
    y^*_\theta=\argmin_{y\in\R^p}\ g(\theta,y).
\end{equation*}
Typically, $\theta$ are the parameters of a neural network, and the inner problem corresponds to an empirical loss. The gradient of the outer problem w.r.t. the parameters $\theta$ can be computed implicitly with the inverse function theorem \citep{deq,ye2024enhancinghypergradientsestimationstudy}, whereas the inner loop is often solved iteratively with root solving algorithms \citep{deq}. However, errors in the inner loop can propagate to the outer loop, leading to inexact gradients \citep{ye2024enhancinghypergradientsestimationstudy}. Various approaches to reduce the inner loop error have been proposed in the literature, such as sparse fixed point corrections \citep{bai2022deepequilibriumopticalflow}, warm starting \citep{bai2022deepequilibriumopticalflow, geuter2025universalneuraloptimaltransport,thornton2023rethinkinginitializationsinkhornalgorithm, amos2023metaoptimaltransport}, and preconditioning or reparametrizing the inner problem \citep{ye2024enhancinghypergradientsestimationstudy}. Other studies have investigated the role of the number of iterations for the inner loop at train versus inference time \citep{ramzi2023testliketrainimplicit}.
Recently, it was also proposed to jointly solve the inner and outer problem in a single loop for some bilevel optimization problems \citep{dagreou2022framework,implicit}. We leave studying the effect of these strategies on DDEQs for future research.

\looseness=-1 \textbf{Point Cloud Networks.}
Neural architectures acting on point cloud inputs have seen significant advancements in recent years, and been applied to a multitude of tasks. For classification, PointNet \citep{qi2017pointnet} was groundbreaking in directly processing unordered point clouds using shared MLPs and a max-pooling operation, while PointNet++ \citep{pointnet++} improved upon this by capturing local structures using hierarchical feature learning. Set Transformers \citep{settransformer} introduced attention mechanisms to improve permutation invariance, and Point Transformers \citep{pointtransformer} further leveraged attention for capturing both local and global features, achieving state-of-the-art performance. Notably, the attention layer in Point Transformers uses vector attention instead of scalar attention (i.e. where attention scores are vector-valued), and features positional encodings which learn distances between input particles. While our architecture also builds on attention networks (see Section \ref{sec:architecture}), we use scalar attention without positional encodings, as we did not find the attention network from Point Transformers to work particularly well for DDEQs.

For point cloud completion \citep{pointcloudcompletion}, several approaches have emerged. AtlasNet \citep{groueix2018atlasnetpapiermacheapproachlearning} introduced a method that learns to map 2D parametrizations to 3D surfaces to reconstruct complete shapes, while PCN \citep{yuan2019pcnpointcompletionnetwork} features a coarse-to-fine reconstruction strategy. PMP-Net \citep{wen2021pmpnetpointcloudcompletion} learns paths between particles in the input and the target, and moves input particles along these paths. However, none of these methods exactly preserve the input point cloud as part of the output. Instead, they predict a new point set that approximates the complete shape. 
Preserving inputs can be important in applications where exact input fidelity is required. We show that DDEQs are architecturally well-suited to predict point cloud completions that exactly preserve inputs.

\looseness=-1 \textbf{Wasserstein Gradient Flows.} WGFs offer an elegant way to minimize functionals with respect to probability distributions, making them an increasingly popular choice for various applications ranging  from generative modeling \citep{fan2022variational, choi2024scalable} and sampling \citep{wibisono2018sampling,lambert2022variational,huix2024theoretical} to reinforcement learning \citep{ziesche2024wasserstein} and optimizing machine learning algorithms such as gradient boosting \citep{matsubara2024wasserstein}. With suitable discretization of the flows, various objectives for minimization have been studied, including the Kullback-Leibler divergence \citep{wibisono2018sampling}, the Maximum Mean Discrepancy (MMD) \citep{mmdflow, altekruger2023neural, rieszkernels} and their variants \citep{glaser2021kale,chen2024regularized, neumayer2024wasserstein, chazal2024statistical}, Sliced-Wasserstein distances \citep{liutkus2019sliced, chao2023nonparametric, bonet2024sliced} or Sinkhorn divergences \citep{carlier2024displacement,zhu2024neural}.

\section{DDEQs}
\textbf{Notation.} Denote by $\cP(\mathbb{R}^d)$ the space of probability measures on $\mathbb{R}^d$ with finite second moments; for $\mu\in\cP(\mathbb{R}^p)$ and a function $F:\mathbb{R}^p\to\mathbb{R}^p$, denote by $F_\#\mu\in\cP(\mathbb{R}^p)$ the push-forward measure of $\mu$ by $F$, and for $\cG:\cP(\R^p)\to\R$ denote by $\nabla_W\cG(\mu):\R^d\to\R^p$ the Wasserstein gradient of $\cG$ at $\mu$, if it exists. $\frac{\delta\cG}{\delta\mu}:\R^d\to\R$ denotes the first variation of $\cG$. For any $\mu\in\cP(\R^d)$, we denote by $L^2(\mu)$ the space of functions $f:\R^d\to \R^d$ such that $\|f\|_{L^2(\mu)}^2 := \int \|f\|^2\mathrm{d}\mu <\infty$ and by $\mathrm{Id}\in L^2(\mu)$ the identity map.
Background on measure theory, optimal transport, and Wasserstein gradient flows, as well as all missing proofs, can be found in Appendix~\ref{sec:proofs}.

\subsection{Deep Equilibrium Models}

Given an input data $x\in \R^d$, a DEQ initializes a latent variable $z$ and aims to find  a fixed point $z^*\in \R^p$ of the function $F_\theta(\cdot,x)$, where $F_\theta$ is a neural network parameterized by weights $\theta\in\mathbb{R}^w$. 
Adding the latent variable $z$ instead of directly operating on $x$ has several advantages, such as parametrizing a unique fixed point function $F_\theta(\cdot,x)$ for each input $x$ instead of using the same function $F_\theta(\cdot)$ for all inputs, which improves the richness of the fixed points, and the dimension of the latent $z$ can be chosen arbitrarily, increasing the flexibility and complexity of the model.
One way to find fixed points would be via function iterations:
\begin{equation*}
    z^{l+1}=F_\theta(z^l,x),\quad l=0,1,...,L-1,
\end{equation*}
for a large $L$. This sequence could be backpropagated through to find gradient directions; however, this would be prohibitively expensive. Hence, root solvers such as Newton's or Anderson's method are used to find roots of the function $g_\theta(z)=F_\theta(z,x)-z$.
Then, the gradient for the backward pass can be computed through the inverse function theorem \citep{blondel2024elements}. Suppose the fixed point $z^*$ is passed through a non-parametric function $h$, and a loss $l$ between $h(z^*)$ and the target $y$ is computed as $l(h(z^*),y)$. Then the following formula holds \citep{deq}:

\begin{equation}\label{eq:deq-backprop}
        \frac{\d \ell}{\d \theta}=\frac{\partial \ell(h(z_\theta^*))}{\partial h}\frac{\partial h(z_\theta^*)}{\partial z_\theta^*}\left(I-\frac{\partial F_\theta(z_\theta^*,x)}{\partial z_\theta^*}\right)^{-1}\frac{\partial F_\theta(z_\theta^*,x)}{\partial \theta}.
\end{equation}
Importantly, this is \textit{independent} of how the fixed point was attained, hence any black-box root solver can be used. However, due to the inverse Jacobian, (\ref{eq:deq-backprop}) is costly to compute precisely, but efficient approximation methods have been developed \citep{deq, phantomgradient}.

\subsection{DDEQs as Bilevel Optimization}

We will now turn to distributional inputs and latent variables.   Let $\rho\in\cP(\mathbb{R}^d)$ be the input, $\mu\in{\cP}(\mathbb{R}^p)$ be the latent variable, and $F_\theta:{\cP}(\mathbb{R}^p)\times{\cP}(\mathbb{R}^d)\to{\cP}(\mathbb{R}^p)$ be a function parameterized by some $\theta\in\mathbb{R}^w$. Let $\mathcal{Y}$ be the target space (which can be Euclidean or itself be a probability measure space), i.e., our dataset consists of samples $(\rho,y)\in\cP(\mathbb{R}^d)\times\mathcal{Y}$. Depending on the task, we might further have to process a fixed point $\mu^*_{\theta,\rho}$ of $F_\theta(\cdot,\rho)$, hence we let $h_\theta: \cP(\mathbb{R}^p)\to\mathcal{Y}$, which maps latent fixed points $\mu^*_{\theta,\rho}$ to predictions such as class labels, also be parametrized by $\theta$. Furthermore, let $\ell:\mathcal{Y}\times\mathcal{Y}\to\mathbb{R}$ be a loss function and $\mathrm{D}:\cP(\mathbb{R}^p)\times\cP(\mathbb{R}^p)\to\mathbb{R}$ be a distance or divergence between probability distributions. We can write, for any input data $\rho \in \cP(\R^d)$ and $\theta\in \R^w$,
\begin{equation}\label{eq:inner_obj}
    \mu^*_{\theta,\rho} = \argmin_{\mu \in \cP(\R^p)}\ \mathrm{D}\big(\mu,F_{\theta}(\mu,\rho)\big):=\cG_{\theta,\rho}(\mu),
\end{equation}
since the optimizers of the functional $\cG_{\theta,\rho}$ are precisely the fixed points of $F_\theta(\cdot,\rho)$. Note that existence of fixed points in DEQs is difficult to prove, and typically involves imposing restrictions on the model weights \citep{winston2021monotoneoperatorequilibriumnetworks, gabor2024positiveconcavedeepequilibrium}. We leave deriving such guarantees for DDEQs for future work.

A natural optimization scheme for minimizing the loss
\begin{equation}\label{eq:outerloop}
    \mathcal{L}(\theta) := \mathbb{E}_{\rho,y\sim P_{\text{data}}}\left[\ell\big(h_\theta (\mu^*_{\theta,\rho}),y\big)\right]
\end{equation}
with respect to $\theta\in \R^w$ is gradient descent, i.e. for all $k\ge 0$, $\theta^{k+1} = \theta^k - \gamma \nabla \mathcal{L}(\theta^k)$.
However, since
\begin{equation*}
    \nabla \cL(\theta) =\bE_{\rho, y\sim P_{\text{data}}}\left[ \nabla_\theta \ell\big(h_\theta(\mu^*_{\theta,\rho}),y\big)\right], 
\end{equation*}
we need to solve the following two problems: \emph{i)} find a fixed point $\mu^*_{\theta,\rho}$, given $\theta$ and $\rho$ (\textit{inner loop}) and \emph{ii)} compute $\nabla_\theta \ell\big(h_\theta(\mu^*_{\theta,\rho}\big), y)$, given $\mu^*_{\theta,\rho}$ and $y$. 
This will enable us to solve (\ref{eq:outerloop}) (\textit{outer loop}).
Hence, minimizing $\cL$ is a typical instance of a bilevel optimization problem. 
In section \ref{sec:inner}, we will see how to solve the inner problem \eqref{eq:inner_obj}, and in section \ref{sec:outer}, we will investigate computing the gradient $\nabla_\theta \ell(h_\theta(\mu^*_{\theta,\rho}), y)$ and solving the outer problem.

\subsection{Inner Optimization}\label{sec:inner}

In this section, we focus on the inner problem (\ref{eq:inner_obj}). Let $\cG_{\theta, \rho}(\mu) :=\mathrm{D}\big(\mu, F_{\theta}(\mu,\rho)\big)$ for an input $\rho \in \cP(\R^d)$ and parameters $\theta\in\R^w$ (we will sometimes drop the dependency on $\theta$ and $\rho$ for ease of reading).
In the following, we assume for simplicity that we can write $F_\theta(\mu,\rho)={F_\theta}_\#\mu$, i.e., as some push-forward (dropping the dependency on a fixed $\rho$ for ease of notation)\footnote{This is also a slight abuse of notation, as $F_\theta$ in ${F_\theta}_\#\mu$ is now a map from $\R^p$ to $\R^p$.}. Note that this is not possible in general, as the push-forward itself might depend on $\mu$ (i.e., $F_\theta(\mu,\rho)={F_\theta}(\mu)_\#\mu$, as used in \citep{furuya2024transformers,castin2024smooth}). However, it is a reasonable assumption in our setting (for more details, see Appendix~\ref{sec:pushforward}).
Since (\ref{eq:inner_obj}) is an objective over probability measures, the natural dynamic to minimize it is following its Wasserstein gradient flow $t\mapsto \mu_t$, which solves the following continuity equation:
\begin{align}\label{eq:continuity}
    \partial_t\mu_t=&\div\big(\mu_t\nabla_W\cG_{\theta,\rho}(\mu_t)\big),
\end{align}
where $\nabla_W \cG_{\theta,\rho}(\mu)$ denotes the Wasserstein gradient of $\cG_{\theta,\rho}$ at $\mu$. Its existence, of course, depends on the nature of $\cG$, i.e., the choice of $\mathrm{D}$. In practice, we use $\mathrm{D}=\frac{1}{2}\text{MMD}^2$, the squared maximum mean discrepancy \citep{gretton2012kernel}, which is defined as 
\begin{equation*}
    \mmds{\mu}{\nu} = \iint k(x,y)\ \mathrm{d}(\mu-\nu)(x) \mathrm{d}(\mu-\nu)(y),
\end{equation*}
for a symmetric positive definite kernel $k:\R^p\times\R^p\to \R$. Hence, our inner optimization loss takes the form
\begin{equation*}
    \cG_{\theta, \rho}(\mu)=\frac{1}{2}\mathrm{MMD}^2\big(\mu, F_\theta(\mu,\rho)\big).
\end{equation*}
The MMD is comparably fast to compute, with a time complexity of $\mathcal{O}(n^2)$ where $n$ is the number of particles (when $\mu,\nu$ are discrete and both supported on $n$ particles), whereas e.g. the time complexity of the Sinkhorn algorithm \citep{cuturi2013sinkhorndistanceslightspeedcomputation} is $\mathcal{O}(n^2\log(n)/\epsilon^2)$ \citep{sinkhornconvergence}, where $\epsilon$ is the regularization parameter and typically fairly small.
Furthermore, the MMD is well defined between empirical distributions with different support (unlike, for example, the KL Divergence), and a Wasserstein gradient of the MMD to a fixed target measure exists and can be evaluated in closed form in quadratic time \citep{mmdflow} for smooth kernels.
We will now show that this result also extends to the setting where the target is a push-forward of the source measure.
To this end, we first derive the Wasserstein gradient of a functional of the form $\mu\mapsto \cF(T_\#\mu)$ for a $\mu$-almost everywhere (a.e.) differentiable push-forward operator $T:\R^p\to \R^p$.
\begin{proposition}
    Let $\mu\in\cP(\R^p)$, $\cF:\cP(\R^p)\to\R$, $T:\R^p\to\R^p\in L^2(\mu)$ a $\mu$-a.e. differentiable map and define $\tilde{\cF}(\mu):=\cF(T_\#\mu)$. Assume $\sup_x\ \|\nabla T(x)\|_{\mathrm{op}} < +\infty$.
    If the Wasserstein gradient of $\cF$ at $T_\#\mu$ exists, then the Wasserstein gradient of $\tilde{\cF}$ at $\mu$ also exists, and it holds:
    \begin{equation*}
        \gW\tilde{\cF}(\mu) = \nabla T\big(\gW\cF(T_\#\mu)\circ T\big) = \nabla \left(\frac{\delta\cF}{\delta\mu}(T_\#\mu)\circ T\right).
    \end{equation*}
\end{proposition}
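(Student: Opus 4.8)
The plan is to reduce the statement to the ordinary chain rule: first identify the first variation of $\tilde\cF$, then differentiate it, using throughout the identification $\gW\cG(\mu)=\nabla\frac{\delta\cG}{\delta\mu}(\mu)$ between the Wasserstein gradient and (the gradient of) the first variation, as recalled in Appendix~\ref{sec:proofs}.

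First I would compute $\frac{\delta\tilde\cF}{\delta\mu}(\mu)$. For any signed measure $\chi$ on $\R^p$ with $\int\d\chi=0$, linearity of the push-forward gives $T_\#(\mu+\epsilon\chi)=T_\#\mu+\epsilon\,T_\#\chi$; since the first variation of $\cF$ at $T_\#\mu$ exists (implicit in the hypothesis that $\gW\cF(T_\#\mu)$ exists), the change-of-variables formula $\int g\,\d(T_\#\chi)=\int(g\circ T)\,\d\chi$ yields
\begin{equation*}
  \frac{\d}{\d\epsilon}\Big|_{\epsilon=0}\tilde\cF(\mu+\epsilon\chi)
  =\int\frac{\delta\cF}{\delta\mu}(T_\#\mu)\,\d(T_\#\chi)
  =\int\Big(\tfrac{\delta\cF}{\delta\mu}(T_\#\mu)\circ T\Big)\,\d\chi .
\end{equation*}
Hence $\frac{\delta\tilde\cF}{\delta\mu}(\mu)=\frac{\delta\cF}{\delta\mu}(T_\#\mu)\circ T$ up to an additive constant, which is irrelevant for its gradient.

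Next, write $\psi:=\frac{\delta\cF}{\delta\mu}(T_\#\mu):\R^p\to\R$. As $\gW\cF(T_\#\mu)=\nabla\psi$ exists, $\psi$ is differentiable on a set of full $T_\#\mu$-measure, into which $T$ sends $\mu$-a.e.\ $x$; together with $\mu$-a.e.\ differentiability of $T$, the composition $\psi\circ T$ is differentiable $\mu$-a.e.\ with $\nabla(\psi\circ T)(x)=\nabla T(x)^{\top}\nabla\psi(T(x))=\nabla T(x)^{\top}\big(\gW\cF(T_\#\mu)\circ T\big)(x)$, which is exactly $\nabla T\big(\gW\cF(T_\#\mu)\circ T\big)$ in the notation of the statement. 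Combining this with the first step and $\gW\tilde\cF(\mu)=\nabla\frac{\delta\tilde\cF}{\delta\mu}(\mu)$ gives the formula; it remains only to verify the right-hand side belongs to $L^2(\mu)$, so that it is an admissible Wasserstein gradient, and this is where $\sup_x\|\nabla T(x)\|_{\mathrm{op}}<\infty$ is used, via
\begin{equation*}
  \big\|\nabla T^{\top}(\gW\cF(T_\#\mu)\circ T)\big\|_{L^2(\mu)}^{2}
  \le\big(\textstyle\sup_x\|\nabla T(x)\|_{\mathrm{op}}\big)^{2}\!\int\!\big\|\gW\cF(T_\#\mu)(T(x))\big\|^{2}\d\mu(x)
  =\big(\textstyle\sup_x\|\nabla T(x)\|_{\mathrm{op}}\big)^{2}\big\|\gW\cF(T_\#\mu)\big\|_{L^2(T_\#\mu)}^{2}<\infty .
\end{equation*}
Alternatively, one may argue directly from the perturbation definition of the Wasserstein gradient: set $S_t:=T\circ(\mathrm{Id}+t\xi)$ for a test field $\xi$, note $\tfrac{\d}{\d t}\big|_{t=0}S_t=\nabla T\,\xi$ $\mu$-a.e., differentiate $t\mapsto\tilde\cF((\mathrm{Id}+t\xi)_\#\mu)=\cF((S_t)_\#\mu)$ at $t=0$, and read off $\langle\gW\tilde\cF(\mu),\xi\rangle_{L^2(\mu)}=\int\nabla T(x)^{\top}\nabla\psi(T(x))\cdot\xi(x)\,\d\mu(x)$ for all $\xi$; the uniform bound on $\|\nabla T\|_{\mathrm{op}}$ then supplies the domination needed to differentiate under the integral at $t=0$.

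I expect the only genuinely delicate points to be the bookkeeping between the two equivalent notions of Wasserstein gradient (subdifferential element vs.\ $\nabla$ of the first variation) and the $L^2(\mu)$-membership of the candidate; the geometric content is just the chain rule, and the uniform operator-norm bound on $\nabla T$ is precisely the hypothesis that makes the candidate a bona fide Wasserstein gradient (and, in the alternative route, legitimizes the differentiation under the integral).
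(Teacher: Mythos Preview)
Your argument via the first variation is clean and arrives at the right formula, but it takes a genuinely different route from the paper's proof and, as written, leans on an identification that the paper only records under stronger hypotheses than the proposition assumes.

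The paper does not go through the first variation. It works directly with the sub/super-differential definition of Wasserstein differentiability (Definition~\ref{def:wasserstein_gradient}): for arbitrary $\nu\in\cP(\R^p)$ and an optimal coupling $\gamma\in\Pi_o(\mu,\nu)$, it pushes $\gamma$ forward by $(T,T)$ to obtain a (generally non-optimal) coupling $\tilde\gamma\in\Pi(T_\#\mu,T_\#\nu)$, invokes Proposition~\ref{prop:strong_diff_w}---which says the Taylor-type expansion for $\cF$ holds along \emph{any} coupling, not just optimal ones---then Taylor-expands $T(y)-T(x)=\nabla T(x)^\top(y-x)+O(\|y-x\|^2)$, and uses the uniform bound on $\|\nabla T\|_{\mathrm{op}}$ to absorb both the quadratic remainder and the suboptimal transport cost $\int\|T(x)-T(y)\|^2\,\d\gamma$ into $o(W_2(\mu,\nu))$. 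This directly verifies the defining expansion for $\gW\tilde\cF(\mu)$ without ever invoking a first variation.

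Your route instead relies on the identification $\gW\cG(\mu)=\nabla\frac{\delta\cG}{\delta\mu}(\mu)$. In the appendix this equality is only stated (the proposition citing Lemma~10.4.1 of Ambrosio--Gigli--Savar\'e) under the assumption that $\mu$ is absolutely continuous with $C^1$ density, whereas the present proposition places no such assumption on $\mu$---and the paper's intended application is to discrete measures. So invoking that identification in both directions (once to pass from $\gW\cF(T_\#\mu)$ to $\nabla\psi$, once to pass from $\nabla(\psi\circ T)$ back to $\gW\tilde\cF(\mu)$) is not justified by what the appendix provides in this generality. Your alternative perturbation argument with $S_t=T\circ(\mathrm{Id}+t\xi)$ is much closer in spirit to the paper's approach; upgrading it from push-forward curves $(\mathrm{Id}+t\xi)_\#\mu$ to arbitrary $\nu$ via couplings, and controlling the remainder with Proposition~\ref{prop:strong_diff_w}, is precisely what the paper does.
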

This allows us to derive the Wasserstein gradient for our MMD objective from the chain rule.
\begin{corollary}
    Let $\cG(\mu):=\frac{1}{2}\mmds{\mu}{T_\#\mu}$. Define the witness function $f_\mu$ as
    \begin{equation*}
        f_\mu:=\int k(\cdot,y)\d\mu(y)-\int k(\cdot,y)\d (T_\#\mu)(y),
    \end{equation*}
    where $k:\R^p\times\R^p\to\R$ is the kernel of the MMD, which we assume smooth. Then the Wasserstein gradient of $\cG$ at $\mu$ exists, and is equal to
    \begin{equation}\label{eq:wassersteingradient}
        \nabla_W\cG(\mu) = \nabla f_\mu-\nabla(f_\mu\circ T).
    \end{equation}
\end{corollary}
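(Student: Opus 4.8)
The plan is to split $\cG$ into three functionals whose Wasserstein gradients are known or follow from the preceding Proposition, and then add them using linearity of the first variation. Writing $\langle\alpha,\beta\rangle_k:=\iint k\,\d\alpha\,\d\beta$ and $\mathcal{E}(\nu):=\tfrac12\langle\nu,\nu\rangle_k$, expanding the squared MMD gives
\begin{equation*}
  \cG(\mu)=\mathcal{E}(\mu)-\langle\mu,T_\#\mu\rangle_k+\mathcal{E}(T_\#\mu)=:\cG_1(\mu)-\cG_2(\mu)+\cG_3(\mu).
\end{equation*}
Throughout set $g_\mu:=\int k(\cdot,y)\,\d\mu(y)$ and $h_\mu:=\int k(\cdot,y)\,\d(T_\#\mu)(y)$, so that $f_\mu=g_\mu-h_\mu$, and note the push-forward identities $g_\mu\circ T=\int k(T(\cdot),y)\,\d\mu(y)$ and $h_\mu\circ T=\int k(T(\cdot),y)\,\d(T_\#\mu)(y)$.

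For the two energy terms: for smooth $k$ the first variation of $\mathcal{E}$ at $\nu$ is $x\mapsto\int k(x,y)\,\d\nu(y)$ (using symmetry of $k$), so $\gW\mathcal{E}(\nu)=\nabla(\int k(\cdot,y)\,\d\nu(y))$, which recovers the known formula \citep{mmdflow} for the Wasserstein gradient of the MMD to a fixed target; in particular $\gW\cG_1(\mu)=\nabla g_\mu$. For $\cG_3(\mu)=\mathcal{E}(T_\#\mu)$ I apply the Proposition with $\cF=\mathcal{E}$: its hypotheses hold since $T\in L^2(\mu)$ is $\mu$-a.e.\ differentiable with $\sup_x\norm{\nabla T(x)}_{\mathrm{op}}<\infty$ by assumption and $\gW\mathcal{E}$ exists at $T_\#\mu$ by the previous line, so $\gW\cG_3(\mu)=\nabla\big(\tfrac{\delta\mathcal{E}}{\delta\mu}(T_\#\mu)\circ T\big)=\nabla(h_\mu\circ T)$. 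For the bilinear term, $\cG_2(\mu)=\iint k(x,T(y))\,\d\mu(x)\,\d\mu(y)$ is a (non-symmetric) interaction energy; perturbing $\mu$ by a zero-mass signed measure shows its first variation at $\mu$ is $x\mapsto\int k(x,T(y))\,\d\mu(y)+\int k(T(x),y)\,\d\mu(y)=h_\mu(x)+g_\mu(T(x))$, where symmetry of $k$ is used in the second term, hence $\gW\cG_2(\mu)=\nabla h_\mu+\nabla(g_\mu\circ T)$ (equivalently, split $\cG_2$ into a fixed-target piece $\mu\mapsto\langle\mu,\sigma\rangle_k$ and an $\mathcal{E}\circ T_\#$-type piece and invoke the Proposition again).

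Recombining by linearity of first variations,
\begin{equation*}
  \gW\cG(\mu)=\nabla g_\mu-\nabla h_\mu-\nabla(g_\mu\circ T)+\nabla(h_\mu\circ T)=\nabla(g_\mu-h_\mu)-\nabla\big((g_\mu-h_\mu)\circ T\big)=\nabla f_\mu-\nabla(f_\mu\circ T),
\end{equation*}
and existence of $\gW\cG(\mu)$ follows from existence of the three pieces. The algebra above is routine bookkeeping; the points that need care are (i) confirming that $\mathcal{E}$ and the bilinear energy are regular enough (smoothness and mild growth of $k$) that their Wasserstein gradients coincide with the gradients of their first variations, which is exactly where the smoothness assumption on $k$ enters, in the spirit of \citet{mmdflow}; and (ii) verifying that the Proposition applies verbatim to $\cF=\mathcal{E}$ (bounded $\norm{\nabla T}_{\mathrm{op}}$, existence of $\gW\mathcal{E}(T_\#\mu)$) and that the Wasserstein gradient is additive over the three-term decomposition, which is immediate at the level of first variations but should be stated explicitly. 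I expect (i) to be the main obstacle, since it is the only step where genuine analysis, rather than the chain rule, is required.
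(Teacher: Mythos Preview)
Your proof is correct; the decomposition and the use of the Proposition are sound, and the recombination into $\nabla f_\mu-\nabla(f_\mu\circ T)$ is clean. The one point worth tightening is your remark (ii): additivity of Wasserstein gradients holds because the first variation is linear and $\gW=\nabla\frac{\delta}{\delta\mu}$, so once each piece has a gradient the sum does too; this is immediate, not an obstacle.

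The paper's proof takes a genuinely different route. Rather than expanding the squared MMD into the three kernel integrals $\mathcal{E}(\mu)-\langle\mu,T_\#\mu\rangle_k+\mathcal{E}(T_\#\mu)$, it keeps the MMD intact and uses a \emph{slot-wise} decomposition: writing $\cG(\mu)=\cG_1(\mu)+\cG_2(\mu)$ with $\cG_1(\nu)=\tfrac12\mmds{\nu}{T_\#\mu}$ (second argument frozen at $T_\#\mu$) and $\cG_2(\nu)=\tfrac12\mmds{T_\#\nu}{\mu}$ (first argument frozen at $\mu$), then evaluating both gradients at $\nu=\mu$. For $\cG_1$ it quotes the known MMD-to-fixed-target gradient from \citet{mmdflow} wholesale, and for $\cG_2$ it applies the Proposition once with $\cF(\cdot)=\tfrac12\mmds{\cdot}{\mu}$. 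Your approach trades this two-term partial-derivative trick for a three-term additive split at the level of interaction energies, which has the advantage that additivity is literal (no frozen arguments, no bivariate chain rule to justify), and the bilinear cross term is handled by a direct first-variation computation rather than a second appeal to the MMD gradient formula. The paper's version is shorter because it reuses the packaged MMD result; yours is more self-contained and makes the provenance of each of the four terms $\nabla g_\mu,\nabla h_\mu,\nabla(g_\mu\circ T),\nabla(h_\mu\circ T)$ explicit.
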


Note that without convexity assumptions, we cannot hope to find global optimizers of (\ref{eq:inner_obj}), as
with most optimization problems. Yet, following (minus) the Wasserstein gradient (\ref{eq:wassersteingradient})
lets us find a local minimizer.
In practice, we approximate \eqref{eq:continuity} by discretizing it using the forward Euler scheme, which is commonly referred to as the Wasserstein gradient descent \citep{bonet2024mirror}, for a step-size $\eta>0$:
\begin{equation*}
    \mu_{k+1} = \big(\mathrm{Id} - \eta \gW \cG_{\theta, \rho}(\mu_k)\big)_\#\mu_k,
\end{equation*} 
for all $k\ge 0$, where $\mathrm{Id}$ is the identity map on $L^2(\mu_k)$.

\subsection{Outer Optimization}\label{sec:outer}
In this section, we derive a formula to compute the derivative of the loss $\ell:=\ell\big(h_\theta (\mu^*_{\theta,\rho}),y\big)$ w.r.t. $\theta$, hence of the outer loss $\mathcal{L}$ given in \eqref{eq:outerloop}.

\begin{theorem}\label{thm:gradient}\textbf{(Informal.)}
    Let $\mathcal{Y}=\R^n$ or $\mathcal{Y}=\cP(\R^y)$, $\rho\in\cP(\mathbb{R}^d)$ and $y\in\mathcal{Y}$ be fixed, and $\mu^*_\theta\in\cP(\mathbb{R}^p)$ be a fixed point of $F(\cdot,\theta):=F_\theta(\cdot,\rho)$. 
    Assume that $\left(\Id-D_{\mu^*_{\theta,\rho}}F(\mu^*_{\theta,\rho},\theta)\right)^{-1} \circ D_\theta F(\mu^*_{\theta,\rho},\theta):\R^p\to\R^w$ exists, where $\Id$ is the identity map in $\R^p$. Then under suitable differentiability assumptions it holds:
    \begin{multline*}
        \frac{\d \ell}{\d \theta} = D_h \ell(h(\mu^*_{\theta,\rho},\theta),y)\circ\Big[D_\theta h (\mu^*_{\theta,\rho},\theta) + \\ D_{\mu^*_{\theta,\rho}}h(\mu^*_{\theta,\rho},\theta)\circ\left(\Id-D_{\mu^*_{\theta,\rho}}F(\mu^*_{\theta,\rho},\theta)\right)^{-1} \circ D_\theta F(\mu^*_{\theta,\rho},\theta)    \Big].
    \end{multline*}
\end{theorem}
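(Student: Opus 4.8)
The plan is to reproduce the classical DEQ backward-pass derivation behind equation~\eqref{eq:deq-backprop}, but lifted to the space of probability measures: differentiate the fixed-point identity implicitly to obtain $D_\theta\mu^*_{\theta,\rho}$, then combine with the chain rule for $\ell\circ h$. Concretely I would proceed in three steps. First, establish that, under the stated differentiability and invertibility hypotheses, the map $\theta\mapsto\mu^*_{\theta,\rho}$ is differentiable near the point of interest, by applying an implicit function theorem to $\Phi(\mu,\theta):=\mu-F(\mu,\theta)$ at the fixed point $\mu^*_{\theta,\rho}$. The partial differential $D_\mu\Phi$ at that point is exactly $\Id-D_{\mu^*_{\theta,\rho}}F(\mu^*_{\theta,\rho},\theta)$, so its assumed invertibility is precisely the nondegeneracy condition needed to solve for $\mu$ locally as a function of $\theta$; since in our setting the relevant measures are empirical measures supported on a fixed number $N$ of particles, this step can be made concrete by parametrizing $\mu$ through its particle locations in $(\R^p)^N$ and invoking the finite-dimensional implicit function theorem.

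Second, differentiate the identity $\mu^*_{\theta,\rho}=F(\mu^*_{\theta,\rho},\theta)$ with respect to $\theta$. The chain rule gives
\[
D_\theta\mu^*_{\theta,\rho}=D_{\mu^*_{\theta,\rho}}F(\mu^*_{\theta,\rho},\theta)\circ D_\theta\mu^*_{\theta,\rho}+D_\theta F(\mu^*_{\theta,\rho},\theta),
\]
so that, collecting terms and applying the inverse from the first step,
\[
D_\theta\mu^*_{\theta,\rho}=\left(\Id-D_{\mu^*_{\theta,\rho}}F(\mu^*_{\theta,\rho},\theta)\right)^{-1}\circ D_\theta F(\mu^*_{\theta,\rho},\theta).
\]
Third, differentiate $\ell=\ell\big(h(\mu^*_{\theta,\rho},\theta),y\big)$, observing that $\theta$ enters both through the explicit argument of $h$ and through $\mu^*_{\theta,\rho}$; the chain rule yields
\[
\frac{\d\ell}{\d\theta}=D_h\ell\big(h(\mu^*_{\theta,\rho},\theta),y\big)\circ\big[D_\theta h(\mu^*_{\theta,\rho},\theta)+D_{\mu^*_{\theta,\rho}}h(\mu^*_{\theta,\rho},\theta)\circ D_\theta\mu^*_{\theta,\rho}\big],
\]
and substituting the formula for $D_\theta\mu^*_{\theta,\rho}$ from the second step gives the claimed expression.

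The main obstacle is not the algebra — it is identical to the Euclidean DEQ case — but making the differential calculus rigorous and pinning down what the "suitable differentiability assumptions" must be. One needs a single notion of differentiability (Fréchet differentiability in the particle parametrization, identified with Wasserstein/$L^2(\mu^*_{\theta,\rho})$ differentials) that is simultaneously compatible with $F_\theta$, $h_\theta$, $\ell$, and with composition against the inverse operator; one must check joint differentiability of $F$ and $h$ in $(\mu,\theta)$ together with continuity of these differentials so the implicit function theorem applies; and one must verify that $\big(\Id-D_{\mu^*_{\theta,\rho}}F\big)^{-1}$ is bounded so that all the composed operators above are well defined. In the empirical-measure regime the cleanest route is to do everything through the particle parametrization, where $D_\mu F$ is a genuine $Np\times Np$ Jacobian and the entire argument reduces to a finite-dimensional implicit function computation, and then translate the result back into the stated operator form by the identification of these Jacobians with operators on $L^2(\mu^*_{\theta,\rho})$.
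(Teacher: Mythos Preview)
Your proposal is correct, and the core algebra (steps two and three: differentiate the fixed-point identity, invert, then chain-rule through $\ell\circ h$) is identical to what the paper does. The difference lies in how rigor is supplied. The paper does not invoke a finite-dimensional implicit function theorem through a particle parametrization; instead it builds up an abstract differential calculus for maps between Wasserstein spaces and Euclidean spaces following \citep{lessel2020differentiablemapswassersteinspaces}, defining tangent couples and differentials via the continuity equation, and proving a chain rule (their Proposition~\ref{prop:chainrule}) that handles compositions $\R^w\to\cP(\R^p)\to\mathcal{Y}$. In the formal version of the theorem the paper then simply \emph{assumes} that the fixed-point map $\theta\mapsto\mu^*_{\theta,\rho}$ is differentiable, rather than deriving this from an IFT as you propose. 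Your particle-parametrization route is more elementary and concrete for the empirical-measure regime the paper actually works in, and gives you the existence of $D_\theta\mu^*_{\theta,\rho}$ for free from the finite-dimensional IFT; the paper's route is more general (valid for arbitrary measures, not just discrete ones with a fixed number of atoms) but at the cost of importing a heavier framework and leaving the differentiability of $\theta\mapsto\mu^*_{\theta,\rho}$ as a hypothesis. One small point the paper makes explicit that you leave implicit: since $\mu^*_{\theta,\rho}=F(\mu^*_{\theta,\rho},\theta)$, the differential $D_{\mu^*}F$ has matching domain and codomain tangent spaces, which is what makes $\Id-D_{\mu^*}F$ a well-posed endomorphism to invert.
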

\looseness=-1 The formal version and proof of Theorem \ref{thm:gradient} is deferred to Appendix \ref{sec:proof_th_implicit_grad} along with the definitions of the differential of $F$ relative to $\mu$, which we define following \citep{lessel2020differentiablemapswassersteinspaces}. Similar to Theorem 1 in \citep{deq}, Theorem \ref{thm:gradient} shows that the gradient of $\ell$ w.r.t. $\theta$ can be computed implicitly, without backpropagating through the inner loop. But as is commonly done, we do not use this formula directly in the backward pass, but instead use the phantom gradient and autodifferentiation.

By analyzing the following continuous bilevel problem, similarly to \citep{implicit},
\begin{equation*}
    \begin{cases}
        \forall \rho \in \cP(\R^d),\ \partial \mu_{t,\rho} = \div\big(\mu_{t,\rho} \gW \cG_{\theta_t,\rho}(\mu_{t,\rho})\big) \\
        \d\theta_t = -\varepsilon_t \nabla\mathcal{L}(\theta_t)\d t,
    \end{cases}    
\end{equation*}
where $\epsilon_t>0$ corresponds to the ratio of learning rates between
the inner and the outer problems, 
we prove a convergence result for the average of the outer optimization gradients in $\mathcal{O}(\log(T)^2/\sqrt{T})$ in Appendix \ref{appendix:loss_cv}, which holds under suitable regularity assumptions on $\mathrm{D}$ and $F_\theta$ and a Polyak-Łojasiewicz inequality. However, we note that these might not always hold for the specific choice of $\mathrm{D}$ (i.e., an MMD) and $F_\theta$ in our case; we refer the reader to the appendix for a more detailed discussion.

\subsection{Training Algorithm}

Our training procedure is described in Algorithm \ref{alg:setdeq}. It takes as input the number of iterations $K\in\N$ for the outer loop, the number of iterations $L\in\N$ of the inner loop, a (data) batch size $B\in\N$, inner loop learning rates $\eta_l$ for $l\in [L]$, and outer loop learning rates $\gamma_k$ for $k\in [K]$.
In practice, our latents $\mu$ will be discrete measures and have a finite number of particles, where the number of particles $J_\mu$ is itself a hyperparameter\footnote{For classification, this could be a fixed number, while for point cloud completion, it could be an estimate of the number of particles in the complete point cloud. More details can be found in section \ref{sec:exps}.}. We denote these particles by $\mu_{\theta^k,i}(j)$ for $j=1,\dots,J_{\mu_{\theta^k,i}}$, where $\mu_{\theta^k,i}$ is the latent corresponding to an input sample $\rho_i$.
The initialization of $\mu_{\theta^k,i}$ as well as the dimension $p$ of particles in the latents $\mu_{\theta^k,i}$ are chosen depending on the task, see Section \ref{sec:exps}.
To speed up computations, we pad all samples in the batch with zeros (to align the number of particles) and compute the inner loop over $l$ for all $B$ inputs in parallel by leveraging adequate masking.
We use the Riesz kernel
 $   k(x,y) = -\norm{x-y}$
for the MMD, as it has been shown to have better convergence than Gaussian kernels \citep{rieszkernels, hagemann2024posteriorsamplingbasedgradient}. Note this is not a positive definite kernel (neither a smooth one), but the resulting distance (the energy distance)  is equivalent to an MMD \citep{sejdinovic2013equivalence}.
Further details on the implementation can be found in Appendix~\ref{sec:training_details}.

\begin{algorithm}[h]
\caption{DDEQ Training Procedure}
\label{alg:setdeq}
\begin{algorithmic}[1]
    \STATE Initialize $\theta^0\in\R^w$
    \STATE Input $ K,L,B\in\mathbb{N}$, $\eta^l>0$, $l=0,...,L$; $\gamma_k>0$, $k=0, ..., K-1$; $p\in\N$
    \FOR{$k\in\{0, ..., K-1\}$}
        \STATE sample $(\rho_1,y_1),...,(\rho_B,y_B)\sim P_{\text{data}}$
        \STATE Init $\mu_{\theta^k,i}$, $i=1,...,B\quad$ (e.g. $\mu_{\theta^k,i}(j)\sim\mathcal{N}(0,I_p)$ for $j=1,...,J_{\mu_{\theta^k,i}}$)
        \FOR{$i=1$,...,$B$ (in parallel)}
            \FOR{$l=0$,...,$L-1$}
            \STATE $
                \mu_{\theta^k,i}(j)\leftarrow \mu_{\theta^k,i}(j) \newline \text{\qquad \qquad \quad } - \eta_l \nabla_W\cG_{\theta^k,\rho_i}(\mu_{\theta^k,i})(\mu_{\theta^k,i}(j)) 
            \newline $ for all particles $\mu_{\theta^k,i}(j)$, 
            \ENDFOR
        \ENDFOR
        \STATE $\theta^{k+1}\leftarrow \theta^k-\gamma_k \frac{1}{B}\sum_{i=1}^B\left[\nabla_\theta \ell(h_\theta(\mu_{\theta^k,i}),y_i)\right]$
    \ENDFOR
\end{algorithmic}
\end{algorithm}

\subsection{Architecture}\label{sec:architecture}

\begin{figure*}[t]
\centering

\newcommand{\sh}{1}
\newcommand{\mh}{2}
\newcommand{\lh}{3}
\newcommand{\meh}{0.7}

\definecolor{myyellow}{rgb}{1.0, 1.0, 0.6}
\definecolor{mygreen}{rgb}{.0, 1.0, .0}
\definecolor{encodercolor}{rgb}{.7, .6, .4}

\newcommand{\trapezoid}[5][]{ 
\begin{scope}[#1]
  \draw[fill=blue!70!green!30, draw=black, line width=1.5pt] 
    (0, -#2/2 + #4) --           
    (1.5, -#3/2 + #5) --          
    (1.5, #3/2 - #5) --         
    (0, #2/2 - #4) --          
    cycle;              
  \end{scope}
}

\newcommand{\inputbox}[3][]{
\begin{scope}[#1]
    \draw[fill=olive!10, line width=1.5pt, rounded corners=2.5pt]
    (0, -#3/2) rectangle (1, #3/2);
    \node at (0.5, 0) {#2};
\end{scope}
}

\newcommand{\miniencoder}[1][]{
\begin{scope}[#1]
    \draw[thick, fill=gray!20, line width=1pt, rounded corners=2.5pt] (-0.15, -\meh -0.15) rectangle (1.55, \meh + 0.15);
    \draw[fill=orange!40, line width=1pt, rounded corners=1.5pt]
    (0, -\meh) rectangle (0.3, \meh);
    \draw[fill=myyellow!80, line width=1pt, rounded corners=1.5pt]
    (0.4, -\meh) rectangle (0.55, \meh);
    \draw[fill=blue!70!green!30, line width=1pt, rounded corners=1.5pt]
    (0.85, -\meh) rectangle (1.15, \meh);
    \draw[fill=myyellow!80, line width=1pt, rounded corners=1.5pt]
    (1.25, -\meh) rectangle (1.4, \meh);
\end{scope}
}

\newcommand{\encoder}[3][]{
\begin{scope}[#1]
    \draw[fill=mygreen!20,  line width=1.5pt, rounded corners=2.5pt]
    (0, -\lh/2) rectangle (2, \lh/2);
    \node at (1, 1.1) {#2};
    \node at (1, 0.7) {#3};
    \miniencoder[shift={(0.3, -0.5)}]
\end{scope}
}

\newcommand{\bilinear}[2][]{
\begin{scope}[#1]
    \draw[fill=violet!15,  line width=1.5pt, rounded corners=2.5pt]
    (0, -\mh/2) rectangle (3, \mh/2);
    \node at (1.5, 0) {#2};
\end{scope}
}

\newcommand{\yellowbox}[2][]{
\begin{scope}[#1]
    \draw[fill=myyellow!80, line width=1.5pt, rounded corners=2.5pt]
    (0, -#2/2) rectangle (0.5, #2/2);
\end{scope}
}

\begin{tikzpicture}[scale=0.65]
    \sffamily
    \tikzset{
        myarrow/.style={
            -{Latex[length=2.5mm]},  
            thick,                  
            line width=1.5pt
        }
    }
    \inputbox[shift={(0, 0)}]{$\Z$}{\lh}
    \draw[-, thick, line width=1.5pt] (0.5, 1.5) -- (0.5, 2);
    \draw[-, thick, line width=1.5pt] (0.5, 2) -- (13.35, 2);
    \draw[myarrow] (13.35, 2) -- (13.35, 1.5);
    \draw[-, thick, line width=1.5pt] (4.45, 1) -- (4.45, 1.5);
    \draw[-, thick, line width=1.5pt] (4.45, 1.5) -- (9.65, 1.5);
    \draw[myarrow] (9.65, 1.5) -- (9.65, 1);

    \trapezoid[shift={(2.5, 0)}]{\lh}{\mh}{0.05}{0.0}
    \yellowbox[shift={(4.2, 0)}]{\mh}
    \draw[myarrow] (1,0) -- (2.5, 0);
    \draw[-, thick, line width=1.5pt] (4,0) -- (4.2, 0);
    \bilinear[shift={(6.2,0)}]{Bilinear};
    \draw[myarrow] (4.7, 0.5) -- (6.2, 0.5);
    \yellowbox[shift={(9.4, 0)}]{\mh};
    \draw[-, thick, line width=1.5pt] (9.2, 0) -- (9.4, 0);
    \trapezoid[shift={(11.4, 0)}]{\mh}{\lh}{0}{0.05}
    \yellowbox[shift={(13.1, 0)}]{\lh}
    \draw[-, thick, line width=1.5pt] (12.9, 0) -- (13.1, 0);
    \draw[myarrow] (9.9, 0) -- (11.4, 0);
    
    \encoder[shift={(15.1, 0)}]{Self}{Encoder};
    \draw[myarrow] (13.6, 0) -- (15.1, 0);
    \encoder[shift={(18.6, 0)}]{Cross}{Encoder};
    \draw[myarrow] (17.1, 0.5) -- (18.6, 0.5);
    \trapezoid[shift={(22.1, 0)}, rounded corners=2.5pt]{\lh}{\lh}{0}{0};
    \draw[myarrow] (20.6, 0) -- (22.1, 0);
    \yellowbox[shift={(23.8, 0)}]{\lh};
    \draw[-, thick, line width=1.5pt] (23.6, 0) -- (23.8, 0);
    \draw[myarrow] (24.3, 0) -- (25, 0);
    \inputbox[shift={(25,0)}]{$\Z^*$}{\lh}

    \inputbox[shift={(0, -4)}]{$\X$}{\sh}
    \draw[myarrow] (1, -4) -- (2.5, -4);
    \trapezoid[shift={(2.5, -4)}]{\sh}{\mh}{0}{0.05}
    \yellowbox[shift={(4.2, -4)}]{\mh}
    \draw[-, thick, line width=1.5pt] (4, -4) -- (4.2, -4);
    \draw[myarrow] (4.7, -4) -- (11.4, -4);
    \trapezoid[shift={(11.4, -4)}]{\mh}{\lh}{0}{0.05}
    \yellowbox[shift={(13.1, -4)}]{\lh}
    \draw[-, thick, line width=1.5pt] (12.9, -4) -- (13.1, -4);
    \draw[-, thick, line width=1.5pt] (5.45, -4) -- (5.45, -0.5);
    \draw[myarrow] (5.45, -0.5) -- (6.2, -0.5);
    \encoder[shift={(15.1, -4)}]{Self}{Encoder};
    \draw[myarrow] (13.6, -4) -- (15.1, -4);
    \draw[-, thick, line width=1.5pt] (17.1, -4) -- (17.85, -4);
    \draw[-, thick, line width=1.5pt] (17.85, -4) -- (17.85, -0.5);
    \draw[myarrow] (17.85, -0.5) -- (18.6, -0.5);
    \draw[line width=1.5pt, thick, fill=gray!5, rounded corners=1.5pt] (18.6, -5.5) rectangle (26, -2);
    \trapezoid[shift={(19, -3)}, rounded corners=2.5pt]{\sh}{\sh}{0}{0}
    \node at (23.5, -3) {Feedforward network};
    \yellowbox[shift={(19.5, -4.5)}]{\sh}
    \node at (22.5, -4.5) {Layer Norm};

\end{tikzpicture}
\caption{DDEQ Network architecture. There are two residual connection to the second and third layer norm in the top row. All encoders are standard multi-head attention network (see appendix for more details).}
\label{fig:arch}
\vspace{-0.5em}
\end{figure*}

In this section, we will describe our architecture and show that it fulfills certain criteria desirable for point cloud processing. We want to emphasize, however, that DDEQs are largely architecture-independent, as long as the architecture is suitable for processing point clouds, and more carefully designed architectures could improve their performance. We leave the design of such architectures for future research.

Recall that $F=F_\theta:
\cP(\mathbb{R}^p)\times \cP(\mathbb{R}^d)\to\cP(\mathbb{R}^p)$, $
(\mu,\rho)
\mapsto
F(\mu,\rho)$ (in this section, we remove dependency on $\theta$ for simplicity). Computationally, we will represent empirical measures as matrices by stacking the particles along the first dimension. That is, we represent $\rho = \sum_{i=1}^M \frac{1}{M}\delta_{\mathbf{x}_i} \in {\cP}(\mathbb{R}^d)$ as a matrix $\mathbf{X} \in \mathbb{R}^{M\times d}$, and $\mu = \sum_{i=1}^N \frac{1}{N}\delta_{\mathbf{z}_i} \in {\cP}(\mathbb{R}^p)$ as a matrix $\mathbf{Z} \in \mathbb{R}^{N\times p}$. Note that $M$ depends on the data sample $\rho$ and $N$ depends on the latent $\mu$ here; in particular, the number of rows in the matrices varies across samples. However, all architectures we discuss in this section allow for variable numbers of particles in their inputs.

We will now derive a property which we will call the \textit{EI Property} that induces a useful inductive bias to the network in our setting.
A common approach to design neural networks that act on matrices encoding data without inherent ordering (e.g., sets or empirical distributions), and that make predictions such as classifications, is to make them \textit{invariant} under row permutations. This corresponds to the fact that the prediction should be independent under the \textit{ordering} of the points in the point cloud. Similarly, in the setting where the output is another empirical measure of the same size, making the network \textit{equivariant} under permutations of the rows of the input (i.e., if the input is permuted in a certain way, the output is permuted in the same way) is a common approach. Unlike in most well-known architectures acting on set inputs, such as Deep Sets \citep{deepsets}, PointNet \citep{qi2017pointnet}, or Set Transformer \citep{settransformer}, we are now dealing with \textit{two} variables $\X$ and $\Z$ instead of one, which makes the analysis a bit more delicate. Since our network outputs measures that have the same number of particles as the latent $\Z$, it is natural to consider networks $F:\mathbb{R}^{N\times p}\times\mathbb{R}^{M\times d}\to\mathbb{R}^{N\times p}$, $(\Z,\X)\mapsto F(\Z,\X)$ (abusing notation a bit, as we now let $F$ act on matrices) which are row-\textit{equivariant} under $\Z$ and row-\textit{invariant} under $\X$. While row-invariance in $\Z$ would be closer to treating $\Z$ as a set, this is difficult to accomplish with standard network architectures. However, we get a type of row-invariance in $\Z$ ``for free'' by using a Wasserstein gradient flow forward solver which finds fixed points up to permutations, as opposed to classical forward solvers.

\begin{definition}[EI Property]
    Let $F:\mathbb{R}^{N\times p}\times\mathbb{R}^{M\times d}\to\mathbb{R}^{N\times p}$, $(\Z,\X)\mapsto F(\Z,\X)$. The function $F$ is said to be permutation-\textbf{E}quivariant in $\Z$ and permutation-\textbf{I}nvariant in $\X$, if for any permutations $\sigma_N\in \fS_N$ and $\sigma_M\in \fS_M$ of the rows of $\Z$ and $\X$, resp., it holds that
    \begin{equation*}
        F\big(\sigma_N(\Z),\sigma_M(\X)\big)=\sigma_N\big(F(\Z,\X)\big).
    \end{equation*}
    In that case, we say that $F$ fulfills the \textit{\textbf{EI} Property}.
\end{definition}

By ensuring that our architecture fulfills the EI property, we induce an inductive bias particularly well-suited for point cloud data. In the following, we show that a wide range of network building blocks fulfills this property.

\begin{proposition}\label{prop:bilinear}
    Let $F^{\textnormal{bil}}:\mathbb{R}^{N\times p}\times\mathbb{R}^{M\times d}\to\mathbb{R}^{N\times p}$, $(\Z,\X)\mapsto F^{\textnormal{bil}}(\Z,\X)$. Then $F^{\textnormal{bil}}$ is a bilinear map that fulfills the EI property
    if and only if $F^{\textnormal{bil}}$ is of the form
    \begin{equation*}
        F^{\textnormal{bil}}(\Z,\X)=\Z\alpha \X^\top 1_M+1_N 1_N^\top \Z\beta \X^\top 1_M
    \end{equation*}
    for some tensors $\alpha,\beta\in\mathbb{R}^{p\times p\times d}$, where $1_N,1_M$ denote the vectors with ones in $\mathbb{R}^N$ and $\mathbb{R}^M$.
\end{proposition}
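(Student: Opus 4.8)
The plan is to prove the equivalence in both directions, with essentially all the work in the ``only if'' part, which I reduce to the classical structure theorem for linear permutation-equivariant maps. First note that the EI property is equivalent to the conjunction of permutation-equivariance in $\Z$ (take $\sigma_M=\mathrm{id}$ in the definition) and permutation-invariance in $\X$ (take $\sigma_N=\mathrm{id}$), these two recombining to recover the full EI property; so I work with these two properties. For the ``if'' direction, suppose $F^{\textnormal{bil}}(\Z,\X)=\Z\alpha\X^\top 1_M+1_N 1_N^\top\Z\beta\X^\top 1_M$. Bilinearity is immediate since each summand is a product of a factor linear in $\Z$ and a factor linear in $\X$. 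Permutation-invariance in $\X$ holds because $F^{\textnormal{bil}}$ depends on $\X$ only through the column sum $\X^\top 1_M$, which is unchanged under row permutations of $\X$. Permutation-equivariance in $\Z$ follows from $P1_N=1_N$ and $1_N^\top P=1_N^\top$ for every permutation matrix $P$, so that $P(\Z\alpha\X^\top 1_M)=(P\Z)\alpha\X^\top 1_M$ and $P(1_N 1_N^\top\Z\beta\X^\top 1_M)=1_N 1_N^\top(P\Z)\beta\X^\top 1_M$.

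For the ``only if'' direction I proceed in two steps. \emph{Step 1 (collapse the $\X$-dependence).} Fix $\Z$; by invariance in $\X$ the linear map $\X\mapsto F^{\textnormal{bil}}(\Z,\X)$ is invariant under every row permutation of $\X$. Writing this map as $\X\mapsto\sum_{i,c}\X_{ic}\,v_{ic}$ with $v_{ic}\in\R^{N\times p}$ and comparing the coefficient of each entry $\X_{ic}$ under an arbitrary row permutation shows $v_{ic}$ cannot depend on $i$; hence $\sum_{i,c}\X_{ic}v_{ic}=\sum_c(\X^\top 1_M)_c\,w_c$, i.e. $F^{\textnormal{bil}}(\Z,\X)=G(\Z,\X^\top 1_M)$ for some $G:\R^{N\times p}\times\R^d\to\R^{N\times p}$. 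Since $\X\mapsto\X^\top 1_M$ is surjective onto $\R^d$ and $F^{\textnormal{bil}}$ is linear in $\X$, $G$ is linear in its second argument, and it is linear in the first by construction, so $G$ is bilinear.

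\emph{Step 2 (structure in $\Z$ and linear dependence on $v$).} For each fixed $v\in\R^d$, $G(\cdot,v):\R^{N\times p}\to\R^{N\times p}$ is linear and, by equivariance in $\Z$, permutation-equivariant. I will include the short classical argument (cf.\ \citep{deepsets}) that any such map has the form $\Z\mapsto\Z A+1_N 1_N^\top\Z B$ with $A,B\in\R^{p\times p}$: writing it as a tensor $(M\Z)_{na}=\sum_{m,b}M_{na,mb}\Z_{mb}$, equivariance forces $M_{\pi(n)a,\pi(m)b}=M_{na,mb}$ for all $\pi\in\fS_N$; since $\fS_N$ acting diagonally on index pairs $(n,m)$ has exactly two orbits, $\{n=m\}$ and $\{n\neq m\}$ (for $N\ge2$), the tensor is determined by two $p\times p$ blocks, which reassemble into the stated form, and testing on $\Z=e_ne_b^\top$ shows $(A,B)$ is unique. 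Applying this pointwise in $v$ yields $A(v),B(v)\in\R^{p\times p}$, and uniqueness together with linearity of $G$ in $v$ forces $v\mapsto A(v)$ and $v\mapsto B(v)$ to be linear maps $\R^d\to\R^{p\times p}$, hence $A(v)=\alpha\cdot v$ and $B(v)=\beta\cdot v$ for tensors $\alpha,\beta\in\R^{p\times p\times d}$ (contracting the last index with $v$). Substituting $v=\X^\top 1_M$ recovers the claimed form. The case $N=1$ is degenerate: $\fS_1$ is trivial, the two terms coincide, and Step 1 already yields the result.

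The only genuinely delicate point is the second half of Step 2: propagating linearity in $v$ through the equivariant structure theorem, which is why one must first establish uniqueness of the $(A,B)$-decomposition so that $A(\cdot)$ and $B(\cdot)$ are well-defined functions of $v$ and inherit linearity. Everything else is routine manipulation with permutation matrices, bilinearity, and the orbit count for $\fS_N$ acting on pairs of indices.
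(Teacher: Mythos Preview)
Your proof is correct. Both your argument and the paper's rest on the same core observation: the diagonal action of $\fS_N$ on index pairs $(i,k)$ has exactly the two orbits $\{i=k\}$ and $\{i\neq k\}$, which forces the two-tensor structure, while invariance under $\fS_M$ kills the dependence on the row index $m$ of $\X$.

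The organization, however, is different. The paper works directly with the six-index coefficient tensor $F^{\textnormal{bil}}_{ijklmn}$ and imposes both symmetries simultaneously, reading off $F^{\textnormal{bil}}_{ijklmn}=\alpha_{ljn}\delta_{ik}+\beta_{ljn}$ in one pass. You instead modularize: first collapse the $\X$-dependence to the column sum $\X^\top 1_M$, then for each fixed $v\in\R^d$ invoke the Deep Sets structure theorem for linear equivariant maps, and finally use uniqueness of the $(A,B)$-decomposition to transfer linearity in $v$ to the coefficient tensors. Your route is slightly longer but more conceptual, and it makes explicit why uniqueness of the decomposition matters (the paper's index calculation sidesteps this by never fixing $v$). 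Either way the content is the same orbit count; the difference is purely in packaging.
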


EI also extends to self- and cross-attention, as well as layers that process each particle separately, such as Layer Normalization \citep{ba2016layernormalization}.

\begin{proposition}
    Denote by MultiHead(tgt, src) multi-head attention \citep{transformer} between a target and source sequence. Then MultiHead$(\Z, \Z)$ and MultiHead$(\Z, \X)$ both fulfill the EI property.
\end{proposition}

We can now define our network architecture for $F_\theta$, which also fulfills the EI property, being a composition of functions that do. The core building blocks are two self-encoder networks which compute the self-attention on $\X$ and $\Z$, resp., which are followed by a cross-encoder network computing the cross-attention between the target $\Z$ and the source $\X$. All encoders are standard multi-head attention encoders (see Appendix \ref{sec:appendixarchitecture}). An overview of the complete network architecture can be seen in Figure \ref{fig:arch}.
Here, feedforward networks are two-layer linear networks with one ReLU activation, which are applied on a per-particle basis. Note how modules in the architecture have different input and output heights, which correspond to the particle dimension. The network features the dimension of $\X$ (typically 2 or 3), the encoder dimension which is the same as the dimension of $\Z$, and the dimension of the bilinear layer which is typically much smaller than the encoder dimension.
Details on the implementation can be found in section \ref{sec:implementation}. Depending on the task, we pre- and post-process the in- and outputs to the DDEQ in different ways.

\begin{figure}
\centering

\newcommand{\sh}{1}
\newcommand{\mh}{2}
\newcommand{\lh}{3}
\newcommand{\meh}{0.7}
\newcommand{\rh}{-6}
\definecolor{myyellow}{rgb}{1.0, 1.0, 0.6}
\definecolor{mygreen}{rgb}{.0, 1.0, .0}
\definecolor{encodercolor}{rgb}{.7, .6, .4}
\definecolor{meanpoolcolor}{rgb}{1.3,1,.4}

\newcommand{\trapezoid}[5][]{ 
\begin{scope}[#1]
  \draw[fill=blue!70!green!30, draw=black, line width=1pt] 
    (0, -#2/2 + #4) --           
    (1, -#3/2 + #5) --          
    (1, #3/2 - #5) --         
    (0, #2/2 - #4) --          
    cycle;              
  \end{scope}
}

\newcommand{\invertible}[6][]{ 
\begin{scope}[#1]
  \draw[fill=green!20, draw=black, line width=1pt] 
    (0, -#2/2 + #4) --           
    (1, -#3/2 + #5) --          
    (1, #3/2 - #5) --         
    (0, #2/2 - #4) --          
    cycle;              
    \node at (0.5, -0.1) {#6};
  \end{scope}
}

\newcommand{\inputbox}[3][]{
\begin{scope}[#1]
    \draw[fill=olive!10, line width=1pt, rounded corners=2.5pt]
    (0, -#3/2) rectangle (1, #3/2);
    \node at (0.5, 0) {#2};
\end{scope}
}

\newcommand{\tinyinputbox}[3][]{
\begin{scope}[#1]
    \draw[fill=olive!10, line width=1pt, rounded corners=2.5pt]
    (0, -#3/2) rectangle (0.2, #3/2);
    \node at (0.8, 0) {#2};
\end{scope}
}

\newcommand{\meanpool}[1][]{
\begin{scope}[#1]
    \draw[fill=purple!30,  line width=1pt, rounded corners=2.5pt]
    (0, -\lh/2) rectangle (0.5, \lh/2);
\end{scope}
}

\newcommand{\ddeq}[1][]{
\begin{scope}[#1]
    \draw[fill=teal!30,  line width=1pt, rounded corners=2.5pt]
    (0, -\lh/2) rectangle (2,\lh/2);
    \node at (1, 0) {\small{\textbf{DDEQ}}};
\end{scope}
}

\newcommand{\yellowbox}[2][]{
\begin{scope}[#1]
    \draw[fill=myyellow!80, line width=1pt, rounded corners=2.5pt]
    (0, -#2/2) rectangle (0.5, #2/2);
\end{scope}
}

\begin{tikzpicture}[scale=0.6]
    \sffamily
    \tikzset{
        myarrow/.style={
            -{Latex[length=2.5mm]},  
            thick,                  
            line width=1pt
        }
    }
    \inputbox[shift={(2, 0)}]{$\Z$}{\lh}
    \draw[myarrow] (3,0)--(4,0);
    \draw[-, thick, line width=1pt] (3,-3)--(5,-3);
    \draw[myarrow] (5, -3) -- (5,-1.5);
    \inputbox[shift={(2,-3)}]{$\X$}{\sh}
    \ddeq[shift={(4,-0)}];
    \draw[myarrow] (6,-0)--(7,0);
    \inputbox[shift={(7,-0)}]{$\Z^*$}{\lh}
    \draw[myarrow] ((8,-0)--(9,0);
    \meanpool[shift={(7+2,0)}];
    \draw[-, thick, line width=1pt] (7.5+2,0)--(7.75+2,0);
    \trapezoid[shift={(7.75+2,0)}]{\lh}{1}{0.1}{0};
    \inputbox[shift={(9.75+2, 0)}]{$\Y^*$}{\sh}
    \draw[myarrow] (8.75+2,-0)--(9.75+2,-0);
\end{tikzpicture}
\caption{The point cloud classification pipeline, where $\Z$ is initialized independently of $\X$, and the DDEQ is followed by a max pool (purple) and linear (blue) layer.}
\label{fig:pipeline-classification}

\vspace{-1em}
\end{figure}

\textbf{Classification Pipeline.} For classification, we initialize $\Z\sim \mathcal{N}(0,I)$. We set $h_\theta(\Z^*)=\text{Lin}_{h_\theta}(\text{MaxPool}(\Z^*))$, where $\text{Lin}_{h_\theta}$ denotes a single linear layer, i.e. we first apply max-pooling along the particle dimension and then a single linear layer mapping from the latent dimension to the number of classes. The loss is the cross entropy between the predicted and target labels. An overview is found in Figure \ref{fig:pipeline-classification}.

\textbf{Completion Pipeline.} In point cloud completion \citep{pointcloudcompletion}, the task is to predict a target point cloud $\Y$ given a partial input point cloud $\X$. Hence, we could initialize $\Z$ to be equal to $\X$, add a certain number of ``free'' particles, and then let the network learn to move the free particles to the right locations, while keeping the particles that were initialized at $\X$ fixed. However, this would mean that particles in $\Z$ are of the same dimension as $\X$. Instead we allow for a higher hidden dimension of the particles to increase expressiveness of the network. We first add a number of free particles\footnote{as the target is not known, the number of free particles is empirically chosen such that the total number of particles in $\Z$ matches the number of particles in the target on average; for more details, refer to section \ref{sec:exps}.} to $\X$ and call this $\Tilde{\X}$. Then we pad $\Tilde{\X}$ with zeros to match the latent dimension; call this $\tilde{\Z}$. We then feed $\tilde{\Z}$ through an invertible neural network $q$ which takes the following form:
\begin{align*}
    \Z=q(\tilde{\Z})&=[\tilde{\Z}_1, \tilde{\Z}_2 e^{\phi(\tilde{\Z}_1)}+\psi(\tilde{\Z}_1)]\\
    q^{-1}(\Z)&=[\Z_1, (\Z_2-\psi(\Z_1))e^{-\phi(\Z_1)}],
\end{align*}
where $\phi$ and $\psi$ are two-layer neural networks, and $\tilde{\Z}=[\tilde{\Z}_1,\tilde{\Z}_2]$ is a partition of $\X$ into two equal-sized chunks along the particle dimension. In this sense, $q^{-1}$ corresponds to $h_\theta$.
The invertible network is a parametrized coupling layer \citep{nice} which is typically used in normalizing flows \citep{normalizingflows} and invertible by definition; further details can be found in the appendix. In the DDEQ forward pass, we keep the particles in $\Z$ corresponding to inputs $\X$ fixed. Once a fixed point $\Z^*$ is found, we apply $q^{-1}$ on $\Z^*$. This ensures the output still contains the input particles $\X$. The final loss is the squared MMD (with the same Riesz kernel as before) between the prediction and the target point cloud. Crucially, at no point in training is the model given \textit{any information} about the class label. A schematic overview can be seen in Figure \ref{fig:pipeline-completion}.
We highlight two key differences between our approach and common point cloud completion algorithms from the literature, such as AtlasNet \citep{groueix2018atlasnetpapiermacheapproachlearning}, PCN \citep{yuan2019pcnpointcompletionnetwork}, or PMP-Net \citep{wen2021pmpnetpointcloudcompletion}: firstly, we recover the input point cloud \textit{exactly} as part of the output, and secondly, we choose the number of target particles \textit{adaptively} based on the input. Both of these properties can be crucial in faithfully recovering completed point clouds.

\section{EXPERIMENTS}\label{sec:exps}

\looseness=-1 We evaluate DDEQs on point cloud classification and point cloud completion. Note that tasks such as point cloud segmentation \citep{qi2017pointnet} that require identifying output particles with input particles are not suitable for our architecture, because the WGF solver inherently treats inputs as measures and removes any ``ordering'' from the particles.

More extensive experimental results, including ablation studies, a computational complexity analysis, and results on the nature of the fixed points can be found in Appendix~\ref{sec:add_expes}. The code is available at: \url{https://github.com/j-geuter/DDEQs}.

\subsection{Datasets}
We train our model on two different datasets: \href{https://www.kaggle.com/datasets/cristiangarcia/pointcloudmnist2d}{MNIST point clouds}, a 2D dataset which features point clouds of digits with up to 350 particles per sample, a train split with $60,000$ samples, and a test split with $10,000$ samples, which we will call \textit{MNIST-pc}; and on \href{https://www.kaggle.com/datasets/balraj98/modelnet40-princeton-3d-object-dataset/data}{ModelNet40}, a 3D dataset which features point clouds of objects of 40 different categories with up to nearly 200.000 particles per sample, a train split with $9,843$ samples, and a test split with $2,468$ samples. Since samples with large numbers of particles are highly redundant and too memory intense to handle, we employ voxel based downscaling, such that each sample contains no more than 800 particles, and dub the resulting dataset \textit{ModelNet40-s}.
We normalize all samples from both datasets to have zero mean and unit variance.
For point cloud completion, we select two particles per (normalized) sample at random, and remove all particles in a radius $r$ around them to create the partial point clouds, where $r=0.6$ for MNIST and $r=0.8$ for ModelNet. We do not impose any restrictions on the distance between the two selected particles. For ModelNet40, while we use all classes for point cloud classification, we select a subset of eight classes (airplane, bathtub, bowl, car, chair, cone, toilet, vase) for point cloud completion. We call the resulting datasets \textit{MNIST-pc-partial} and \textit{ModelNet40-s-partial}. 

\subsection{Implementation Details}\label{sec:implementation}
\sloppy
\looseness=-1 We set the dimension of $\Z$ and the hidden dimension of the encoders to 128. The latent dimension of the bilinear model is 16. The cross-encoder has three layers, while the self-encoders have a single layer\footnote{For classification on MNIST-pc, we reduce the number of layers in the cross-encoder from three to one, which reduces the number of parameters to 776k, since the performance gains of more layers are marginal.}.
This results in a total number of 1.17M parameters for the DDEQ. The linear classifier for MNIST-pc has 1.3k parameters, the one for ModelNet40-s 5k; the invertible coupling layer for point cloud completion has 16.6k. For classification tasks, we set the number of particles in $\Z$ to 10. Note that in theory, for classification, even the edge case of just a single particle in $\Z$ works, which would effectively reduce the inner loop to standard gradient descent. We found that increasing this number to 10 slightly improves performance, but further increasing it was detrimental. This shows that a DDEQ forward pass with a discretized MMD flow can be helpful even in the classification setting. An ablation study of this hyperparameter can be found in Appendix~\ref{sec:ab_study_z}. For point cloud completion, we set the number of free particles to be 27.5\% of the number of particles in the input, which equals the average amount of particles removed to create the partial input point clouds.

We train the model for 5 epochs on both MNIST datasets, and for 20 epochs on ModelNet40-s for classification and 100 epochs for completion (since we select just eight classes for point cloud completion, the dataset contains only 2868 samples), both with a batch size of 64. To batch together samples with varying numbers of particles, we adequately pad samples with zeros and mask out gradients. For the inner loop, we use SGD with a learning rate of 5 and 200 iterations.
The outer loop uses an Adam optimizer with initial learning rate 0.001, and a scheduler which reduces the learning rate by 90\% after 40\% resp. 80\% of the total epochs. On MNIST-pc, we also reduce the number of layers in the cross-encoder from 3 to 1, as additional layers only provide marginal performance gains.
We implement DDEQs with the \verb|torchdeq| library \citep{torchdeq}, which utilizes phantom gradients \citep{phantomgradient} for the backward pass.

\textbf{Point Cloud Classification.}
We compare against PointNet \citep{qi2017pointnet} and Point Transformer (PT) \citep{pointtransformer}, both of which we train from scratch on our datasets with the same number of epochs and with the same learning rate scheduler as DDEQs. Other hyperparameters, such as the optimizers, are taken from the original papers.
The top-1 accuracies on both datasets are reported in Table \ref{tab:classificationaccuracies}.
We see that DDEQs achieve results almost identical to PointNet, and only marginally worse than PT, while being significantly more parameter efficient.

\begin{table}[t]
    \centering
    \caption{Accuracies on Point Cloud Classification}
    \vspace{0.5em}
    \begin{tabular}{lcc}
    \toprule
     Models (size) & \textbf{\small{MNIST-pc}} & \textbf{\small{ModelNet40-s}} \\
    \midrule
    PointNet (1.6M)& 97.5 & 77.3 \\
    PT (3.5M)& 98.6 & 79.2 \\
    \midrule
    DDEQ (776k/1.2M)& 98.1 & 78.2\\
    \bottomrule
    \end{tabular}
    \label{tab:classificationaccuracies}
    \vspace{-1em}
\end{table}

\begin{table}[h]
    \centering
    \caption{Comparison of W2 distances for DDEQ and PCN under noisy training data for MNIST-pc-partial.}
    \begin{tabular}{lcccccccc}
        \toprule
        \textbf{Noisy Particles} & 0\% & 5\% & 20\% & 80\%  \\
        \midrule
        \textbf{W2 (DDEQ)} & 0.33 & 0.17 & 0.19 & 0.33 \\
        \textbf{W2 (PCN)}  & 0.49 & 0.51 & 0.49 & 0.53 \\
        \bottomrule
    \end{tabular}
    \label{tab:w2_comparison}
\end{table}

\begin{figure}
    \centering
    \includegraphics[width=.8\linewidth]{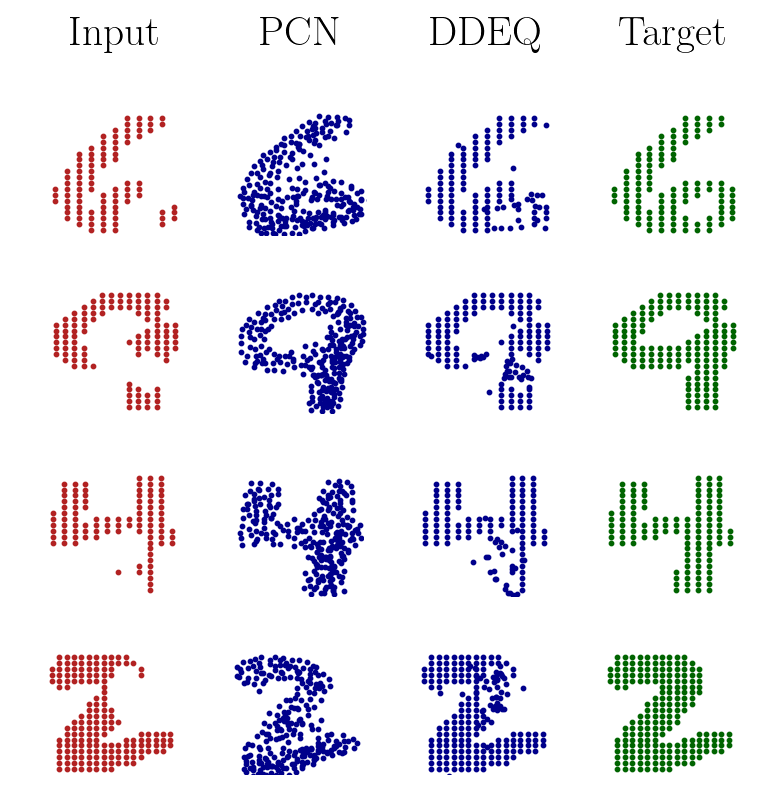}
    \caption{MNIST-pc-partial samples.}
    \label{fig:completed}
    \vspace{-1em}
\end{figure}

\textbf{Point Cloud Completion.}
We compare DDEQs against PCN \citep{yuan2019pcnpointcompletionnetwork} with a grid size of one and 256 points per sample on MNIST-pc-partial and 768 on ModelNet40-s-partial, which yielded the best results amongst a range of hyperparameters. Again, we trained this baseline on our dataset from scratch for the same number of epochs as DDEQs\footnote{We also tried comparing against PMP-Net \citep{wen2021pmpnetpointcloudcompletion} and PMP-Net++ \citep{wen2022pmpnetpointcloudcompletion} using the official \href{https://github.com/diviswen/PMP-Net/tree/main}{GitHub repo}, but could not get it to produce predictions that differ from the input point cloud on our dataset, thus we are not including it.}. While DDEQs allow for input batches to be padded with zeros, most existing frameworks require all inputs to have the same number of points, hence we filled input batches for PCN with randomly sampled points of the input point clouds which improves performance.
Sample results can be seen in Figures \ref{fig:completed} and \ref{fig:completed2}; additional samples and results can be found in Appendix \ref{sec:additionalmodelnet}.
As we can see from the Figures, the DDEQ tends to be good at moving free particles to the right locations, but is having trouble spreading them out evenly in those locations. PCN, on the other hand, spreads out particles very evenly, but tends to produce outputs that are too diffuse, and clearly do not faithfully recover the partial input.
We observed that DDEQs benefit from adding Gaussian noise from $\mathcal{N}(0,1)$ to a small portion of the input points, as can be seen in Table \ref{tab:w2_comparison}, where we quantitatively compare DDEQs and PCN in terms of the Wasserstein-2 loss over varying noising schedules. To make the comparison more fair, we compute the Wasserstein-2 distance only over the free particles for DDEQs, whereas we compute it over all particles for PCN\footnote{Arguably, this comparison slightly favours PCN, as the loss for PCN is also averaged over particles that ``correspond'' to input particles in some way, but it is more fair than averaging over all particles for DDEQs, as DDEQs retain the input point cloud by design.}. We hypothesize this makes training more robust, and consequently we trained DDEQs on $5\%$ noisy samples for point cloud completion.

\begin{figure}
    \centering
    \includegraphics[width=.8\linewidth]{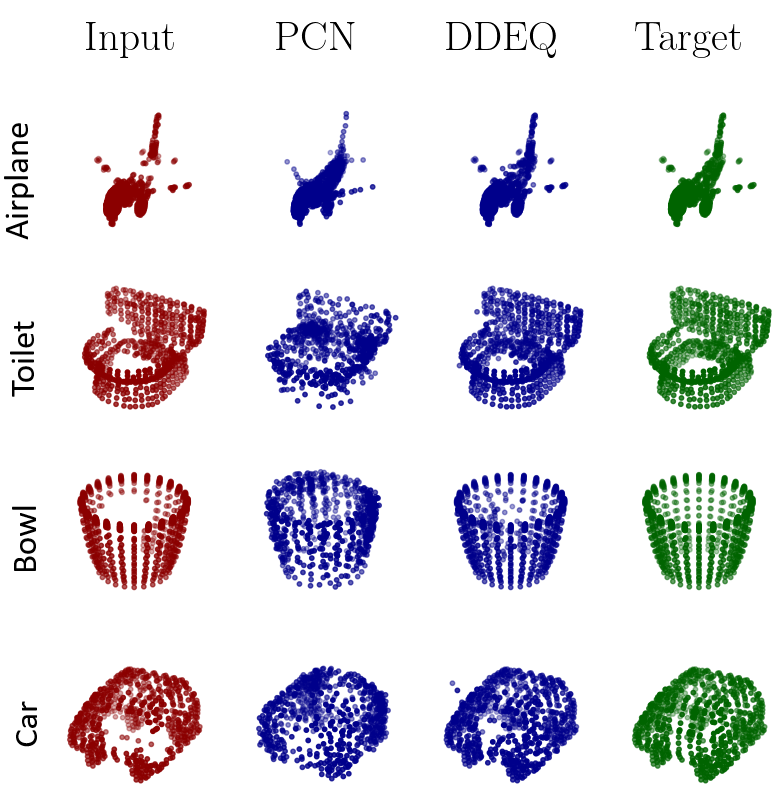}
    \caption{ModelNet40-s-partial samples.}
    \label{fig:completed2}
    \vspace{-1em}
\end{figure}

\section{CONCLUSION}
In this paper, we presented DDEQs, a type of Deep Equilibrium Model which features discrete probability measure inputs and outputs,
and provide a rigorous theoretical framework of the model. We derived a suitable forward pass, which we implement using a discretized Wasserstein gradient flow on the squared MMD between a latent sample and its push-forward under the neural network. We carefully analyzed the forward pass, and proposed a suitable network architecture based on multi-head attention encoders. Through experiments on point cloud classification and point cloud completion, we showed the versatility of the proposed approach, and how to apply it end-to-end on these tasks. Our model achieves competitive results on challenging tasks, such as point cloud classification on ModelNet40, while being significantly more parameter efficient. However, a more extensive empirical analysis would be instructive, and we believe that through more carefully designed architectures and hyperparameters, DDEQs can successfully be applied to a wide range of applications. One of the main drawbacks is the slow convergence of the forward pass, as is typical for MMD flows. Future research includes applying DDEQs to other tasks that require measure-to-measure learning, and to derive methods to speed up the forward pass.

\section*{ACKNOWLEDGEMENTS}

CB acknowledges the support of ANR PEPR PDE-AI. JG and DAM acknowledge support from the Kempner Institute, the Aramont Fellowship Fund, and the FAS Dean’s Competitive Fund for Promising Scholarship.

\bibliographystyle{plainnat}
\bibliography{references}

\appendix
\onecolumn
\aistatstitle{Supplementary Materials}

\section{BACKGROUND}
In this section, we provide background on some of the mathematical concepts used in the paper. In Section \ref{sec:measuretheory}, we recall two definitions from measure theory used throughout the paper; in Section \ref{sec:ot}, we recall some of the basic definitions from OT (see \citep{santambrogio2015optimal, villani2021topics} for more details). Section \ref{sec:distances} defines the Wasserstein distance and MMD between probability measures, and in Section \ref{sec:wgfs} we define Wasserstein gradient flows (see \citep{villani2021topics,chewi2024statistical,lee2019riemannianmanifolds, lee2003smoothmanifolds} for more details).

\subsection{Measure Theory}\label{sec:measuretheory}
All functions and sets used throughout the paper are considered to be Borel measurable, and all measures are considered to be Borel measures, without explicitly mentioning it; e.g., when we say \textit{``measurable''}, we mean Borel measurable.
\begin{appendixdefinition}[Space of Measures with bounded second Moments]
    Denote by $\mathcal{P}(\R^d)$ the space of probability measures over $\R^d$ with respect to the Borel-$\sigma$-algebra over $\R^d$. Then the space of probability measures of bounded second moments, denoted by $\cP(\R^d)\subset\mathcal{P}(\R^d)$, contains all measures $\mu\in\mathcal{P}(\R^d)$ such that
\begin{equation*}
    \int \|x\|^2 \d\mu(x)<\infty.
\end{equation*}
\end{appendixdefinition}

\begin{appendixdefinition}[Pushforward]
    Let $\mu\in\cP(\R^d)$, and $F:\R^d\to\R^d$ a (measurable) function. Then the pushforward of $\mu$ under $F$, denoted by $F_\#\mu$, is a probability measure defined by
    \begin{equation*}
        F_\#\mu(B)=\mu\big(F^{-1}(B)\big)
    \end{equation*}
    for measurable $B\subset\R^d$.
\end{appendixdefinition}

\subsection{Optimal Transport}\label{sec:ot}
In this section, we define the optimal transport problem on $\R^d$.
Let $\mu,\nu\in\cP(\R^d)$.

\begin{appendixdefinition}[Coupling]
    The set of couplings between $\mu$ and $\nu$ is
    \begin{equation*}\Pi(\mu,\nu) = \{\gamma\in \cP(\R^d\times \R^d),\ \pi^1_\#\gamma=\mu,\ \pi^2_\#\gamma=\nu\}
    \end{equation*}
    where $\pi^i$ denotes the projection on the $i^{th}$ coordinate.
\end{appendixdefinition}

\begin{appendixdefinition}[Optimal Transport Problem]\label{def:otprimal}
    Let $c:\R^d\times \R^d\to\R$ be a cost function. The optimal transport problem is defined as:
    \begin{equation}\label{eq:ot}
        \inf_{\gamma\in\Pi(\mu,\nu)}\ \int c(x,y)\ \mathrm{d}\gamma(x,y)
    \end{equation}
    The infimum in (\ref{eq:ot}) is called the \textit{transport cost}, and the minimizer $\gamma$, if it exists, the \textit{optimal transport plan}.
\end{appendixdefinition}

Under minimal assumptions on $c$, such as lower semicontinuity and boundedness from below, one can prove existence of an optimal transport plan; this result extends to the more general case of Polish spaces as well \citep{villani2021topics}.

\subsection{Discrepancies between Probability Measures}\label{sec:distances}

Different discrepancies can be used to compare probability measures, and each comes with its own advantages and disadvantages, e.g. from a computational point of view or in terms of their theoretical properties. In this work, we focus mainly on two discrepancies: the Wasserstein distance \citep{villani2021topics} and the Maximum Mean Discrepancy (MMD) \citep{gretton2012kernel}.

\begin{appendixdefinition}[Wasserstein Distance]
    For $1\le p<\infty$, the Wasserstein-$p$ distance $W_p$ is the $p^{th}$ root of the transport cost of the optimal transport problem \eqref{eq:ot} with cost $c(x,y)=\norm{x-y}_p^p$, i.e.
    \begin{equation*}
    W_p^p(\mu,\nu) = \inf_{\gamma\in\Pi(\mu,\nu)}\ \int \|x-y\|_p^p\ \mathrm{d}\gamma(x,y).
    \end{equation*}
\end{appendixdefinition}

It can be shown that Wasserstein distances are indeed distances on $\mathcal{P}_p(\R^d)$. In particular, $\cP(\R^d)$ endowed with $W_2$, denoted by $(\cP(\R^d),W_2)$ and called the \textit{Wasserstein space}, has a Riemannian structure \citep{otto2001geometry}, which allows defining notions such as tangent spaces or gradients. However, it is costly to compute in practice. In this work, we leverage the Riemannian structure of the Wasserstein space to minimize the MMD between probability distributions, which we define now.

\begin{appendixdefinition}[Reproducing Kernel Hilbert Space]
    A Reproducing Kernel Hilbert Space (RKHS) is a Hilbert space $\mathcal{H}$ of functions from a space $\mathcal{X}$ to $\mathbb{R}$ in which point evaluations are continuous linear functions. By the Riesz representation theorem, this is equivalent to the existence of a symmetric, positive definite kernel $k:\mathcal{X}\times\mathcal{X}\to\mathbb{R}$ such that
    \begin{equation*}
        f(x)=\langle f,k(x,\cdot)\rangle_{\mathcal{H}}\quad \forall f\in\mathcal{H}.
    \end{equation*}
\end{appendixdefinition}

\begin{appendixdefinition}[Maximum Mean Discrepancy]
    Let $\mathcal{H}$ be a RKHS with kernel $k:\R^d\times\R^d\to \R$. The MMD is defined as \citep{gretton2012kernel}
\begin{equation*}
    \begin{aligned}
        \mathrm{MMD}(\mu,\nu) &= \iint k(x,y)\ \mathrm{d}(\mu-\nu)(x)\mathrm{d}(\mu-\nu)(y) \\
        &= \sup_{f,\ \|f\|_\mathcal{H}\le 1}\ \left|\int f\mathrm{d}\mu - \int f\mathrm{d}\nu\right|.
    \end{aligned}
\end{equation*}
\end{appendixdefinition}

The MMD can be shown to be a distance if and only if the \textit{kernel mean embedding} $\mu\mapsto \int k(x,\cdot)\d \mu(x)$ is injective, which is e.g. the case for the Gaussian kernel $k(x,y)= e^{-\frac{\norm{x-y}^2}{2\sigma^2}}$ \citep{chewi2024statistical}. The Riesz kernel $k(x,y)=-\norm{x-y}$ used in this work is not positive definite, hence does not define an RKHS and is thus technically not an MMD. However, it can be shown to be a distance \citep{sejdinovic2013equivalence}.

The MMD between discrete probability measures can be evaluated in $\mathcal{O}(n^2)$, where $n$ is the number of particles. In contrast, computing the Wasserstein distance requires $\mathcal{O}(n^3 \log n)$ computations \citep{peyre2019computational}.

\subsection{Wasserstein Gradient Flows}\label{sec:wgfs}

In this section, we recall some definitions and properties of Wasserstein gradient flows. As $(\cP(\R^d), W_2)$ is a metric space that admits a quasi-Riemannian structure~\citep{otto2001geometry} (i.e., there exists a natural scalar product in each tangent space, even though it is not a Riemannian manifold per se), we can define a tangent space at $\mu\in\cP(\R^d)$ as
\begin{equation}
    T_\mu \cP(\mathbb{R}^d)=\overline{\{\nabla \psi\ |\ \psi:\mathbb{R}^d\to\mathbb{R} \text{ compactly supported, smooth}\}}^{L^2(\mu)} \subset L^2(\mu), \label{eq:8}
\end{equation}
where the closure is taken in $L^2(\mu)$, see \citep[Definition 8.4.1]{ambrosio2008gradient}.
Moreover, the tangent space is endowed with the scalar product
\begin{equation*}
    \langle v,w\rangle_\mu:=\int\langle v,w\rangle \d \mu
\end{equation*}

for $v,w\in T_\mu \cP(\mathbb{R}^d)$. Intuitively, the vector fields in the tangent space can be thought of as pointing into the direction that mass moves to at any point $x\in\mathbb{R}^d$. 

One can show that the Wasserstein distance is indeed the distance induced by this structure, i.e. that the squared Wasserstein distance between two distributions $\mu_0,\mu_1\in \cP(\mathbb{R}^d)$ is equal to the minimum length of a constant-speed geodesic connecting $\mu_0$ and $\mu_1$ (\textit{Benamou-Brenier dynamic formulation}):
\begin{equation}\label{eq:1}
    W^2_2(\mu_0,\mu_1)=\inf \left\{\int_0^1 \norm{v_t}^2_{\mu_t}\d t\ \Big|\ (\mu_t,v_t) \text{ solves } \partial_t\mu_t+\div(\mu_t v_t)=0\right\}.
\end{equation}
Here,
\begin{equation}\label{eq:continuityequation}
    \partial_t\mu_t+\div(\mu_t v_t)=0 
\end{equation}
is the so-called \textit{continuity equation}, which is satisfied whenever a path $(\mu_t)_t\subset \cP(\mathbb{R}^d)$ evolves according to a time vector field $(v_t)_t$, i.e. whenever for $X_t\sim \mu_t$, it holds that
\begin{equation*}
    \dot{X}_t=v_t(X_t).
\end{equation*}
Hence, the integral in equation (\ref{eq:1}) equals the length of a geodesic $(\mu_t)_t$ which connects $\mu_0$ to $\mu_1$ along a vector field $(v_t)_t$. In general, the vector field $(v_t)_t$ satisfying the continuity equation for $(\mu_t)_t$ is not unique. However, we will see below that there exists a unique such vector field which lies in the tangent space of $\mu_t$ at all times.

Now, let $\cG: \cP(\mathbb{R}^d)\to\mathbb{R}$ be a functional. 
 
Let us introduce the notion of sub- and super-differentiability on the Wasserstein space (see \emph{e.g.} \citep{bonnet2019pontryagin, lanzetti2022first}).
\begin{appendixdefinition}
    Let $\mu\in\cP(\R^d)$. A map $\xi\in L^2(\mu)$ belongs to the sub-differential $\partial^-\cG(\mu)$ of $\cF$ at $\mu$ if for all $\nu\in\cP(\R^d)$,
    \begin{equation*}
        \cG(\nu) \ge \cF(\mu) + \sup_{\gamma\in\Pi_o(\mu,\nu)}\ \int \langle \xi(x), y-x\rangle\ \mathrm{d}\gamma(x,y) + o\big(W_2(\mu,\nu)\big),
    \end{equation*}
    with $\Pi_o(\mu,\nu)$ the set of optimal couplings between $\mu$ and $\nu$. Similarly, $\xi\in L^2(\mu)$ belongs to the super-differential $\partial^+\cG(\mu)$ of $\cG$ at $\mu$ if $-\xi\in\partial^-(-\cG)(\mu)$.
\end{appendixdefinition}
We also say that a functional is Wasserstein differentiable if it admits sub- and super-differentials which coincide.
\begin{appendixdefinition} \label{def:wasserstein_gradient}
    $\cG$ is called \textit{Wasserstein differentiable at $\mu\in\cP(\R^d)$} if $\partial^-\cG(\mu)\cap \partial^+\cG(\nu)\neq \emptyset$. In this case, we say that $\gW\cG(\mu)\in\partial^-\cG(\mu)\cap \partial^+\cG(\mu)$ is a \textit{Wasserstein gradient} of $\cG$ at $\mu$, and it satisfies for any $\nu\in\cP(\R^d)$, $\gamma\in\Pi_o(\mu,\nu)$,
    \begin{equation}
        \cG(\nu) = \cG(\mu) + \int \langle \gW\cG(\mu)(x), y-x\rangle\ \mathrm{d}\gamma(x,y) + o\big(W_2(\mu,\nu)\big).
    \end{equation}
\end{appendixdefinition}
If a Wasserstein gradient exists, then there is always a unique Wasserstein gradient in the tangent space $T_\mu\cP(\R^d)$ \citep[Proposition 2.5]{lanzetti2022first}, and we restrict to it in practice. Examples of Wasserstein differentiable functionals include potential energies $\cG(\mu)=\int V\mathrm{d}\mu$ or interaction energies $\cG(\mu)=\frac12 \iint W(x,y)\ \mathrm{d}\mu(x)\mathrm{d}\mu(y)$ for $V:\R^d\to\R$ and $W:\R^d\times\R^d\to\R$ differentiable and smooth, see \citep[Section 2.4]{lanzetti2022first}.

We also recall the following result of \citet{lanzetti2022first}, which states that the Wasserstein gradients can be used for the Taylor expansion of functionals for any coupling.

\begin{appendixproposition}[Proposition 2.6 in \citep{lanzetti2022first}] \label{prop:strong_diff_w}
    Let $\mu,\nu\in\cP(\R^d)$, $\gamma\in\Pi(\mu,\nu)$ any coupling and let $\cF:\cP(\mathbb{R}^d)\to\R$ be Wasserstein differentiable at $\mu$ with Wasserstein gradient $\gW\cF(\mu)\in T_\mu\cP(\R^d)$. Then,
    \begin{equation*}
        \cF(\nu) = \cF(\mu) + \int \langle \gW\cF(\mu)(x), y-x\rangle \ \mathrm{d}\gamma(x,y) + o\left(\sqrt{\int \|x-y\|_2^2\ \mathrm{d}\gamma(x,y)}\right).
    \end{equation*}
\end{appendixproposition}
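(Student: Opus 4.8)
The plan is to reduce the statement to the first-order expansion that \emph{defines} the Wasserstein gradient (Definition~\ref{def:wasserstein_gradient}), which is stated only for \emph{optimal} couplings, and then to replace the optimal coupling by the arbitrary coupling $\gamma$ in the linear term, exploiting that $\gW\cF(\mu)$ lies in the tangent space $T_\mu\cP(\R^d)$.

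First I would unwind the asymptotics: it suffices to show that for every sequence $(\nu_n,\gamma_n)_n$ with $\gamma_n\in\Pi(\mu,\nu_n)$ and $\delta_n^2:=\int\norm{x-y}^2\,\d\gamma_n\to 0$, the residual
\begin{equation*}
R_n\ :=\ \cF(\nu_n)-\cF(\mu)-\int\langle\gW\cF(\mu)(x),\,y-x\rangle\,\d\gamma_n(x,y)
\end{equation*}
obeys $R_n/\delta_n\to 0$. For each $n$ fix an optimal coupling $\gamma_n^o\in\Pi_o(\mu,\nu_n)$ (optimal couplings for $W_2$ exist). Applying Definition~\ref{def:wasserstein_gradient} to the pair $(\mu,\nu_n)$ gives $\cF(\nu_n)=\cF(\mu)+\int\langle\gW\cF(\mu)(x),y-x\rangle\,\d\gamma_n^o+o\big(W_2(\mu,\nu_n)\big)$, and since $W_2(\mu,\nu_n)^2\le\delta_n^2$ this error is also $o(\delta_n)$. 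Hence the whole problem collapses to showing that the linear term is insensitive to the choice of coupling at first order:
\begin{equation*}
\int\langle\gW\cF(\mu)(x),y-x\rangle\,\d\gamma_n-\int\langle\gW\cF(\mu)(x),y-x\rangle\,\d\gamma_n^o\ =\ o(\delta_n).
\end{equation*}

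For this I would run a density argument. Fix $\varepsilon>0$; by the description of $T_\mu\cP(\R^d)$ in~\eqref{eq:8}, choose $\psi:\R^d\to\R$ smooth with compact support and $\norm{\gW\cF(\mu)-\nabla\psi}_{L^2(\mu)}<\varepsilon$. By Cauchy--Schwarz, substituting $\nabla\psi$ for $\gW\cF(\mu)$ perturbs the first integral above by at most $\varepsilon\delta_n$ and the second by at most $\varepsilon W_2(\mu,\nu_n)\le\varepsilon\delta_n$. For the gradient field $\nabla\psi$, the second-order Taylor expansion of $\psi$ along the segment $[x,y]$ gives, for \emph{any} coupling $\gamma\in\Pi(\mu,\nu_n)$,
\begin{equation*}
\Bigl|\int\langle\nabla\psi(x),y-x\rangle\,\d\gamma-\Bigl(\int\psi\,\d\nu_n-\int\psi\,\d\mu\Bigr)\Bigr|\ \le\ \tfrac12\,\norm{\nabla^2\psi}_\infty\int\norm{x-y}^2\,\d\gamma,
\end{equation*}
so $\int\langle\nabla\psi(x),y-x\rangle\,\d\gamma$ equals the coupling-independent number $\int\psi\,\d(\nu_n-\mu)$ up to an error $C_\psi\delta_n^2$ for $\gamma=\gamma_n$ and $C_\psi W_2(\mu,\nu_n)^2\le C_\psi\delta_n^2$ for $\gamma=\gamma_n^o$, where $C_\psi:=\tfrac12\norm{\nabla^2\psi}_\infty<\infty$. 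Chaining these bounds with the reduction above yields $|R_n|\le 2\varepsilon\delta_n+2C_\psi\delta_n^2+o(\delta_n)$; dividing by $\delta_n$, taking $\limsup_{n\to\infty}$, and then letting $\varepsilon\to0$ gives $R_n/\delta_n\to0$, which is the assertion.

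The main subtlety is the order of limits: the Taylor constant $C_\psi$ depends on the approximating function $\psi$ and blows up as $\varepsilon\to0$, so the argument works only because $C_\psi$ enters multiplied by $\delta_n^2$ and $n$ is sent to infinity before $\varepsilon$. I would be careful to isolate the $\varepsilon$-dependent and $n$-dependent pieces cleanly (and to note that the degenerate case $\nu_n=\mu$ with $\delta_n>0$ is covered by the same estimate). Everything else — existence of $W_2$-optimal couplings, the inequality $W_2(\mu,\nu)^2\le\int\norm{x-y}^2\,\d\gamma$, and $L^2(\mu)$-density of gradients of smooth compactly supported functions in $T_\mu\cP(\R^d)$ — I would use as standard black boxes.
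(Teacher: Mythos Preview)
The paper does not prove this proposition; it is quoted from \citet{lanzetti2022first} as a known result and used as a black box in the proof of Proposition~\ref{prop:composition_grad}. Your argument is correct and is essentially the standard proof (and the one in the cited reference): the crucial point is precisely that for $\nabla\psi$ with $\psi$ smooth and compactly supported the linear term depends on the coupling only at second order via Taylor, and this propagates to every element of $T_\mu\cP(\R^d)$ by the $L^2(\mu)$-density in~\eqref{eq:8}. Your bookkeeping of the limits ($n\to\infty$ before $\varepsilon\to0$, so that the $C_\psi\delta_n^2$ term is harmless) is the only subtle step and you handle it correctly.
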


It turns out that under suitable regularity assumptions, the Wasserstein gradient coincides with the gradient of the first variation of the functional \citep{chewi2024statistical}.
\begin{appendixdefinition}[First Variation]
    If it exists, the first variation of $\cG$ at $\mu$, denoted by $\frac{\delta \cG}{\delta\mu}(\mu):\mathbb{R}^d\to\mathbb{R}$, is defined as the unique map such that
    \begin{equation*}
        \lim_{\epsilon\to 0}\frac{\cG(\mu+\epsilon \chi)-\cG(\mu)}{\epsilon}=\int\frac{\delta\cG}{\delta\mu}(\mu)\d \chi
    \end{equation*}
    for all perturbations $\chi$ such that $\int\mathrm{d}\chi=0$.
\end{appendixdefinition}
\begin{appendixproposition}[Lemma 10.4.1 in \citep{ambrosio2008gradient}]
    Assume that $\mu$ is absolutely continuous with respect to the Lebesgue measure, and that its density lies in $C^1(\R^d)$. Also assume that $\cG(\mu)<\infty$. Then the Wasserstein gradient of $\cG$ at $\mu$ is equal to the gradient of the first variation at $\mu$, i.e.:
    \begin{equation*}
        \nabla_W\cG(\mu)=\nabla\frac{\delta \cG}{\delta\mu}(\mu).
    \end{equation*}
\end{appendixproposition}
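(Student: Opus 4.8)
The plan is to verify the claimed $L^2(\mu)$-identity by testing both sides against a dense family of smooth vector fields, using the two characterizations at our disposal: Proposition~\ref{prop:strong_diff_w} for the Wasserstein gradient, and the defining property of the first variation. Fix $\psi:\R^d\to\R$ smooth and compactly supported, set $\xi:=\nabla\psi$, and for $|t|$ small consider the perturbed measure $\mu_t:=(\Id+t\xi)_\#\mu$. Since $\xi$ is Lipschitz and vanishes outside a compact set, $\Id+t\xi$ is a $C^1$-diffeomorphism of $\R^d$ for $|t|$ small enough, so $\mu_t\in\cP(\R^d)$; by the change-of-variables formula it has a $C^1$ density $\rho_t$, and, using $\rho\in C^1$, the expansion $\rho_t=\rho+t\chi+o(t)$ holds uniformly, where $\chi:=-\div(\rho\xi)$ satisfies $\int\chi\,\d x=0$ by compact support.

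First I would expand $\cG(\mu_t)$ through the Wasserstein gradient. The map $x\mapsto(x,x+t\xi(x))$ pushes $\mu$ forward to a coupling $\gamma_t\in\Pi(\mu,\mu_t)$ on whose support $y-x=t\xi(x)$; since $\int\|t\xi\|^2\,\d\mu=O(t^2)$, Proposition~\ref{prop:strong_diff_w} gives
\begin{equation*}
    \cG(\mu_t)=\cG(\mu)+t\int\langle\gW\cG(\mu)(x),\xi(x)\rangle\,\d\mu(x)+o(t).
\end{equation*}
Next I would expand $\cG(\mu_t)$ through the first variation. Plugging the admissible perturbation $\chi$ into the defining identity of $\frac{\delta\cG}{\delta\mu}(\mu)$ and then integrating by parts (legitimate because $\rho$ and $\frac{\delta\cG}{\delta\mu}(\mu)$ are $C^1$ and $\xi$ has compact support, so there is no boundary term) yields
\begin{equation*}
    \cG(\mu_t)=\cG(\mu)-t\int\frac{\delta\cG}{\delta\mu}(\mu)\,\div(\rho\xi)\,\d x+o(t)=\cG(\mu)+t\int\langle\nabla\tfrac{\delta\cG}{\delta\mu}(\mu)(x),\xi(x)\rangle\,\d\mu(x)+o(t).
\end{equation*}
Comparing the two expansions, dividing by $t$, and letting $t\to0$ gives $\langle\gW\cG(\mu),\nabla\psi\rangle_\mu=\langle\nabla\tfrac{\delta\cG}{\delta\mu}(\mu),\nabla\psi\rangle_\mu$ for every smooth compactly supported $\psi$.

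To conclude, I would recall that $\gW\cG(\mu)$ lies in the tangent space $T_\mu\cP(\R^d)$ of \eqref{eq:8}, which is exactly the $L^2(\mu)$-closure of $\{\nabla\psi:\psi\text{ smooth, compactly supported}\}$, and that $\nabla\tfrac{\delta\cG}{\delta\mu}(\mu)$, being a gradient of a scalar function, also belongs to that closed subspace (or, if one is cautious about regularity at infinity, equals its $L^2(\mu)$-projection onto it). Since the identity above says the two elements have the same $L^2(\mu)$-inner product against a dense subset of $T_\mu\cP(\R^d)$, they coincide in $L^2(\mu)$, which is the assertion $\gW\cG(\mu)=\nabla\frac{\delta\cG}{\delta\mu}(\mu)$.

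The step I expect to be the main obstacle is the second expansion: the first variation is defined via \emph{linear} perturbations $\mu+\epsilon\chi$, whereas $t\mapsto\mu_t$ is a genuinely nonlinear pushforward curve, so one must control the remainder in $\rho_t=\rho+t\chi+o(t)$ in a topology strong enough that the first variation genuinely computes $\tfrac{\d}{\d t}\cG(\mu_t)$ at $t=0$ — this is precisely where the hypotheses ($\rho\in C^1$, $\cG(\mu)<\infty$, and the implicit differentiability of $\frac{\delta\cG}{\delta\mu}(\mu)$ needed for the integration by parts) are used; the change of variables, the coupling construction, and the final density argument are routine.
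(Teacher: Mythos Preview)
The paper does not give its own proof of this proposition: it is quoted verbatim as Lemma~10.4.1 of \citep{ambrosio2008gradient} and is used as background material without argument. So there is nothing in the paper to compare your proposal against.

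That said, your sketch is the standard route to this identity and is essentially correct. The two-expansion strategy (first via Proposition~\ref{prop:strong_diff_w} along the transport curve $(\Id+t\nabla\psi)_\#\mu$, second via the first variation and the continuity-equation identity $\partial_t\rho_t|_{t=0}=-\div(\rho\nabla\psi)$, then matching first-order terms and concluding by density in $T_\mu\cP(\R^d)$) is exactly how this is done in the literature. You have also correctly flagged the only genuinely delicate point: the first variation is defined through \emph{linear} perturbations $\mu+\epsilon\chi$, so passing from that to the derivative along the nonlinear pushforward curve requires enough regularity of $\cG$ near $\mu$ (continuity of $\frac{\delta\cG}{\delta\mu}$ in a suitable sense) to absorb the $o(t)$ density remainder. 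One minor caveat: to conclude equality in $L^2(\mu)$ you need $\nabla\frac{\delta\cG}{\delta\mu}(\mu)\in L^2(\mu)$, which is an implicit hypothesis here rather than a consequence of the stated assumptions; your parenthetical about projecting onto the tangent space is the honest way to handle this.
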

Moreover, for any curve $(\mu_t)_{t\ge0}$ in $\mathcal{P}_{2}(\R^d)$ with associated tangent vectors $(v_t)_{t\ge0}$, it holds:
\begin{equation} \label{eq:chain_rule}
    \partial_t \cG(\mu_t) = \langle \nabla_W \cG(\mu_t),v_t\rangle_{\mu_t}.
\end{equation}

Indeed, applying Proposition \ref{prop:strong_diff_w} between $\mu_t$ and $(\mathrm{Id} + s v_t)_\#\mu_t$ for any $s>0$, with $\gamma=(\mathrm{Id}, \mathrm{Id}+sv_t)_\#\mu_t\in\Pi\big(\mu_t, (\mathrm{Id}+sv_t)_\#\mu_t\big)$, we get
\begin{equation*}
    \frac{\cG\big((\mathrm{Id} + sv_t)_\#\mu_t\big) - \cG(\mu_s)}{s} = \int \langle \gW \cG(\mu_t)(x), v_t(x)\rangle\ \mathrm{d}\mu_t(x) + o(1) \xrightarrow[s\to 0]{} \langle \gW \cG(\mu_t), v_t\rangle_{\mu_t},
\end{equation*}
and thus $\partial_t\cG(\mu_t) = \langle \gW \cG(\mu_t), v_t\rangle_{\mu_t}$.

\begin{appendixdefinition}[Wasserstein Gradient Flow]\label{def:wgf}
    A Wasserstein gradient flow (WGF) of $\cG$ is a path $(\mu_t)_{t\ge0}\subset\mathcal{P}_{2}(\mathbb{R}^d)$ starting at some $\mu_0$ and moving along the negative Wasserstein gradient of $\cG$. In terms of the continuity equation, this reads (in the distributional sense)
    \begin{equation*}
        \partial_t\mu_t + \div\big(\mu_tv_t\big) = 0,
    \end{equation*}
    where $-v_t$ is a subgradient of $\cG$ at $\mu_t$, i.e. for all $\nu\in\cP(\R^d)$,
    \begin{equation*}
        \cG(\nu) \ge \cG(\mu_t) + \sup_{\gamma\in\Pi_o(\mu_t,\nu)}\ \int \langle-v_t(x), y-x\rangle\ \mathrm{d}\gamma(x,y) + o\big(W_2(\mu_t,\nu)\big),
    \end{equation*}
    with $\Pi_o(\mu,\nu)$ the set of optimal couplings.
\end{appendixdefinition}
Note that the velocity field generating the path $t\mapsto \mu_t$ in Definition \ref{def:wgf} is not unique. However, the Wasserstein gradient $\nabla_W \cG(\mu_t)$ is the unique element amongst all such velocity fields $(v_t)_{t\ge 0}$ which lies in the tangent space of $\mu_t$, i.e. for which $v_t\in T_{\mu_t}\cP(\R^d)$ holds for all $t$.

We now recall the Wasserstein gradient of the MMD to a fixed target measure, which can be written as a sum of potential and interaction energies, see \citep{mmdflow} for more details.
\begin{proposition}
    Let $\nu\in\cP(\R^d)$, and define $\cF(\mu) = \frac12 \mathrm{MMD}(\mu,\nu)^2$ for $\mu\in\cP(\R^d)$. Then, $\cF$ is Wasserstein differentiable at any $\mu\in\cP(\R^d)$, with Wasserstein gradient
    \begin{equation*}
        \gW\cF(\mu) = \int \nabla_1 k(\cdot, y)\ \mathrm{d}\mu(y) - \int \nabla_1 k(\cdot, z)\ \mathrm{d}\nu(z),
    \end{equation*}
    where $\nabla_1$ denotes the gradient w.r.t. the first argument.
\end{proposition}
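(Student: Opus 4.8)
The plan is to reduce the statement to the already-recorded Wasserstein gradients of potential and interaction energies by expanding the squared MMD. Writing $\cF(\mu)=\tfrac12\mmds{\mu}{\nu}$ and using bilinearity,
\[
\cF(\mu)=\frac12\iint k(x,y)\,\d\mu(x)\d\mu(y)\;-\;\iint k(x,z)\,\d\mu(x)\d\nu(z)\;+\;\frac12\iint k(z,z')\,\d\nu(z)\d\nu(z'),
\]
so $\cF=\mathcal{W}-\mathcal{V}+C$, where $\mathcal{W}(\mu)=\tfrac12\iint k\,\d\mu\,\d\mu$ is an interaction energy with symmetric smooth interaction kernel $k$, $\mathcal{V}(\mu)=\int V\,\d\mu$ is a potential energy with $V(x):=\int k(x,z)\,\d\nu(z)$, and $C:=\tfrac12\iint k\,\d\nu\,\d\nu$ is a constant in $\mu$.

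Next I would collect the three ingredient gradients. The constant is Wasserstein differentiable with vanishing gradient. The potential energy $\mathcal{V}$ is Wasserstein differentiable at every $\mu\in\cP(\R^d)$ with $\gW\mathcal{V}(\mu)=\nabla V$ once $V\in C^1$; here one checks by differentiation under the integral sign that $V$ is $C^1$ with $\nabla V(x)=\int\nabla_1 k(x,z)\,\d\nu(z)$, which is legitimate because $k$ is smooth and, under the standing regularity assumptions on $k$, $\nabla_1 k(\cdot,z)$ is dominated uniformly in $z$ by a $\nu$-integrable function. The interaction energy $\mathcal{W}$ is Wasserstein differentiable at every $\mu$ with $\gW\mathcal{W}(\mu)=\int\nabla_1 k(\cdot,y)\,\d\mu(y)$: its first variation at $\mu$ is $x\mapsto\int k(x,y)\,\d\mu(y)$ by symmetry of $k$, and the Wasserstein gradient is the gradient of that map; crucially this is the form of interaction energy recalled in the background, for which the Wasserstein gradient is known to exist in the tangent space at \emph{arbitrary} $\mu\in\cP(\R^d)$, established directly from the sub-/super-differential definition rather than via a first-variation argument, so it applies to discrete $\mu$ as well.

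Finally I would invoke additivity of Wasserstein gradients: if $\cF_1,\cF_2$ are Wasserstein differentiable at $\mu$ with gradients in $T_\mu\cP(\R^d)$, then so is $a\cF_1+b\cF_2$, with gradient $a\gW\cF_1(\mu)+b\gW\cF_2(\mu)$. This is immediate from Proposition~\ref{prop:strong_diff_w}: for any $\sigma\in\cP(\R^d)$ and any optimal coupling $\gamma\in\Pi_o(\mu,\sigma)$, adding the two expansions gives $(a\cF_1+b\cF_2)(\sigma)=(a\cF_1+b\cF_2)(\mu)+\int\langle a\gW\cF_1(\mu)(x)+b\gW\cF_2(\mu)(x),y-x\rangle\,\d\gamma(x,y)+o\big(W_2(\mu,\sigma)\big)$, which is precisely the equality characterizing the Wasserstein gradient; and $a\gW\cF_1(\mu)+b\gW\cF_2(\mu)$ is still tangent because $T_\mu\cP(\R^d)$ is a closed linear subspace of $L^2(\mu)$. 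Applying this to $\cF=\mathcal{W}-\mathcal{V}+C$ yields $\gW\cF(\mu)=\int\nabla_1 k(\cdot,y)\,\d\mu(y)-\int\nabla_1 k(\cdot,z)\,\d\nu(z)$.

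The main obstacle is regularity bookkeeping rather than anything conceptual: one must fix the precise smoothness and growth hypotheses on $k$ under which differentiation under the integral sign for $V$ is valid, under which $\mathcal{W}$ and $\mathcal{V}$ genuinely belong to the Wasserstein-differentiable class at \emph{every} $\mu$ (the discrete case being the relevant one here, which is exactly why a first-variation argument requiring $\mu$ absolutely continuous is not available), and under which the $o(W_2)$ remainders are controlled — the latter relying on a second-order Taylor estimate for $k$ with bounded Hessian along the optimal coupling. Note also that for the Riesz kernel $k(x,y)=-\|x-y\|$ used in the algorithm, $k$ fails to be smooth on the diagonal, so the proposition as stated does not literally cover it; that case is treated separately via the energy-distance identification and lies outside the present statement.
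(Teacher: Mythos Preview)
Your proposal is correct and matches the paper's approach: the paper does not spell out a proof but explicitly states that the MMD ``can be written as a sum of potential and interaction energies'' and defers to \citep{mmdflow} and \citep[Section 2.4]{lanzetti2022first} for the Wasserstein differentiability of those two building blocks, which is precisely the decomposition $\cF=\mathcal{W}-\mathcal{V}+C$ you carry out. Your additional remarks on additivity via Proposition~\ref{prop:strong_diff_w}, on the regularity bookkeeping needed for differentiation under the integral, and on the Riesz kernel caveat are all accurate and go a bit beyond what the paper records.
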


\section{PROOFS}\label{sec:proofs}

\subsection{Wasserstein Gradient}
\setcounter{theoremcounter}{0}
\begin{proposition} \label{prop:composition_grad}
    Let $\mu\in\cP(\R^p)$, $\cF:\cP(\R^p)\to\R$, $T:\R^p\to\R^p \in L^2(\mu)$ a $\mu$-almost everywhere differentiable map with $\nabla T$ bounded in operator norm, \emph{i.e.} $\sup_x\ \|\nabla T(x)\|_{\mathrm{op}} < +\infty$, and define $\tilde{\cF}(\mu):=\cF(T_\#\mu)$. Up to a set of measure zero, if the Wasserstein gradient of $\cF$ at $T_\#\mu$ exists, then the Wasserstein gradient of $\tilde{\cF}$ at $\mu$ also exists, and it holds:
    \begin{equation*}
        \gW\tilde{\cF}(\mu) = (\nabla T) \cdot \gW\cF(T_\#\mu)\circ T = \nabla \left(\frac{\delta\cF}{\delta\mu}(T_\#\mu)\circ T\right).
    \end{equation*}
\end{proposition}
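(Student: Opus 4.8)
The plan is to verify the claimed formula directly against the Taylor-expansion characterisation of the Wasserstein gradient (Definition~\ref{def:wasserstein_gradient}), using Proposition~\ref{prop:strong_diff_w} (which is valid for \emph{arbitrary} couplings) at the base point $T_\#\mu$. Set $L := \sup_x \|\nabla T(x)\|_{\mathrm{op}} < \infty$, so that $T$ is $L$-Lipschitz, let $g := \gW\cF(T_\#\mu)\circ T$, and let $\xi := (\nabla T)^\top g$ (the Jacobian transpose, matching the statement's notation $(\nabla T)\cdot\gW\cF(T_\#\mu)\circ T$) be the candidate Wasserstein gradient. First I would record that $\xi\in L^2(\mu)$: by the change-of-variables formula for pushforwards, $\|g\|_{L^2(\mu)}^2 = \int \|\gW\cF(T_\#\mu)\|^2\,\d(T_\#\mu) = \|\gW\cF(T_\#\mu)\|_{L^2(T_\#\mu)}^2 < \infty$, and $\|\xi\|\le L\|g\|$ pointwise.

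Next, fix $\nu\in\cP(\R^p)$ and an optimal coupling $\gamma\in\Pi_o(\mu,\nu)$, and push it forward to $\tilde{\gamma} := (T,T)_\#\gamma \in \Pi(T_\#\mu, T_\#\nu)$. Applying Proposition~\ref{prop:strong_diff_w} to $\cF$ at $T_\#\mu$ with the coupling $\tilde{\gamma}$ and changing variables $x' = T(x)$, $y' = T(y)$ gives
\begin{equation*}
    \tilde{\cF}(\nu) = \tilde{\cF}(\mu) + \int \langle g(x),\, T(y)-T(x)\rangle\,\d\gamma(x,y) + o\!\left(\sqrt{\textstyle\int \|T(x)-T(y)\|^2\,\d\gamma}\right).
\end{equation*}
The Lipschitz bound gives $\int \|T(x)-T(y)\|^2\,\d\gamma \le L^2 \int \|x-y\|^2\,\d\gamma = L^2 W_2^2(\mu,\nu)$, so the remainder is $o(W_2(\mu,\nu))$. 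Writing $R(x,y) := T(y) - T(x) - \nabla T(x)(y-x)$, the linear term splits as $\int \langle \xi(x), y-x\rangle\,\d\gamma + \int \langle g(x), R(x,y)\rangle\,\d\gamma$, and the first piece is precisely the term appearing in Definition~\ref{def:wasserstein_gradient} for the candidate $\xi$. It therefore remains to show the cross term is $o(W_2(\mu,\nu))$.

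This cross term is the main obstacle, because the first-order remainder of $T$ is controlled only $\mu$-a.e., not uniformly. I would split the integral at a threshold $\delta>0$. On $\{\|y-x\|\le\delta\}$, bound $\|R(x,y)\|\le\rho_\delta(x)\|y-x\|$ with $\rho_\delta(x) := \sup_{0<\|h\|\le\delta}\|T(x+h)-T(x)-\nabla T(x)h\|/\|h\|$; one has $\rho_\delta\le 2L$ and $\rho_\delta(x)\downarrow 0$ as $\delta\downarrow 0$ for $\mu$-a.e.\ $x$, so Cauchy--Schwarz bounds this part by $\big(\int \|g\|^2\rho_\delta^2\,\d\mu\big)^{1/2} W_2(\mu,\nu)$ with $\int \|g\|^2\rho_\delta^2\,\d\mu \to 0$ as $\delta\to 0$ by dominated convergence (dominator $4L^2\|g\|^2\in L^1(\mu)$). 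On $\{\|y-x\|>\delta\}$, use $\|R(x,y)\|\le 2L\|y-x\|$ and Cauchy--Schwarz to bound the part by $2L\big(\int \|g(x)\|^2\,\mathbf{1}_{\|y-x\|>\delta}\,\d\gamma\big)^{1/2} W_2(\mu,\nu)$; disintegrating $\gamma(\d x,\d y) = \mu(\d x)\,\gamma_x(\d y)$ and using $\int\big(\int\|y-x\|^2\,\d\gamma_x\big)\d\mu = W_2^2(\mu,\nu) \to 0$ (by optimality of $\gamma$), one gets $\gamma_x(\{\|y-x\|>\delta\})\to 0$ $\mu$-a.e.\ along a subsequence, and a second dominated-convergence argument (dominator $\|g\|^2\in L^1(\mu)$) makes this part $o(W_2(\mu,\nu))$. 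Sending $W_2(\mu,\nu)\to 0$ and then $\delta\to 0$ closes the estimate, giving $\tilde{\cF}(\nu) = \tilde{\cF}(\mu) + \int\langle\xi,y-x\rangle\,\d\gamma + o(W_2(\mu,\nu))$ for every $\nu$ and every optimal $\gamma$; hence $\xi\in\partial^-\tilde{\cF}(\mu)\cap\partial^+\tilde{\cF}(\mu)$, i.e.\ $\xi$ is a Wasserstein gradient of $\tilde{\cF}$ at $\mu$.

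For the second equality I would invoke the identification $\gW\cF(T_\#\mu) = \nabla\tfrac{\delta\cF}{\delta\mu}(T_\#\mu)$ (valid under the regularity conditions recalled in Appendix~\ref{sec:wgfs}), so that for $\mu$-a.e.\ $x$ the ordinary chain rule gives $\xi(x) = \nabla T(x)^\top \nabla\tfrac{\delta\cF}{\delta\mu}(T_\#\mu)(T(x)) = \nabla\big(\tfrac{\delta\cF}{\delta\mu}(T_\#\mu)\circ T\big)(x)$, the non-differentiability points of $T$ being absorbed into the ``set of measure zero'' caveat. Since the right-hand side is the gradient of a scalar function, it lies in the tangent space $T_\mu\cP(\R^p)$, so $\xi$ is the canonical Wasserstein gradient one restricts to in practice.
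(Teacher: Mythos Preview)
Your argument follows the same skeleton as the paper's proof: push an optimal coupling $\gamma\in\Pi_o(\mu,\nu)$ through $(T,T)$ to get $\tilde\gamma\in\Pi(T_\#\mu,T_\#\nu)$, apply Proposition~\ref{prop:strong_diff_w} to $\cF$ at $T_\#\mu$ with this (non-optimal) coupling, then Taylor-expand $T(y)$ around $T(x)$ and use the bounded-Jacobian assumption to control the $o(\cdot)$ term. The only substantive difference is in how the Taylor remainder $R(x,y)=T(y)-T(x)-\nabla T(x)(y-x)$ is handled. The paper simply writes $T(y)=T(x)+\nabla T(x)^\top(y-x)+O(\|x-y\|^2)$ and absorbs the quadratic remainder directly into $o(W_2(\mu,\nu))$; this implicitly uses a uniform second-order bound, which strictly speaking needs more regularity (e.g.\ $C^{1,1}$) than the stated hypothesis of $\mu$-a.e.\ differentiability with bounded $\nabla T$. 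Your threshold split at $\|y-x\|\lessgtr\delta$, combined with the dominated-convergence arguments for $\rho_\delta\to 0$ and $\gamma_x(\{\|y-x\|>\delta\})\to 0$, is more careful and genuinely works under the weaker assumptions as written; the subsequence step is fine once wrapped in the usual ``if not, pass to a bad subsequence, extract a good sub-subsequence, contradiction'' argument. So your proof is correct and, on the remainder estimate, more rigorous than the paper's; everything else matches.
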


\begin{proof}
    
    Let $\mu,\nu\in\cP(\R^p)$ and $\gamma\in\Pi_o(\mu,\nu)$ an optimal coupling between $\mu$ and $\nu$. Let $\Tilde{\gamma}=(T,T)_\#\gamma\in\Pi(T_\#\mu,T_\#\nu)$ a coupling between $T_\#\mu$ and $T_\#\nu$. Since $\cF$ is Wasserstein differentiable at $T_\#\mu$, we have by Proposition \ref{prop:strong_diff_w} \citep[Proposition 2.6]{lanzetti2022first}
    \begin{equation*}
        \begin{aligned}
            \Tilde{\cF}(\nu) - \Tilde{\cF}(\mu) &= \cF(T_\#\nu) - \cF(T_\#\mu) \\
            &= \int \langle \gW\cF(T_\#\mu)(x), y-x\rangle\ \mathrm{d}\Tilde{\gamma}(x,y) + o\left(\sqrt{\int \|x-y\|^2\ \mathrm{d}\Tilde{\gamma}(x,y)}\right) \\
            &= \int \langle \gW\cF(T_\#\mu)(T(x)), T(y)-T(x)\rangle\ \mathrm{d}\gamma(x,y) + o\left(\sqrt{\int \|T(x)-T(y)\|^2\ \mathrm{d}\gamma(x,y)}\right).
        \end{aligned}
    \end{equation*}
    Then, by the Jet expansion which generalizes the Taylor expansion for maps $T:\R^p\to \R^p$ (see \emph{e.g.} \citep{chen2024jet}), we have for $x,y\in\R^p$,
    \begin{equation*}
        T(y) = T(x) + \nabla T(x)^T (y-x) + O(\|x-y\|^2).
    \end{equation*}
    Assuming that the norm operator of $\nabla T$ is bounded, we can write $\|\nabla T(x)^T (y-x)\| \le \|\nabla T(x)\|_{\mathrm{op}} \|y-x\| \le M\|y-x\|$ for $M>0$, and thus $\int \|\nabla T(x)^T(y-x)\|^2\ \mathrm{d}\gamma(x,y) \le M W_2^2(\mu,\nu)$. Therefore, we obtain
    \begin{equation*}
        \begin{aligned}
            \Tilde{\cF}(\nu) - \Tilde{\cF}(\mu) &= \int \langle \gW\cF(T_\#\mu)(T(x)), \nabla T(x)^T (y-x)\rangle\ \mathrm{d}\gamma(x,y) + o\big(W_2(\mu,\nu)\big) \\
            &= \int \langle \nabla T(x) \gW\cF(T_\#\mu)(T(x)), y-x\rangle\ \mathrm{d}\gamma(x,y) + o\big(W_2(\mu,\nu)\big).
        \end{aligned}
    \end{equation*}
    Thus, we conclude that $\gW\Tilde{\cF}(\mu) = (\nabla T)\cdot \gW\cF(T_\#\mu)\circ T$ by Definition \ref{def:wasserstein_gradient}.
    
\end{proof}

\setcounter{theoremcounter}{1}
\begin{corollary}
    Let $\cG(\mu):=\frac{1}{2}\mmds{\mu}{T_\#\mu}$. Define the witness function $f_\mu$ as
    \begin{equation*}
        f_\mu:=\int k(\cdot,y)\d\mu(y)-\int k(\cdot,y)\d (T_\#\mu)(y),
    \end{equation*}
    where $k:\R^p\times\R^p\to\R$ is the kernel of the MMD, which ought to be differentiable almost everywhere. Then the Wasserstein gradient of $\cG$ exists, and is equal to
    \begin{equation}
        \nabla_W\cG(\mu) = \nabla f_\mu-\nabla(f_\mu\circ T).
    \end{equation}
\end{corollary}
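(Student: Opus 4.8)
The plan is to expand the squared MMD into three terms, recognise each as an interaction energy, and read off its Wasserstein gradient from the standard interaction-energy formula, using Proposition~\ref{prop:composition_grad} to handle the terms that involve the push-forward map $T$. Expanding the square, $\mmds{\mu}{T_\#\mu}$ splits into a term depending only on $\mu$, a cross term, and a term depending only on $T_\#\mu$; substituting $T_\#\mu$ everywhere gives
\[
    \cG(\mu) = \tfrac12 A(\mu) - B(\mu) + \tfrac12 C(\mu),
\]
with $A(\mu) = \iint k(x,x')\,\d\mu(x)\d\mu(x')$, $B(\mu) = \iint k(x,T(x'))\,\d\mu(x)\d\mu(x')$, and $C(\mu) = \iint k(T(x),T(x'))\,\d\mu(x)\d\mu(x')$. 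After symmetrising the kernel appearing in $B$, each of $A, B, C$ has the form $\mu\mapsto\iint W\,\d\mu\,\d\mu$ with $W$ symmetric; for $k$ smooth, and using $\sup_x\|\nabla T(x)\|_{\mathrm{op}}<\infty$ for the $T$-composed kernels, such an energy is Wasserstein differentiable with gradient $x\mapsto 2\int\nabla_1 W^{\mathrm{sym}}(x,y)\,\d\mu(y)$, and the $T$-dependent pieces are rewritten via the Jacobian chain rule $\nabla T(x)^{\!\top}\big(\nabla h(T(x))\big) = \nabla(h\circ T)(x)$, which is exactly the second equality in Proposition~\ref{prop:composition_grad}.

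Carrying out the computation, set $p_\mu := \int k(\cdot,y)\,\d\mu(y)$ and $q_\mu := \int k(\cdot,y)\,\d(T_\#\mu)(y)$, so that $f_\mu = p_\mu - q_\mu$. Differentiating termwise and using symmetry of $k$ yields $\gW A(\mu) = 2\nabla p_\mu$, $\gW C(\mu) = 2\nabla(q_\mu\circ T)$, and $\gW B(\mu) = \nabla q_\mu + \nabla(p_\mu\circ T)$ (the first summand coming from the free slot $x$, the second from the slot $T(x')$ after the chain rule above). Therefore
\[
    \gW\cG(\mu) = \tfrac12\gW A(\mu) - \gW B(\mu) + \tfrac12\gW C(\mu) = \nabla p_\mu - \nabla q_\mu - \nabla(p_\mu\circ T) + \nabla(q_\mu\circ T) = \nabla f_\mu - \nabla(f_\mu\circ T),
\]
which is the claimed formula; existence of $\gW\cG(\mu)$ follows since each summand above exists.

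I expect the main obstacle to be justifying Wasserstein differentiability of the $T$-composed energies $B$ and $C$ and the validity of the interaction-energy gradient formula for them: this is precisely where one needs $k$ smooth enough and uses the boundedness of $\nabla T$, and it is the content of Proposition~\ref{prop:composition_grad} applied to $\cF = \tfrac12\mmds{\,\cdot\,}{\,\cdot\,}$ in each of its arguments. An alternative, more hands-on route that avoids the explicit expansion is to work directly from the Taylor expansion of Proposition~\ref{prop:strong_diff_w}: for $\nu$ close to $\mu$ and $\gamma\in\Pi_o(\mu,\nu)$, decompose $\cG(\nu)-\cG(\mu)$ through the intermediate quantity $\tfrac12\mmds{\nu}{T_\#\mu}$, handle the change of the first argument with the known MMD gradient and the change of the second argument by pushing $\gamma$ forward through $(T,T)$ and invoking Proposition~\ref{prop:composition_grad}; there the delicate point is showing that replacing the base points $\nu$ and $T_\#\nu$ by $\mu$ and $T_\#\mu$ inside the respective gradients costs only $o(W_2(\mu,\nu))$, which uses Lipschitzness of $T$ and of $\nabla_1 k$. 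In either route we note, as the paper does, that the Riesz kernel used in the experiments is not smooth, so the argument is rigorous for smooth kernels (e.g.\ the Gaussian) and formal in the Riesz case.
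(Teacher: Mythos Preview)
Your proof is correct, but it takes a somewhat different route from the paper's. The paper splits $\cG$ into two pieces according to which argument of the MMD is being varied: $\cG_1(\nu)=\tfrac12\mmds{\nu}{T_\#\mu}$ and $\cG_2(\nu)=\tfrac12\mmds{T_\#\nu}{\mu}$, each evaluated at $\nu=\mu$. For $\cG_1$ it simply quotes the known MMD gradient $\gW\cG_1(\mu)=\nabla f_{\mu,T_\#\mu}$, and for $\cG_2$ it applies Proposition~\ref{prop:composition_grad} once to the composition $\nu\mapsto\cF(T_\#\nu)$ with $\cF=\tfrac12\mmds{\cdot}{\mu}$, obtaining $\gW\cG_2(\mu)=-\nabla(f_{\mu,T_\#\mu}\circ T)$; summing gives the result. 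Your decomposition instead expands the squared MMD into the three interaction energies $A,B,C$ and computes each gradient from scratch via the interaction-energy formula together with the Jacobian chain rule. Both arguments rely on the same ingredients (smoothness of $k$, boundedness of $\nabla T$, and the chain rule encapsulated in Proposition~\ref{prop:composition_grad}); the paper's version is slightly more modular and avoids symmetrising the cross term, while yours is more explicit and makes the individual contributions $\nabla p_\mu,\nabla q_\mu,\nabla(p_\mu\circ T),\nabla(q_\mu\circ T)$ visible. The alternative route you sketch at the end --- going through an intermediate $\tfrac12\mmds{\nu}{T_\#\mu}$ --- is essentially the paper's argument.
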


\begin{proof}
    First, recall that for $\cF(\mu) = \frac12 \mathrm{MMD}^2(\mu,\nu)$, $\gW\cF(\mu) = \nabla f_{\mu,\nu}$ with $f_{\mu,\nu} = \int k(\cdot, y)\ \mathrm{d}\mu(y) - \int k(\cdot, z)\ \mathrm{d}\nu(z)$ \citep{mmdflow}; note that in this notation, $f_\mu$ from above corresponds to $f_{\mu,T_\#\mu}$.

    For $\cG(\mu) = \frac12 \mathrm{MMD}^2(\mu, T_\#\mu)$, we have $\gW\cG(\mu) = \gW \cG_1(\mu) + \gW\cG_2(\mu)$ with $\cG_1(\nu) = \frac12\mathrm{MMD}^2(\nu, T_\#\mu)$ and $\cG_2(\nu) =\frac12 \mathrm{MMD}^2(T_\#\nu, \mu)$. Moreover,
    \begin{equation*}
        \begin{aligned}
            \gW\cG_1(\nu) &= \nabla f_{\nu, T_\#\mu} \\
            &= \int \nabla_1 k(\cdot, x')\ \mathrm{d}\nu(x') - \int \nabla_1 k(\cdot, y)\ \mathrm{d}(T_\#\mu)(y) \\
            &= \int \nabla_1 k(\cdot, x')\ \mathrm{d}\nu(x') - \int \nabla_1 k(\cdot, T(y))\ \mathrm{d}\mu(y).
        \end{aligned}
    \end{equation*}
    
    Furthermore, using Proposition \ref{prop:composition_grad},
    \begin{equation*}
        \begin{aligned}
            \gW\cG_2(\nu) &= \nabla T  (\nabla f_{T_\#\nu, \mu}\circ T) \\
            &= \int (\nabla T) \nabla_1 k\big(T(\cdot), T(x')\big) \ \mathrm{d}\nu(x') - \int (\nabla T) \nabla_1 k\big(T(\cdot), y\big)\ \mathrm{d}\mu(y) \\
            &= - \nabla (f_{\mu, T_\#\nu}\circ T)
        \end{aligned}
    \end{equation*}
    Thus,
    \begin{equation*}
        \begin{aligned}
            \gW\cG(\mu)(x) &= \gW\cG_1(\mu)(x) + \gW\cG_2(\mu)(x) \\
            &= \nabla f_{\mu, T_\#\mu}(x) - \nabla (f_{\mu,T_\#\mu}\circ T)(x).
        \end{aligned}
    \end{equation*}
\end{proof}

\subsection{Implicit Gradient}\label{sec:proof_th_implicit_grad}

In this section, we prove Theorem \ref{thm:gradient}. A rigorous proof of this statement is more involved than one might think, since it requires the derivative of a map $F:\cP(\R^p)\to \cP(\R^p)$, as well as of maps $f:\R^w\to \cP(\R^p)$, which we will carefully define in the following, following the construction in \citep{lessel2020differentiablemapswassersteinspaces} for maps $F:\cP(\R^p)\to \cP(\R^p)$, and extending it to maps $f:\R^w\to \cP(\R^p)$.

\begin{appendixdefinition}[Absolutely Continuous Curve]
    Let $(X,d)$ be a metric space, and $I\subset \mathbb{R}$ an interval. A curve $\gamma:I\to X$ is called \textit{absolutely continuous} (a.c.), if there exists a function $f\in L^1(I)$ such that
    \begin{equation*}
        d(\gamma(t),\gamma(s))\le \int_t^2 f(r)\d r, \quad \forall s,t\in I,\ t\le s.
    \end{equation*}
\end{appendixdefinition}

We have seen in section \ref{sec:wgfs} that given a functional and its Wasserstein gradient, there exists a unique vector field which induces this gradient (in the sense of the continuity equation) which lies in the tangent space. This statement can be extended to any absolutely continuous curve $\mu_t$ of measures, cmp. \citep{lessel2020differentiablemapswassersteinspaces}.
\begin{appendixdefinition}[Tangent Couple]
    Let $\mu_t:[0,1]\to \cP(\R^p)$ be an absolutely continuous curve. Then there exists a unique vector field $v_t:[0,1]\times \R^p\to\R^p$, such that $(\mu_t,v_t)$ fulfills the continuity equation \eqref{eq:continuityequation}, and such that $v_t$ lies in the tangent space $T_{\mu_t}\cP(\R^p)$ for (almost all) times $t$. We will call such a couple $(\mu_t,v_t)$ a \emph{tangent couple}.
\end{appendixdefinition}
Similarly, a tangent couple in Euclidean space is a pair $(\theta_t,h_t)$, where $\theta_t:[0,1]\to\R^q$, $h_t:[0,1]\to\R^q$ such that $h_t=\partial_t \theta_t$ (which one could extend to a vector field $h_t$ that coincides with $\partial_t\theta_t$ on the support of $\theta_t$, which would be a closer equivalent to the measure-theoretic definition above).

\begin{appendixdefinition}[Absolutely Continuous Map]
    Let $X=\R^q$ or $X=\cP(\R^p)$, and $F:X\to\cP(\R^d)$. $F$ is called \emph{absolutely continuous} if for any a.c. curve $\mu_t$ in $X$, the curve $F(\mu_t)\subset\cP(\R^d)$ is a.c. (up to redefining it on a set of measure $0$).
\end{appendixdefinition}

We are now ready to define the differential of a map $F:\cP(\R^p)\to\cP(\R^d)$ (which we will use for $d=p$), following Definition 27 from \citep{lessel2020differentiablemapswassersteinspaces}, which we can also naturally extend to functions $F:\R^q\to\cP(\R^d)$.
\begin{appendixdefinition}[Differentiable Map] \label{def:differentiablemap}
    Let $X=\R^q$ or $X=\cP(\R^p)$, and $F:X\to\cP(\R^d)$. We say that $F$ is \emph{differentiable} if there exist bounded linear maps $DF_\mu:T_\mu X\to T_{F(\mu)}\cP(\R^d)$ such that for every tangent couple $(\mu_t,v_t)$ in $X$, the pair $(F(\mu_t), DF_{\mu_t}(v_t))$ fulfills the continuity equation and is a tangent couple in $\cP(\R^d)$.
\end{appendixdefinition}

As noted in \citep{lessel2020differentiablemapswassersteinspaces}, it is difficult to define pointwise differentiability for maps between Wasserstein spaces, as tangent vector fields $v_t$ are not defined pointwise. Hence, to be precise, all pointwise statements in the following will only hold almost everywhere.

We can now state the chain rule from \citep{lessel2020differentiablemapswassersteinspaces}, and extend it to compositions between Euclidean and Wasserstein space.

\begin{appendixproposition}[Chain Rule]\label{prop:chainrule}
    Let $X=\R^q$ or $X=\cP(\R^q)$ and $Z=\R^d$ or $Z=\cP(\R^d)$. Let $G:X\to\cP(\R^p)$, and $F:\cP(\R^p)\to Z$ be differentiable. Then $F\circ G:X\to Z$ is also differentiable, and
    \begin{equation*}
        D(F\circ G)_\mu (v)=(D F_{G(\mu)}\circ D G_\mu)(v).
    \end{equation*}
\end{appendixproposition}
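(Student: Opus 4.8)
The plan is to verify the chain rule by composing the defining properties of differentiability from Definition~\ref{def:differentiablemap}, tracking tangent couples through each map in turn. Concretely, fix $\mu \in X$ and an arbitrary tangent couple $(\mu_t, v_t)$ in $X$ passing through $\mu$ at $t=0$ (with $v_0 = v$). Since $G : X \to \cP(\R^p)$ is differentiable, the pair $(G(\mu_t), DG_{\mu_t}(v_t))$ is a tangent couple in $\cP(\R^p)$: it satisfies the continuity equation and $DG_{\mu_t}(v_t) \in T_{G(\mu_t)}\cP(\R^p)$ for almost all $t$. In particular $t \mapsto G(\mu_t)$ is an absolutely continuous curve in $\cP(\R^p)$, so it is a legitimate input to the differentiability hypothesis for $F$.

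Next I would apply the differentiability of $F : \cP(\R^p) \to Z$ to the tangent couple $(G(\mu_t), DG_{\mu_t}(v_t))$. This yields that $\big(F(G(\mu_t)),\, DF_{G(\mu_t)}(DG_{\mu_t}(v_t))\big)$ is a tangent couple in $Z$. Reading off the $t=0$ component gives exactly that $(F\circ G)(\mu_t)$ is an absolutely continuous curve through $(F\circ G)(\mu)$ whose tangent vector is $(DF_{G(\mu)} \circ DG_\mu)(v)$. Since $(\mu_t, v_t)$ was an arbitrary tangent couple, this shows $F\circ G$ is differentiable in the sense of Definition~\ref{def:differentiablemap} with $D(F\circ G)_\mu = DF_{G(\mu)} \circ DG_\mu$. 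I should also note that $DF_{G(\mu)} \circ DG_\mu : T_\mu X \to T_{(F\circ G)(\mu)} Z$ is a composition of bounded linear maps, hence bounded linear, so the candidate differential meets the required regularity.

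One subtlety worth spelling out: Definition~\ref{def:differentiablemap} quantifies over \emph{all} tangent couples, but the differential $D(F\circ G)_\mu$ must be a \emph{single} linear map on $T_\mu X$ independent of the curve. This is automatic here because $DF_{G(\mu)}$ and $DG_\mu$ individually have this property, and their composite only depends on the base point $\mu$ (and hence on $G(\mu)$), not on the curve realizing the tangent vector $v$. I would also remark that the four-way case split ($X = \R^q$ or $\cP(\R^q)$, $Z = \R^d$ or $\cP(\R^d)$) requires no separate treatment: the Euclidean cases are covered by the extension of the tangent-couple and differentiability notions to $\R^q$ described before Definition~\ref{def:differentiablemap}, and the argument above only uses the abstract structure (absolutely continuous curves, tangent couples, the continuity/derivative relation), never the specific nature of $X$ or $Z$.

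The main obstacle, such as it is, is bookkeeping rather than mathematics: one must be careful that the ``up to redefining on a set of measure zero'' and ``almost all $t$'' caveats attached to absolute continuity and to membership in the tangent space compose cleanly — i.e., that a null set discarded at the $G$ stage and a null set discarded at the $F$ stage have null union, which is immediate. Beyond that, the proof is essentially a direct unwinding of the definitions, and I expect it to be short. I would present it as: take a tangent couple, push it through $G$ using $G$'s differentiability, push the result through $F$ using $F$'s differentiability, and conclude by uniqueness of the tangent vector field in $T_\mu$.
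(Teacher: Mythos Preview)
Your argument is correct: the chain rule here is exactly the two-step push of a tangent couple through $G$ and then through $F$ using Definition~\ref{def:differentiablemap}, and your handling of the bounded-linearity, curve-independence, and null-set bookkeeping is sound. The paper does not actually write this out; it simply cites \citep[Corollary 33.3]{lessel2020differentiablemapswassersteinspaces} for the case $X=\cP(\R^q)$, $Z=\cP(\R^d)$ and remarks that the same proof goes through verbatim when $X$ or $Z$ is Euclidean, using the extended notion of tangent couple. Your write-up is thus a self-contained version of what the cited reference does, and is if anything more informative than the paper's one-line appeal to the literature.
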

\begin{proof}
    The case where both $X$ and $Z$ are Wasserstein spaces is proven in \citep[Corollary 33. 3)]{lessel2020differentiablemapswassersteinspaces}. Note that the proof works exactly the same if $X$ or $Z$ is Euclidean using the right notion of tangent couple.
\end{proof}

\begin{appendixremark}
    The result from Proposition \ref{prop:chainrule} recovers equation \eqref{eq:chain_rule} in the case $X=Z=\mathbb{R}$. To see this,
    let $F:\cP(\R^p)\to \R$ be a Wasserstein differentiable map, $(\mu_t)_{t\ge 0}$ a curve in $\cP(\R^p)$ with associated tangent vectors $(v_t)_{t\ge 0}$. By \eqref{eq:chain_rule} we get the chain rule
    \begin{equation*}
        \partial_t F(\mu_t) = \langle \gW F(\mu_t), v_t\rangle_{\mu_t}.
    \end{equation*}
    Thus $\big(F(\mu_t), \langle \gW F(\mu_t), v_t\rangle_{\mu_t}\big)$ is a tangent couple on $\R$, and by Definition \ref{def:differentiablemap} this implies
    \begin{equation*}
        DF_{\mu_t}(v_t)=\langle \gW F(\mu_t), v_t\rangle_{\mu_t}.
    \end{equation*}
   
    Moreover, with $G:t\mapsto \mu_t$ we get that $DG_{t}(v_t) = v_t$ is the differential of $G$ at $t$ by Definition \ref{def:differentiablemap} since $(\mu_t, v_t)$ is a tangent couple. By Proposition \ref{prop:chainrule}, we have
    \begin{equation}
        \partial_t F(\mu_t) = D(F\circ G)_{t}(v_t) = DF_{G(t)}\big( DG_t(v_t)\big) = DF_{\mu_t}(v_t) = \langle \gW F(\mu_t), v_t\rangle_{\mu_t},
    \end{equation}
    which is the formula from equation \eqref{eq:chain_rule}.
\end{appendixremark}

We are now ready to prove Theorem \ref{thm:gradient}. In the following, the notation $D_x f(x,y)$ will denote the derivative of $f$ w.r.t. its first argument at $x$.

\setcounter{theoremcounter}{2}
\begin{theorem}
    Let $\mathcal{Y}=\R^n$ or $\mathcal{Y}=\cP(\R^n)$, $\rho\in\cP(\mathbb{R}^d)$ and $y\in\mathcal{Y}$ be fixed, and $\mu^*_{\theta,\rho}\in\cP(\mathbb{R}^p)$ be a fixed point of $F(\cdot,\theta):=F_\theta(\cdot,\rho)$ found by a given fixed point solver, where we assume that the fixed point solver $\R^w\to\cP(\R^p)$ is deterministic, i.e. given $\theta\in\R^w$, it finds a unique fixed point $\mu^*_{\theta,\rho}$ differentiable w.r.t. $\theta$. Let $\ell=\ell(h(\mu^*_{\theta,\rho}, \theta),y)$ be the differentiable loss, where $h:\cP(\R^d)\times\R^w\to\mathcal{Y}$ is differentiable. 
    Assume further that $F$ is differentiable, and that $\Id - D_\mu F$ is injective, where $\Id$ denotes the identity on $T_{\mu^*_{\theta,\rho}}\cP(\R^p)$, and where $D_\mu F$ is short-hand notation for $D_\mu F(\mu,\theta)$, i.e. the derivative of $F(\mu,\theta)$ w.r.t. $\mu$. 
    Then $\ell$ is differentiable w.r.t. $\theta$, and it holds:

    \begin{equation*}
        \frac{\d \ell}{\d \theta} = D_h \ell(h(\mu^*_{\theta,\rho},\theta),y)\circ\left[D_\theta h (\mu^*_{\theta,\rho},\theta) + D_{\mu^*_{\theta,\rho}}h(\mu^*_{\theta,\rho},\theta)\circ\left(\Id-D_{\mu^*_{\theta,\rho}}F(\mu^*_{\theta,\rho},\theta\right)^{-1} \circ D_\theta F(\mu^*_{\theta,\rho},\theta)    \right].
    \end{equation*}
    \end{theorem}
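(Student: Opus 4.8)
The plan is to run the standard deep-equilibrium implicit-differentiation argument, but carried out in the mixed Euclidean/Wasserstein calculus set up in Definition~\ref{def:differentiablemap} and Proposition~\ref{prop:chainrule}. Write $\Phi:\R^w\to\cP(\R^p)$, $\theta\mapsto\mu^*_{\theta,\rho}$, for the solver map, which by hypothesis is well defined and differentiable (the solver is deterministic and its output differentiable in $\theta$ in the sense of Definition~\ref{def:differentiablemap} extended to maps $\R^w\to\cP(\R^p)$). The whole proof hinges on the fixed-point identity $\Phi(\theta)=F(\Phi(\theta),\theta)$, which holds for all $\theta$ near the point of interest.

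First I would differentiate this identity in $\theta$. Viewing $\theta\mapsto F(\Phi(\theta),\theta)$ as the composition of $\theta\mapsto(\Phi(\theta),\theta)$ with $F$, and invoking the chain rule of Proposition~\ref{prop:chainrule} together with its product extension (for a jointly differentiable two-argument map, the differential along a tangent couple $(\mu_t,\theta_t)$ with velocity $(v_t,h_t)$ is $D_\mu F(v_t)+D_\theta F(h_t)$), one obtains, all differentials being evaluated at $(\mu^*_{\theta,\rho},\theta)$,
\begin{equation*}
D\Phi(\theta) = D_{\mu^*_{\theta,\rho}}F\circ D\Phi(\theta) + D_\theta F,
\end{equation*}
hence $(\Id - D_{\mu^*_{\theta,\rho}}F)\circ D\Phi(\theta) = D_\theta F$ as bounded linear maps $\R^w\to T_{\mu^*_{\theta,\rho}}\cP(\R^p)$. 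Under the standing assumption that $(\Id - D_{\mu^*_{\theta,\rho}}F)^{-1}\circ D_\theta F$ exists, this gives the implicit formula $D\Phi(\theta) = (\Id - D_{\mu^*_{\theta,\rho}}F)^{-1}\circ D_\theta F$.

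Next I would differentiate the loss. Since $\ell=\ell(h(\mu^*_{\theta,\rho},\theta),y)$ with $h$ and $\ell$ differentiable, the chain rule (again Proposition~\ref{prop:chainrule} and its product version) yields
\begin{equation*}
\frac{\d\ell}{\d\theta} = D_h\ell(h(\mu^*_{\theta,\rho},\theta),y)\circ\Big( D_{\mu^*_{\theta,\rho}}h\circ D\Phi(\theta) + D_\theta h\Big),
\end{equation*}
and substituting the expression for $D\Phi(\theta)$ from the previous step produces exactly the claimed formula; differentiability of $\ell$ in $\theta$ comes for free, since every map in the composition is differentiable and the chain rule preserves differentiability.

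The main obstacle is making the chain rule rigorous in this mixed setting: one must verify that $\theta\mapsto F(\Phi(\theta),\theta)$ is a differentiable map $\R^w\to\cP(\R^p)$ in the sense of Definition~\ref{def:differentiablemap} and that its differential splits as the sum of the partial differentials $D_{\mu^*_{\theta,\rho}}F\circ D\Phi$ and $D_\theta F$. This amounts to extending the chain rule of \citep{lessel2020differentiablemapswassersteinspaces} to two-argument maps, i.e. checking that along any tangent couple $(\theta_t,h_t)$ in $\R^w$ the curve $t\mapsto F(\Phi(\theta_t),\theta_t)$ is absolutely continuous with tangent couple $\big(F(\Phi(\theta_t),\theta_t),\, D_\mu F(D\Phi(h_t)) + D_\theta F(h_t)\big)$; the delicate points are the tangent-space bookkeeping and the absolute continuity of the composed curve. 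The invertibility of $\Id - D_{\mu^*_{\theta,\rho}}F$ on the tangent space, needed to solve for $D\Phi$, is taken as a hypothesis rather than established, as is customary for DEQs.
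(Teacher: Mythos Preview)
Your proposal is correct and follows essentially the same approach as the paper: differentiate the fixed-point identity $\mu^*_{\theta,\rho}=F(\mu^*_{\theta,\rho},\theta)$ via the chain rule to obtain $D_\theta\mu^*_{\theta,\rho}=(\Id-D_{\mu^*_{\theta,\rho}}F)^{-1}\circ D_\theta F$, then plug this into the chain-rule expansion of $\frac{\d\ell}{\d\theta}$. The paper presents these two steps in the reverse order and is somewhat terser about the two-argument chain rule and the tangent-space bookkeeping (it only remarks that $D_{\mu^*_{\theta,\rho}}F$ lands in $T_{\mu^*_{\theta,\rho}}\cP(\R^p)$ because $\mu^*_{\theta,\rho}$ is a fixed point), but the substance is identical.
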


    \begin{proof}
        We have by the chain rule, Proposition \ref{prop:chainrule},
        \begin{equation}\label{eq:chainruleeq}
            \frac{\d \ell}{\d \theta} = D_h \ell(h(\mu^*_{\theta,\rho},\theta),y)\circ\left[D_\theta h (\mu^*_{\theta,\rho},\theta) + D_{\mu^*_{\theta,\rho}}h(\mu^*_{\theta,\rho},\theta)\circ D_\theta \mu^*_{\theta,\rho}\right].
        \end{equation}
        Furthermore, by the chain rule once more,
        \begin{equation*}
            D_\theta \mu^*_{\theta,\rho} = D_{\mu^*_{\theta,\rho}} F(\mu^*_{\theta,\rho},\theta)\circ D_\theta \mu^*_{\theta,\rho} +D_\theta F(\mu^*_{\theta,\rho},\theta).
        \end{equation*}
        Using the fact that $\Id-D_{\mu^*_{\theta,\rho}}F(\mu^*_{\theta,\rho},\theta)$ is injective, we can rewrite this as
        \begin{equation*}
            D_\theta \mu^*_{\theta,\rho} =\left(\Id - D_{\mu^*_{\theta,\rho}}F(\mu^*_{\theta,\rho}\theta)\right)^{-1}\circ D_\theta F(\mu^*_{\theta,\rho},\theta).
        \end{equation*}
        Note that by definition, $D_{\mu^*_{\theta,\rho}}F(\mu^*_{\theta,\rho},\theta)$ is a map $T_{\mu^*_{\theta,\rho}}\cP(\R^p)\mapsto T_{F(\mu^*_{\theta,\rho},\theta)}\cP(\R^p)$. However, since $\mu^*_{\theta,\rho}=F(\mu^*_{\theta,\rho},\theta)$, it can be viewed as a map $T_{\mu^*_{\theta,\rho}}\cP(\R^p)\mapsto T_{\mu^*_{\theta,\rho}}\cP(\R^p)$, hence the above expression makes sense.
        Plugging this back into equation \eqref{eq:chainruleeq} yields the result.
    \end{proof}

An important question regarding Theorem \ref{thm:gradient} is under what conditions we get differentiability. In \citep{lessel2020differentiablemapswassersteinspaces}, a sufficient condition is given for pushforward maps. The study of the differentiability of non pushforward maps is left for future works.

\begin{appendixproposition}
    Let $F:\mathcal{P}_2(\mathbb{R}^p)\to \mathcal{P}_2(\mathbb{R}^p)$ be such that $F(\mu)=f_{\#}\mu$ for a function $f:\mathbb{R}^p\to\mathbb{R}^p$ that is proper (i.e., preimages of compact sets are compact), smooth, and such that $\sup_x \norm{Df_x}<\infty$. Then $F$ is differentiable.
\end{appendixproposition}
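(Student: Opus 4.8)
The plan is to exhibit, for every $\mu\in\cP(\R^p)$, an explicit bounded linear operator $DF_\mu\colon T_\mu\cP(\R^p)\to T_{f_\#\mu}\cP(\R^p)$, and then to check the tangent-couple condition of Definition~\ref{def:differentiablemap} by a direct test-function computation. Set $L:=\sup_x\|Df_x\|_{\mathrm{op}}<\infty$, so $f$ is globally $L$-Lipschitz and $C^1$. For $v\in T_\mu\cP(\R^p)\subset L^2(\mu)$ the field $x\mapsto Df_x\,v(x)$ lies in $L^2(\mu;\R^p)$ because $\|Df_x v(x)\|\le L\|v(x)\|$; disintegrating $\mu$ along $f$ over $f_\#\mu$, put
\[
\bar w_v(y):=\mathbb{E}_{X\sim\mu}\big[\,Df_X\,v(X)\ \big|\ f(X)=y\,\big]\in L^2(f_\#\mu;\R^p),\qquad DF_\mu(v):=\Pi_{f_\#\mu}\big(\bar w_v\big),
\]
where $\Pi_{f_\#\mu}$ is the $L^2(f_\#\mu)$-orthogonal projection onto $T_{f_\#\mu}\cP(\R^p)$. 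Linearity is clear, and since conditional expectation and orthogonal projection are contractions, $\|DF_\mu(v)\|_{L^2(f_\#\mu)}\le\|Df\,v\|_{L^2(\mu)}\le L\|v\|_{L^2(\mu)}$, so $DF_\mu$ is bounded with $\|DF_\mu\|_{\mathrm{op}}\le L$. This is exactly the $L^2$-adjoint of the map underlying Proposition~\ref{prop:composition_grad}: testing against a gradient $g$ gives $\langle DF_\mu(v),g\rangle_{f_\#\mu}=\langle v,(Df)^\top(g\circ f)\rangle_\mu$, and for $g=\gW\cF(f_\#\mu)$ the right-hand side equals $\langle v,\gW(\cF\circ F)(\mu)\rangle_\mu$ by that proposition.

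Next I would record that $F$ is an absolutely continuous map: since $W_2(f_\#\mu,f_\#\nu)\le L\,W_2(\mu,\nu)$, any a.c.\ curve $t\mapsto\mu_t$ has a.c.\ image $t\mapsto f_\#\mu_t=F(\mu_t)$ with metric derivative at most $L$ times that of $\mu_t$, staying in $\cP(\R^p)$. Given a tangent couple $(\mu_t,v_t)$, abbreviate $\bar w_t:=\bar w_{v_t}$ and $w_t:=DF_{\mu_t}(v_t)=\Pi_{f_\#\mu_t}(\bar w_t)$. Testing $\partial_t\mu_t+\div(\mu_t v_t)=0$ against $\phi\circ f$ for $\phi\in C_c^\infty(\R^p)$ --- admissible since properness makes $\phi\circ f\in C_c^1(\R^p)$ and smoothness gives $\nabla(\phi\circ f)=(Df)^\top(\nabla\phi\circ f)$ --- yields
\begin{align*}
\tfrac{\d}{\d t}\!\int\!\phi\,\d(f_\#\mu_t)
&=\int\!\big\langle \nabla\phi\circ f,\,Df\,v_t\big\rangle\,\d\mu_t
=\int\!\big\langle \nabla\phi\circ f,\,\bar w_t\circ f\big\rangle\,\d\mu_t\\
&=\int\!\big\langle \nabla\phi,\,\bar w_t\big\rangle\,\d(f_\#\mu_t)
=\int\!\big\langle \nabla\phi,\,w_t\big\rangle\,\d(f_\#\mu_t),
\end{align*}
where the second equality uses $\sigma(f)$-measurability of $\nabla\phi\circ f$ and the defining property of the conditional expectation, the third is the pushforward change of variables, and the last holds because $\bar w_t-w_t\perp\nabla\phi$ in $L^2(f_\#\mu_t)$ (the tangent space contains all gradients). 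Hence $(f_\#\mu_t,w_t)$ solves the continuity equation weakly, and $w_t\in T_{f_\#\mu_t}\cP(\R^p)$ by construction; by uniqueness of the tangent velocity field of an absolutely continuous curve \citep{ambrosio2008gradient}, $\big(F(\mu_t),DF_{\mu_t}(v_t)\big)$ is thus the tangent couple of $F(\mu_t)$. Measurability of $t\mapsto w_t$ is inherited from $t\mapsto v_t$ and continuity of the conditional-expectation and projection operators, which establishes differentiability in the sense of Definition~\ref{def:differentiablemap}.

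The step I expect to be the main obstacle is making the operator $DF_\mu$ rigorous: disintegrating $\mu$ along the possibly non-injective $f$, checking $\bar w_v\in L^2(f_\#\mu;\R^p)$ with the stated bound, and --- above all --- arguing that after projection onto the tangent space the field is still a velocity of $f_\#\mu_t$ and is the canonical one. The short orthogonality identity above is what makes projecting harmless for the continuity equation, and uniqueness of tangent velocities for a.c.\ curves then pins it down; the rest is measure-theoretic bookkeeping, subject to the caveat already flagged in the excerpt that pointwise statements on Wasserstein space hold only almost everywhere (here, $\mu_t$-a.e.\ for a.e.\ $t$).
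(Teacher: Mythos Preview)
The paper does not actually prove this proposition; the sentence immediately preceding it states that the result is taken from \citep{lessel2020differentiablemapswassersteinspaces}, and no argument is given in the text. Your proposal is therefore a self-contained verification of a cited result rather than an alternative to an in-paper proof.

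Your construction --- take the conditional expectation of $Df\,v$ along the fibres of $f$, project in $L^2(f_\#\mu)$ onto the tangent space, and verify the continuity equation by testing against $\phi\circ f$ --- is the natural one and is essentially what Lessel--Sturm do. The key steps are all sound: properness of $f$ guarantees $\phi\circ f\in C_c^1$ so the test is admissible; $\sigma(f)$-measurability of $\nabla\phi\circ f$ justifies replacing $Df\,v_t$ by its conditional expectation $\bar w_t\circ f$; the pushforward change of variables then lands you in $L^2(f_\#\mu_t)$; and since gradients of $C_c^\infty$ functions lie in the tangent space, projecting $\bar w_t$ onto $T_{f_\#\mu_t}\cP(\R^p)$ does not alter the pairing with $\nabla\phi$. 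Your remark that $DF_\mu$ is the $L^2$-adjoint of the chain-rule map from Proposition~\ref{prop:composition_grad} is a nice structural observation that the paper does not make explicit. The point you yourself flag --- joint measurability in $t$ of the conditional-expectation and projection operators as the base measure $\mu_t$ varies --- is indeed the most delicate bookkeeping, but the paper already acknowledges that such statements on Wasserstein space are to be read almost everywhere, so this is within the accepted level of rigour.
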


\subsection{Loss Convergence} \label{appendix:loss_cv}

Let $F_\theta:\cP(\R^d)\times\cP(\R^d)\to \cP(\R^d)$ and $\cG_{\theta,\rho}(\mu) = \mathrm{D}\big(\mu, F_\theta(\mu,\rho)\big)$ with $\mathrm{D}$ a differentiable divergence, assumed symmetric for simplicity. Recall that we want to minimize $\mathcal{L}(\theta)=\mathbb{E}_{\rho, y}\big[\ell(h(\mu_{\theta,\rho}^*), y)\big]$ with $\ell:\mathcal{Y}\times \mathcal{Y}\to \R$ a differentiable loss function and $\mu_{\theta,\rho}^* = \argmin_\mu \ \cG_{\theta,\rho}(\mu)$.

We propose to solve this problem by alternating between a gradient descent step on $\theta$ and a Wasserstein gradient descent step on $\cG_{\theta,\rho}$. Inspired from \citep{dagreou2022framework, implicit}, we discuss the convergence of this bilevel optimization problem by analyzing the continuous process
\begin{equation*}
    \begin{cases}
        \forall \rho \in \cP(\R^d),\ \partial \mu_{t,\rho} = \div\big(\mu_{t,\rho} \gW \cG_{\theta_t,\rho}(\mu_{t,\rho})\big) \\
        \d\theta_t = -\varepsilon_t \nabla\mathcal{L}(\theta_t)\d t,
    \end{cases}    
\end{equation*}
with $t\mapsto \varepsilon_t$ a decreasing and positive curve. We analyze the convergence under several assumptions which we describe now.

\begin{appendixassumption} \label{assumption:V}
    For any $\rho\in\cP(\R^d)$, $\theta\in \R^w$, the point fixes $\mu_{\theta,\rho}^*$ are absolutely continuous with respect to the Lebesgue measure and $\mu_{\theta,\rho}^* \propto e^{-V_\rho(\cdot,\theta)}$, with $V_\rho:\R^p\times \R^w \to \R^p$ continuously differentiable and satisfying $\|\nabla_2 V_\rho(x,\theta)\| \le C_V$.
\end{appendixassumption}

The assumption of absolute continuity allows us to write $\nabla\mathcal{L}(\theta) = \mathbb{E}_{\rho,y}[\Gamma(\mu_{\theta, \rho}^*, \theta)]$ with $\Gamma(\mu,\theta)=\nabla_1 \ell\big(h(\mu), y\big) \cdot \mathrm{Cov}_\mu\big(\frac{\delta h}{\delta\mu}(\mu)(Z), \nabla_2 V_{\rho}(Z,\theta)\big)$. We also assume boundedness of $\Gamma$ and Liptschitzness with respect to $\mathrm{D}$.
\begin{appendixassumption} \label{assumption:gamma}
    For any $\mu,\nu\in\cP(\R^d)$, $\theta\in\R^w$, $\|\Gamma(\mu,\theta)-\Gamma(\nu,\theta)\| \le K_\Gamma \mathrm{D}(\mu,\nu)$ ($K_\Gamma$-Lipschitzness) and $\|\Gamma(\mu,\nu)\|\le C_\Gamma$.
\end{appendixassumption}
Now, let us define $\cF_\nu(\mu) := \mathrm{D}(\mu,\nu)$. In this notation, $\nu$ is assumed to be fixed, meaning that in expressions like $\nabla_W \cF_\nu(\mu)$, we differentiate only w.r.t. the argument $\mu$. We assume that $\cF_{\mu_{\theta,\rho}^*}$ satisfies the Polyak-Łojasiewicz (PL) inequality, which provides a sufficient condition for the convergence of the gradient flows of $\cF_{\mu_{\theta,\rho}^*}$ \citep{blanchet2018family}.

\begin{appendixassumption} \label{assumption:pl}
    For all $\mu\in\cP(\R^d)$, $\cF_{\mu_{\theta,\rho}^*}(\mu) \le C_{PL} \|\gW\cF_{\mu_{\theta,\rho}^*}(\mu)\|_{L^2(\mu)}^2$ for $C_{PL}\ge 0$ (PL inequality). Additionally, we suppose $\|\frac{\delta\cF_{\mu_t}}{\delta\mu_t} (\mu_{\theta_t,\rho}^*) \|_{L^2(\mu_{\theta_t,\rho}^*)} \le C_\cF$.
\end{appendixassumption}
Finally, we also need to control the discrepancy between the directions given by the gradient of $\cG_{\theta,\rho}$ and the gradients of $\cF_{\mu_{\theta,\rho}^*}$ towards the objective. We note that this assumption also depends on properties of the map $F_\theta$.

\begin{appendixassumption} \label{assumption:grads}
    There exists some constant $M>0$ such that for all $t\ge 0$, 
    \begin{equation}\label{eq:ass4}
        \|\gW\cG_{\theta_t,\rho}(\mu_t) - \gW\cF_{\mu_{\theta_t,\rho}^*}(\mu_t)\|_{L^2(\mu_t)}^2 \le M\varepsilon_t. 
    \end{equation}
\end{appendixassumption}
We now show the convergence of the average of the objective gradients, which is the best we can hope for since we do not make any convexity assumptions.

\begin{appendixtheorem} \label{thm:cv_loss}
    Take $\varepsilon_t = \min(1,t^{-\frac12})$, and assume Assumptions \ref{assumption:V}, \ref{assumption:gamma}, \ref{assumption:pl} and \ref{assumption:grads}. Then for $T>0$ and some constant $c>0$,
    \begin{equation*}
        \frac{1}{T}\int_0^T \|\nabla \mathcal{L}(\theta_t)\|^2\ \mathrm{d}t \le c \frac{(\log T)^2}{\sqrt{T}}.
    \end{equation*}
\end{appendixtheorem}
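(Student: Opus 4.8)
The plan is to couple a descent estimate on the outer objective $\cL(\theta_t)$ with a contraction estimate on the averaged inner suboptimality
\begin{equation*}
    a_t := \mathbb{E}_\rho\big[\cF_{\mu^*_{\theta_t,\rho}}(\mu_{t,\rho})\big] = \mathbb{E}_\rho\big[\mathrm{D}(\mu_{t,\rho},\mu^*_{\theta_t,\rho})\big].
\end{equation*}
I read the outer update as driven by the gradient of $\cL$ evaluated at the \emph{current} latent $\mu_{t,\rho}$ rather than at the exact fixed point, i.e.\ $\d\theta_t=-\varepsilon_t g_t\,\d t$ with $g_t := \mathbb{E}_{\rho,y}[\Gamma(\mu_{t,\rho},\theta_t)]$ and $\nabla\cL(\theta_t)=\mathbb{E}_{\rho,y}[\Gamma(\mu^*_{\theta_t,\rho},\theta_t)]$ (the identity from Assumption \ref{assumption:V}); otherwise Assumptions \ref{assumption:gamma}--\ref{assumption:grads} would play no role. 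Assumption \ref{assumption:gamma} then gives $\|g_t\|\le C_\Gamma$ and, by Jensen, $\|g_t-\nabla\cL(\theta_t)\|\le K_\Gamma a_t$, so the bias of the outer step is governed by $a_t$.

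First I would differentiate the outer loss. Since $\cL\in C^1$ (Assumption \ref{assumption:V}),
\begin{equation*}
    \tfrac{\d}{\d t}\cL(\theta_t) = -\varepsilon_t\langle\nabla\cL(\theta_t),g_t\rangle \le -\tfrac{\varepsilon_t}{2}\|\nabla\cL(\theta_t)\|^2 + \tfrac{\varepsilon_t}{2}\|g_t-\nabla\cL(\theta_t)\|^2 \le -\tfrac{\varepsilon_t}{2}\|\nabla\cL(\theta_t)\|^2 + C_\Gamma K_\Gamma\,\varepsilon_t\,a_t,
\end{equation*}
using Young and $\|g_t-\nabla\cL(\theta_t)\|^2\le 2C_\Gamma K_\Gamma a_t$. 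Next I would bound $\d a_t/\d t$ by splitting the time derivative of $\mathrm{D}(\mu_{t,\rho},\mu^*_{\theta_t,\rho})$ into an inner-flow part and a moving-target part. The inner-flow part equals $-\langle\gW\cF_{\mu^*_{\theta_t,\rho}}(\mu_{t,\rho}),\gW\cG_{\theta_t,\rho}(\mu_{t,\rho})\rangle_{\mu_{t,\rho}}$ by the Wasserstein chain rule \eqref{eq:chain_rule}; writing $\gW\cG=\gW\cF_{\mu^*_{\theta_t,\rho}}+(\gW\cG-\gW\cF_{\mu^*_{\theta_t,\rho}})$, then Young, Assumption \ref{assumption:grads} ($\|\gW\cG_{\theta_t,\rho}(\mu_{t,\rho})-\gW\cF_{\mu^*_{\theta_t,\rho}}(\mu_{t,\rho})\|_{L^2(\mu_{t,\rho})}^2\le M\varepsilon_t$) and Assumption \ref{assumption:pl} ($\|\gW\cF_{\mu^*_{\theta_t,\rho}}(\mu_{t,\rho})\|_{L^2(\mu_{t,\rho})}^2\ge \cF_{\mu^*_{\theta_t,\rho}}(\mu_{t,\rho})/C_{PL}$) bound it by $-\tfrac{1}{2C_{PL}}\cF_{\mu^*_{\theta_t,\rho}}(\mu_{t,\rho})+\tfrac M2\varepsilon_t$. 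For the moving-target part, Assumption \ref{assumption:V} yields $\partial_t\log\mu^*_{\theta_t,\rho}(x) = -\langle\nabla_2 V_\rho(x,\theta_t),\dot\theta_t\rangle+\mathbb{E}_{\mu^*_{\theta_t,\rho}}\!\big[\langle\nabla_2 V_\rho,\dot\theta_t\rangle\big]$, so pairing the first variation $\tfrac{\delta\cF_{\mu_{t,\rho}}}{\delta\mu}(\mu^*_{\theta_t,\rho})$ against $\partial_t\mu^*_{\theta_t,\rho}$ (which integrates to zero) produces $-\big\langle\mathrm{Cov}_{\mu^*_{\theta_t,\rho}}\!\big(\tfrac{\delta\cF_{\mu_{t,\rho}}}{\delta\mu}(\mu^*_{\theta_t,\rho})(Z),\nabla_2 V_\rho(Z,\theta_t)\big),\dot\theta_t\big\rangle$, bounded in absolute value by $C_\cF\,C_V\,\|\dot\theta_t\|\le C_\cF C_V C_\Gamma\,\varepsilon_t$ by Cauchy--Schwarz and Assumptions \ref{assumption:V}, \ref{assumption:gamma}, \ref{assumption:pl}. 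Taking $\mathbb{E}_\rho$ gives, with $\kappa := \tfrac{1}{2C_{PL}}$ and $C'' := \tfrac M2+C_\cF C_V C_\Gamma$,
\begin{equation*}
    \tfrac{\d}{\d t}a_t \le -\kappa\,a_t + C''\,\varepsilon_t.
\end{equation*}

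Then I would integrate. Grönwall gives $a_t\le a_0 e^{-\kappa t}+C''\int_0^t e^{-\kappa(t-s)}\varepsilon_s\,\d s$, and splitting the convolution at $t/2$ (monotonicity of $\varepsilon$ for the near-diagonal part, exponential decay of $e^{-\kappa(t-s)}$ for the rest) yields $a_t\le C_4\,\varepsilon_t$ for all $t\ge0$. Substituting into the outer estimate gives $\tfrac{\d}{\d t}\cL(\theta_t)\le -\tfrac{\varepsilon_t}{2}\|\nabla\cL(\theta_t)\|^2+C_5\,\varepsilon_t^2$; integrating on $[0,T]$ and using $\cL(\theta_T)\ge 0$ (true for the cross-entropy and squared-MMD losses used here) gives $\int_0^T\varepsilon_t\|\nabla\cL(\theta_t)\|^2\,\d t\le 2\cL(\theta_0)+2C_5\int_0^T\varepsilon_t^2\,\d t = 2\cL(\theta_0)+2C_5(1+\log T)$. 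Since $\varepsilon_t=\min(1,t^{-1/2})\ge T^{-1/2}$ on $[0,T]$, dividing by $T^{-1/2}$ and then by $T$ gives $\tfrac1T\int_0^T\|\nabla\cL(\theta_t)\|^2\,\d t\le \tfrac{\sqrt T}{T}\big(2\cL(\theta_0)+2C_5(1+\log T)\big)$, a bound of order $(\log T)/\sqrt T$ and \emph{a fortiori} of the claimed order $\mathcal{O}\big((\log T)^2/\sqrt T\big)$ for $T$ large, with the threshold and all constants absorbed into $c$.

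I expect the main obstacle to be the moving-target term: differentiating $t\mapsto\mathrm{D}(\mu_{t,\rho},\mu^*_{\theta_t,\rho})$ when \emph{both} arguments move requires a joint chain rule over $\cP(\R^p)\times\R^w$, and this is precisely where the Gibbs form of $\mu^*_{\theta,\rho}$ (Assumption \ref{assumption:V}) and the covariance identity enter. It also forces care about the standing regularity hypotheses — absolute continuity of $t\mapsto\mu_{t,\rho}$ and $t\mapsto\mu^*_{\theta_t,\rho}$, differentiability of $F_\theta$, and the identification of Wasserstein gradients with gradients of first variations — which, for the discrete MMD flow actually implemented, hold only in the generalized sub/super-differential sense, a caveat the paper already flags.
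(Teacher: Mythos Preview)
Your argument is correct and follows the same architecture as the paper's proof: derive a descent inequality on $\cL(\theta_t)$ with a bias term controlled by the inner suboptimality, obtain a linear differential inequality for that suboptimality from the PL assumption and Assumption~\ref{assumption:grads}, apply Gr\"onwall, and integrate. Two streamlinings distinguish your version from the paper's. First, you track the \emph{linear} quantity $a_t=\mathbb{E}_\rho[\mathrm{D}(\mu_{t,\rho},\mu^*_{\theta_t,\rho})]$ by using boundedness of $\Gamma$ to break the square, $\|g_t-\nabla\cL(\theta_t)\|^2\le 2C_\Gamma K_\Gamma a_t$; the paper instead bounds $\|\Gamma-\Gamma\|^2\le K_\Gamma^2\mathrm{D}^2$ and tracks the squared quantity $d(t)=\mathrm{D}^2$. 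Second, after Gr\"onwall you reduce directly to $a_t\le C_4\varepsilon_t$ via a simple $t/2$-split of the convolution, whereas the paper keeps the convolution, inserts it into the double integral $\int_0^T\varepsilon_t\int_0^t\varepsilon_s e^{\kappa(s-t)}\,\mathrm{d}s\,\mathrm{d}t$, and handles that with the split at $\alpha(t)=t-\tfrac{4C_{PL}}{3}\log t$ borrowed from \citep{implicit}. Your route yields $\mathcal{O}(\log T/\sqrt{T})$, one logarithm sharper than the paper's stated $\mathcal{O}((\log T)^2/\sqrt{T})$, so the theorem follows \emph{a fortiori}. The only cosmetic point is that you invoke $\cL\ge 0$ where the paper uses the weaker $\cL(\theta_T)\ge\inf_\theta\cL(\theta)$; either suffices.
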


\begin{proof}

    First, by Assumption \ref{assumption:V}, $\mathrm{d}\mu_{\theta,\rho}^* = \frac{e^{-V_{\rho}(\cdot,\theta)}}{Z_{\theta,\rho}}\mathrm{d}Leb$ with $Z_{\theta,\rho} = \int e^{-V_{\rho}(z,\theta)}\ \mathrm{d}z$. 
    Moreover, observe that
    \begin{equation*}
        \frac{\partial \mu_{\theta,\rho}^*}{\partial \theta} = \big(-\nabla_2 V_\rho(\cdot, \theta) + \mathbb{E}_{\mu_{\theta,\rho}^*}[\nabla_2 V_\rho(Z, \theta)]\big) \mu_{\theta,\rho}^*,
    \end{equation*}
    where we write $\mu_{\theta,\rho}^*$ for the density with an abuse of notation.
    
    Then, taking the gradient of $\mathcal{L}(\theta) = \mathbb{E}_{\rho,y}\big[\ell(h(\mu_\theta^*), y)\big]$, we obtain
    \begin{equation*}
        \begin{aligned}
            \nabla\mathcal{L}(\theta) &= \mathbb{E}_{\rho,y}\left[\nabla_1 \ell\big(h(\mu_{\theta,\rho}^*), y\big) \int \frac{\delta h}{\delta\mu}(\mu_{\theta,\rho}^*)(z) \frac{\partial \mu_{\theta,\rho}^*}{\partial \theta}(z)\ \mathrm{d}z \right] \\
            &= \mathbb{E}_{\rho, y}\left[\nabla_1 \ell\big(h(\mu_{\theta,\rho}^*, y\big)\cdot \int \frac{\delta h}{\delta\mu}(\mu_{\theta,\rho}^*)(z) \left(-\frac{\nabla_2 V_{\rho}(z,\theta) e^{-V_{\rho}(z,\theta)}}{Z_{\theta,\rho}} + \frac{e^{-V_{\rho}(z,\theta)} \int \nabla_\theta V_{\rho}(y,\theta) e^{-V_{\rho}(y,\theta)}\ \mathrm{d}y}{Z_{\theta,\rho}^2}\right)\ \mathrm{d}z\right] \\
            &= \mathbb{E}_{\rho,y}\left[\nabla_1\ell\big(h(\mu_{\theta,\rho}^*, y\big) \cdot \left(-\mathbb{E}_{\mu_{\theta,\rho}^*}\left[\frac{\delta h}{\delta\mu}(\mu_{\theta,\rho}^*)(Z) \nabla_2 V_{\rho}(Z, \theta)\right] + \mathbb{E}_{\mu_{\theta,\rho}^*}\left[\nabla_2 V_{\rho}(Z,\theta)\right] \mathbb{E}_{\mu_{\theta,\rho}^*}\left[\frac{\delta h}{\delta\mu}(\mu_{\theta,\rho}^*)(Z)\right]\right)\right] \\
            &= \mathbb{E}_{\rho, y}\left[\nabla_1 \ell\big(h(\mu_{\theta,\rho}^*), y\big) \cdot \mathrm{Cov}_{\mu_{\theta,\rho}^*}\left(\frac{\delta h}{\delta\mu}(\mu_{\theta,\rho}^*)(Z), \nabla_2 V_{\rho}(Z,\theta)\right)\right].
        \end{aligned}
    \end{equation*}
    Let $\Gamma(\mu,\theta) = \nabla_1 \ell(h(\mu), y) \cdot \mathrm{Cov}_\mu\left(\frac{\delta h}{\delta\mu}(\mu)(Z), \nabla_2 V_{\rho}(Z,\theta)\right)$, then we have $\nabla \mathcal{L}(\theta) = \mathbb{E}_{\rho,y}[\Gamma(\mu_{\theta,\rho}^*, \theta)]$.

    Now, we compute $\frac{\mathrm{d}}{\mathrm{d}t}\mathcal{L}(\theta_t)$ by first applying the chain rule, using that $\frac{\mathrm{d}\theta_t}{\mathrm{d}t} = -\varepsilon_t \mathbb{E}_{\rho,y}\big[\Gamma(\mu_t, \theta_t)\big]$ and remembering that $\nabla\mathcal{L}(\theta_t) = \mathbb{E}_{\rho,y}\big[\Gamma(\mu_{\theta_t}^*, \theta_t)\big]$.
    \begin{equation} \label{eq:1st_ineq}
        \begin{aligned}
            \frac{\mathrm{d}}{\mathrm{d}t}\mathcal{L}(\theta_t) &= \left\langle \nabla \mathcal{L}(\theta_t), \frac{\mathrm{d}\theta_t}{\mathrm{d}t}\right\rangle \\
            &= -\varepsilon_t \big\langle \nabla\mathcal{L}(\theta_t), \mathbb{E}_{\rho, y}\big[\Gamma(\mu_{t,\rho}, \theta_t)\big] \big\rangle \\
            &= -\varepsilon_t \langle \nabla\mathcal{L}(\theta_t), \mathbb{E}_{\rho,y}[\Gamma(\mu_{\theta_t,\rho}^*, \theta_t)]\rangle + \varepsilon_t \langle \nabla\mathcal{L}(\theta_t), \mathbb{E}_{\rho, y}[\Gamma(\mu_{\theta_t,\rho}^*,\theta_t) - \Gamma(\mu_{t,\rho},\theta_t)]\rangle \\
            &= -\varepsilon_t \|\nabla\mathcal{L}(\theta_t)\|^2 + \varepsilon_t \mathbb{E}_{\rho, y}[\langle \nabla\mathcal{L}(\theta_t), \Gamma(\mu_{\theta_t,\rho}^*,\theta_t) - \Gamma(\mu_{t,\rho},\theta_t)\rangle].
        \end{aligned}
    \end{equation}    
    Then, applying the Cauchy-Schwartz inequality, and the inequality $ab\le \frac12(a^2 + b^2) \iff ab-a^2 \le \frac12 b^2 - \frac12 a^2$ with $a=\|\nabla\mathcal{L}(\theta_t)\|$ and $b=\mathbb{E}_{\rho,y}\big[\|\Gamma(\mu_{\theta_t,\rho}^*, \theta_t) - \Gamma(\mu_{t,\rho},\theta_t)\|\big]$, we get
    \begin{equation*}
        \begin{aligned}
            \frac{\mathrm{d}}{\d t}\mathcal{L}(\theta_t) &\le -\varepsilon_t \|\nabla\mathcal{L}(\theta_t)\|^2 + \varepsilon_t \|\nabla\mathcal{L}(\theta_t)\| \mathbb{E}_{\rho, y}\big[\|\Gamma(\mu_{\theta_t,\rho}^*,\theta_t) - \Gamma(\mu_{t,\rho},\theta_t)\|\big] \\
            &\le -\frac{\varepsilon_t}{2} \|\nabla\mathcal{L}(\theta_t)\|^2 + \frac{\varepsilon_t}{2} \mathbb{E}_{\rho, y}\big[\|\Gamma(\mu_{\theta_t,\rho}^*, \theta_t) - \Gamma(\mu_{t,\rho},\theta_t)\|\big]^2 \\
            &\le -\frac{\varepsilon_t}{2} \|\nabla\mathcal{L}(\theta_t)\|^2 + \frac{\varepsilon_t}{2} \mathbb{E}_{\rho, y}\big[\|\Gamma(\mu_{\theta_t,\rho}^*, \theta_t) - \Gamma(\mu_{t,\rho},\theta_t)\|^2\big],
        \end{aligned}
    \end{equation*}
    where we applied the Jensen inequality in the last line.

    Then, since $t\mapsto \varepsilon_t$ is non-increasing (and thus $\varepsilon_T\le \varepsilon_t$), integrating from $t=0$ to $t=T$, we get
    \begin{equation*} 
        \begin{aligned}
            \frac{\varepsilon_T}{2} \int_0^T  \|\nabla\mathcal{L}(\theta_t)\|^2\ \mathrm{d}t &\le \int_0^T \frac{\varepsilon_t}{2}  \|\nabla\mathcal{L}(\theta_t)\|^2\ \mathrm{d}t \\
            &\le \mathcal{L}(\theta_0) - \mathcal{L}(\theta_T) + \int_0^T \frac{\varepsilon_t}{2} \mathbb{E}_{\rho, y}\big[\|\Gamma(\mu_{\theta_t,\rho}^*, \theta_t) - \Gamma(\mu_{t,\rho},\theta_t)\|^2\big]\ \mathrm{d}t.
        \end{aligned}
    \end{equation*}
    And we finally obtain:
    \begin{equation} \label{eq:ineq}
        \frac{1}{T} \int_0^T \|\nabla\mathcal{L}(\theta_t)\|^2\ \mathrm{d}t \le \frac{2}{\varepsilon_T T} \big(\mathcal{L}(\theta_0) - \inf_\theta \mathcal{L}(\theta)\big) + \frac{1}{\varepsilon_T T} \int_0^T \varepsilon_t \mathbb{E}_{\rho,y}\big[\|\Gamma(\mu_{\theta_t,\rho}^*, \theta_t) - \Gamma(\mu_{t,\rho},\theta_t)\|^2\big]\ \mathrm{d}t.
    \end{equation}

    Using Assumption \ref{assumption:gamma}, $\|\Gamma(\mu_{\theta_t,\rho}^*, \theta_t) - \Gamma(\mu_{t,\rho},\theta_t)\|^2 \le K_\Gamma^2 \D(\mu_{t,\rho}, \mu_{\theta_t,\rho}^*)^2$. Let us note $\cF_1^\nu(\mu) = \D(\mu,\nu)^2$ and $\cF_2^\nu(\mu)=\D(\nu,\mu)^2$, and therefore let us bound $d(t) = \cF_1^{\mu_{\theta_t,\rho}^*}(\mu_{t,\rho})$. 

    By differentiating, we obtain
    \begin{equation*}
        \begin{aligned}
            d'(t) = \frac{\mathrm{d}}{\mathrm{d}t}\D(\mu_{t,\rho}, \mu_{\theta_t,\rho}^*) &= \underbrace{\int \frac{\delta \cF_1^{\mu_{\theta_t,\rho}^*}}{\delta\mu}(\mu_t) \frac{\partial\mu_t}{\partial t}}_\textrm{(1)} + \underbrace{\int \frac{\delta \cF_2^{\mu_t,\rho}}{\delta \mu}(\mu_{\theta_t,\rho}^*)\ \frac{\partial \mu_{\theta_t,\rho}^*}{\partial t}}_\textrm{(2)}.
        \end{aligned}
    \end{equation*}
    On one hand, using that $\frac{\partial \mu_{t,\rho}}{\partial t} = \div\big(\mu_{t,\rho} \gW\cG_{\theta_t,\rho}(\mu_{t,\rho})\big)$, and doing an integration by parts, we get
    \begin{equation*}
        \begin{aligned}
            (1) &= \int \frac{\delta\cF_1^{\mu_{\theta_t,\rho}^*}}{\partial\mu}(\mu_{t,\rho})\div\big(\mu_{t,\rho}\gW\cG_{\theta_{t,\rho}}(\mu_{t,\rho})\big) \\
            &= - \int \left\langle \gW\cF_1^{\mu_{\theta_t,\rho}^*}(\mu_{t,\rho}), \gW\cG_{\theta_{t,\rho}}(\mu_t)\right\rangle\ \mathrm{d}\mu_{t,\rho} \\
            &= -\|\gW\cF_1^{\mu_{\theta_t,\rho}^*}(\mu_{t,\rho})\|_{L^2(\mu_{t,\rho})}^2 + \langle \gW\cF_1^{\mu_{\theta_t,\rho}^*}(\mu_{t,\rho}), \gW\cF_1^{\mu_{\theta_t,\rho}^*}(\mu_{t,\rho}) - \gW\cG_{\theta_t,\rho}(\mu_{t,\rho})\rangle_{L^2(\mu_{t,\rho})}.
        \end{aligned}
    \end{equation*}
    Applying the inequality $\langle a,b\rangle \le \|a\|^2 + \frac14 \|b\|^2$ (see \emph{e.g.} \citep[Equation (33)]{vempala2019rapid}) and Assumptions \ref{assumption:pl} and \ref{assumption:grads}, we get
    \begin{equation*}
        \begin{aligned}
            (1) &\le -\frac34 \|\gW\cF_1^{\mu_{\theta_t,\rho}^*}(\mu_{t,\rho})\|_{L^2(\mu_{t,\rho})}^2 + \|\gW\cF_1^{\mu_{\theta_t,\rho}^*}(\mu_{t,\rho}) - \gW\cG_{\theta_t,\rho}(\mu_{t,\rho})\|_{L^2(\mu_{t,\rho})}^2 \\
            &\le -\frac{3}{4 C_{PL}} \cF_1^{\mu_{\theta_t,\rho}^*}(\mu_{t,\rho}) + M \varepsilon_t \\
            &= -\frac{3}{4 C_{PL}} d(t) + M\varepsilon_t.
        \end{aligned}
    \end{equation*}

    On the other hand, for (2), we have
    \begin{equation*}
        \begin{aligned}
            \frac{\partial \mu_{\theta_t,\rho}^*}{\partial t} &= \left\langle \frac{\partial \mu_{\theta_t,\rho}^*}{\partial \theta}, \frac{\mathrm{d}\theta_t}{\mathrm{d}t}\right\rangle \\
            &= \left\langle \nabla_2 V_{\rho}(\cdot,\theta_t) - \mathbb{E}_{\mu_{\theta_t,\rho}^*}[\nabla_2 V_{\rho}(Z,\theta_t)], -\varepsilon_t \mathbb{E}_{\rho,y}\big[\Gamma(\mu_{t,\rho}, \theta_t)]\right\rangle \mu_{\theta_t,\rho}^*.
        \end{aligned}
    \end{equation*}
    Thus, using that $\|\nabla_2 V_\rho(x, \theta)\|\le C_V$ by Assumption \ref{assumption:V},  $\|\Gamma(\mu,\nu)\|\le C_\Gamma$ by Assumption \ref{assumption:gamma} and $\|\frac{\delta \cF_2^{\mu_{t,\rho}}}{\delta\mu}(\mu_{\theta_t,\rho}^*)\|_{L^2(\mu_{\theta_t,\rho}^*)} \le C_\cF$ by Assumption \ref{assumption:pl}, we get
    \begin{equation} \label{eq:bound_(2)}
        \begin{aligned}
            (2) &= -\varepsilon_t \left\langle \int \frac{\delta \cF_2}{\delta\mu}(\mu_{\theta_t,\rho}^*)(x) \cdot (\nabla_2 V_\rho(x,\theta_t) - \mathbb{E}_{\mu_{\theta_t,\rho}^*}[\nabla_2 V_\rho(Z,\theta)])\ \mathrm{d}\mu_{\theta_t,\rho}^*(x), \mathbb{E}_{\rho,y}[\Gamma(\mu_{t,\rho},\theta_t)]\right\rangle \\
            &\le \varepsilon_t \|\mathbb{E}_{\rho,y}[\Gamma(\mu_{t,\rho},\theta_t)]\| \int \left|\frac{\delta \cF_2}{\delta \mu}(\mu_{\theta_t^*,\rho})(x)\right|\cdot \|\nabla_2 V_\rho(x,\theta_t) - \mathbb{E}_{\mu_{\theta_t,\rho}^*}[\nabla_2 V_\rho(Z,\theta_t)]\|\ \mathrm{d}\mu_{\theta_t,\rho}^*(x) \\
            &\le 2 \varepsilon_t C_\Gamma C_V \int \left|\frac{\delta\cF_2}{\delta\mu}(\mu_{\theta_t,\rho}^*)\right|\ \mathrm{d}\mu_{\theta_t,\rho}^* \\
            &\le 2 \varepsilon_t C_\Gamma C_V C_\cF.
        \end{aligned}
    \end{equation}

    Putting everything together, we obtain
    \begin{equation*}
        d'(t) \le -\frac{3}{4 C_{PL}} d(t) + (M+2C_\Gamma C_V C_\cF)\varepsilon_t.
    \end{equation*}
    Now, note $\Tilde{C} = M+2C_\Gamma C_V C_\cF$, and let's apply the Grönwall lemma \citep{dragomir2003some}:
    \begin{equation*}
        d(t) \le d(0) e^{-\frac{3}{4C_{PL}} t} + \Tilde{C} \int_0^t \varepsilon_s e^{\frac{3}{4 C_{PL}} (s-t)}\ \mathrm{d}s.
    \end{equation*}
    Plugging it into \eqref{eq:ineq}, we get
    \begin{equation*}
        \frac{1}{T} \int_0^T \|\nabla\mathcal{L}(\theta_t)\|^2\ \mathrm{d}t \le \frac{2}{\varepsilon_T T} \big(\mathcal{L}(\theta_0) - \inf_\theta \mathcal{L}(\theta)\big) + \frac{K_\Gamma^2 \mathbb{E}_{\rho,y}\big[d(0)]}{\varepsilon_T T} \int_0^T \varepsilon_t e^{-\frac{3}{4 C_PL} t} \ \mathrm{d}t + \frac{K_\Gamma^2 \Tilde{C}}{\varepsilon_T T} \int_0^T \varepsilon_t \int_0^t \varepsilon_s e^{\frac{3}{4 C_{PL}} (s-t)}\ \d s\d t .
    \end{equation*}

    Recall that $\varepsilon_t = \min(1, t^{-\frac12})$, thus $T\varepsilon_T = \sqrt{T}$. The two first terms converge, hence they lie in $O(\sqrt{T})$. For the third one, we use the same trick as in \citep[Proof of Proposition 4.5]{implicit}. Let $T_0 \ge 2$ such that $\frac12 T_0 \ge \frac{4 C_{PL}}{3} \log(T_0)$. For $t\ge T_0$, let $\alpha(t) = t - \frac{4 C_{PL}}{3} \log(t)$. Notice that $\alpha(t) = \frac{t}{2} + \frac{t}{2} - \frac{4 C_{PL}}{3} \ge \frac{t}{2}$ for all $t\ge T_0$. Then, observe that for $t\ge T_0$,
    \begin{equation*}
        \begin{aligned}
            \int_0^t \varepsilon_s e^{\frac{3}{4 C_{PL}}(s-t)}\ \mathrm{d}s &= \int_0^{\alpha(t)} \varepsilon_s e^{\frac{3}{4 C_{PL}}(s-t)}\ \mathrm{d}s + \int_{\alpha(t)}^t \varepsilon_s e^{\frac{3}{4 C_{PL}}(s-t)}\ \mathrm{d}s \\
            &\le \varepsilon_0 e^{-\frac{3}{4 C_{PL}}} \int_0^{\alpha(t)} e^{\frac{3}{4 C_{PL}} s}\mathrm{d}s + \big(t-\alpha(t)\big) \varepsilon_{\alpha(t)} \\
            &\le \frac{4 C_{PL}\varepsilon_0}{3} e^{\frac{3}{4C_{PL}}(\alpha(t) - t)} + \frac{4 C_{PL}}{3} \log(t) \varepsilon_\alpha(t) \\
            &\le \frac{4 C_{PL}\varepsilon_0}{3 t} + \frac{4 C_{PL}}{3} \log(t) \varepsilon_{t/2}.
        \end{aligned}
    \end{equation*}
    Thus,
    \begin{equation*}
        \begin{aligned}
            \int_0^T \varepsilon_t \int_0^t \varepsilon_s e^{\frac{3}{4 C_{PL}}(s-t)}\ \mathrm{d}s\mathrm{d}t &\le \int_0^{T_0} \varepsilon_t \varepsilon_0 T_0\ \mathrm{d}t + \int_{T_0}^T \frac{4 C_{PL} \varepsilon_t \varepsilon_0}{3 t}\ \mathrm{d}t + \int_{T_0}^T \frac{4 C_{PL} \varepsilon_t \varepsilon_{t/2} \log(t)}{3}\ \mathrm{d}t \\
            &\le \varepsilon_0 T_0^2 + \frac{4 C_{PL}}{3} \big(\log T - \log T_0\big) + \frac{4 C_{PL}}{3} \log T \cdot \int_{T_0}^T \frac{2}{t}\ \mathrm{d}t \\
            &= \mathcal{O}\big((\log T)^2\big).
        \end{aligned}
    \end{equation*}
    Putting everything together, we obtain that there exists a constant $c>0$ such that
    \begin{equation*}
        \frac{1}{T} \int_0^T \|\nabla\mathcal{L}(\theta_t)\|^2\ \mathrm{d}t \le c \frac{(\log T)^2}{\sqrt{T}}.
    \end{equation*}

\end{proof}

We note that the MMD that we use has only been proven to fulfill a PL inequality with a different power \citep{mmdflow}. We provide this result as it gives a first step towards deriving guarantees of the method, e.g. for other divergence which would satisfy these assumptions. We leave for future work deriving the PL inequality for our MMD, or deriving alternative divergences which satisfy these assumptions. In Section \ref{sec:empiricalverification}, we show empirically that Assumptions \ref{assumption:pl} and \ref{assumption:grads}, as well as Theorem \ref{thm:cv_loss}, hold true in practice. Note that a weaker version of Theorem \ref{thm:cv_loss}, which does not give a specific rate of convergence for the average gradient, holds true for any gradient flow:

\begin{appendixtheorem}
    For a gradient flow $t\mapsto \theta_t$ on a non-negative loss $\cL(\theta)$ with bounded gradients, it holds that
    \begin{equation*}
        \frac{1}{T}\int_0^T\norm{\nabla \cL(\theta_t)}^2\d t\xrightarrow{T\to\infty} 0.
    \end{equation*}
\end{appendixtheorem}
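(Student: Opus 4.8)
The plan is to exploit the defining property of a gradient flow, namely $\dot\theta_t = -\nabla\cL(\theta_t)$, which turns $\norm{\nabla\cL(\theta_t)}^2$ into (minus) the time derivative of $\cL$ along the flow. First I would record that, under the stated boundedness of $\nabla\cL$ together with the standing regularity assumed throughout (so that $\nabla\cL$ is at least continuous), the gradient flow admits a global solution $t\mapsto\theta_t$ on $[0,\infty)$ and $t\mapsto\cL(\theta_t)$ is absolutely continuous; hence the chain rule applies for almost every $t$ and yields
\[
    \frac{\d}{\d t}\cL(\theta_t) = \langle \nabla\cL(\theta_t), \dot\theta_t\rangle = -\norm{\nabla\cL(\theta_t)}^2 \le 0 .
\]

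Next I would integrate this energy-dissipation identity from $0$ to $T$, obtaining $\int_0^T \norm{\nabla\cL(\theta_t)}^2\,\d t = \cL(\theta_0) - \cL(\theta_T)$. Since $\cL$ is non-negative we have $\cL(\theta_T)\ge 0$, so the right-hand side is bounded above by $\cL(\theta_0)$, a constant independent of $T$. Dividing by $T$ gives
\[
    \frac{1}{T}\int_0^T \norm{\nabla\cL(\theta_t)}^2\,\d t \le \frac{\cL(\theta_0)}{T},
\]
and letting $T\to\infty$ yields the claim. Note the boundedness of the gradients is not used quantitatively in this estimate; its role is only to rule out finite-time blow-up, so that the flow is defined on all of $[0,\infty)$ and the manipulations above are legitimate.

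There is essentially no hard step here: the only point requiring some care is the justification of the chain-rule/energy-dissipation identity, i.e.\ that $t\mapsto\cL(\theta_t)$ is absolutely continuous so that the fundamental theorem of calculus applies. This is standard for gradient flows of $C^1$ functions with locally Lipschitz gradient, and the global boundedness of $\nabla\cL$ prevents finite-time escape. One could alternatively phrase the whole argument in discrete time, via the analogous telescoping sum for gradient descent with a small constant step size, but in continuous time the statement reduces to the elementary observation that a non-negative quantity whose time-integral is uniformly bounded cannot have a time-average bounded away from zero.
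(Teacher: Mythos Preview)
Your argument is correct and in fact more direct than the paper's. Both proofs start from the same energy-dissipation identity $\frac{\d}{\d t}\cL(\theta_t)=-\norm{\nabla\cL(\theta_t)}^2$, but the paper then asserts pointwise convergence $\norm{\nabla\cL(\theta_t)}^2\to 0$ and uses an $\epsilon$--$t_\epsilon$ splitting of the integral, bounding the contribution on $[0,t_\epsilon]$ by $Mt_\epsilon$ via the gradient bound. You instead integrate the identity directly to get $\int_0^T\norm{\nabla\cL(\theta_t)}^2\,\d t\le\cL(\theta_0)$ and divide by $T$. Your route is shorter, yields the explicit rate $\cL(\theta_0)/T$, and sidesteps the (slightly delicate) step of deducing pointwise convergence of the integrand from finiteness of its integral; it also makes transparent that the gradient bound is only needed for global existence of the flow, not for the estimate itself. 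The paper's approach, on the other hand, makes the stronger intermediate statement that the gradient norm itself tends to zero along the trajectory, which your argument does not extract.
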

\begin{proof}
    Note that
    \begin{equation*}
        \partial_t \cL(\theta_t)=\langle \nabla_\theta \cL(\theta_t),\partial_t\theta_t\rangle=\langle \nabla_\theta \cL(\theta_t),-\nabla_\theta \cL(\theta_t)\rangle=-\norm{\nabla_\theta \cL(\theta_t)}^2\le 0.
    \end{equation*}
    Since $\cL(\theta_t)\ge 0$, this implies that $\norm{\nabla_\theta \cL(\theta_t)}^2\to 0$ as $t\to\infty$. Now for any $\epsilon>0$, we can find a $t_\epsilon>0$ such that for any $t>t_\epsilon$, $\norm{\nabla_\theta \cL(\theta_t)}^2\le \epsilon$. This means that for all $T\ge t_\epsilon$,
    \begin{equation*}
        \frac{1}{T}\int_0^T\norm{\nabla \cL(\theta_t)}^2\d t \le \frac{1}{T}(Mt_\epsilon+(T-t_\epsilon)\epsilon)\le \frac{Mt_\epsilon}{T}+\epsilon\xrightarrow{T\to\infty}\epsilon
    \end{equation*}
    for some constant $M>0$, and as this holds for any $\epsilon>0$, the claim follows.
\end{proof}

We now discuss in more details the MMD case.

\textbf{MMD case.} Let $\D=\frac12\mathrm{MMD}^2$ with $k$ a kernel satisfying $k(x,x)\le 1$. We first notice that it is currently unknown whether the MMD satisfies the PL inequality from Assumption \ref{assumption:pl}. However, it was shown in \citep[Proposition 7]{mmdflow}, that under some conditions, $\cF_{\mu_{\theta,\rho}^*}(\mu)^2 \le \frac{C}{4} \|\gW\cF_{\mu_{\theta,\rho}^*}(\mu)\|_{L^2(\mu)}^2$.

For this, we first need to recall the definition of the weighted negative Sobolev distance, e.g. \citep[Definition 5]{mmdflow}.
\begin{appendixdefinition}
    Let $\mu\in\cP(\R^d)$ and denote by $\|\cdot\|_{\dot{H}(\mu)}$ its corresponding weighted Sobolev semi-norm, defined for all differentiable $f\in L^2(\mu)$ as $\|f\|_{\dot{H}(\mu)}^2 = \int \|\nabla f(x)\|^2\ \mathrm{d}\mu(x)$. The weighted negative Sobolev distance between $p,q\in\cP(\R^d)$ is defined as 
    \begin{equation*}
        \|p-q\|_{\dot{H}^{-1}(\mu)} = \sup_{f\in L^2(\mu),\ \|f\|_{\dot{H}(\mu)}\le 1}\ \left|\int f\mathrm{d}p - \int f\mathrm{d}q\right|.
    \end{equation*}
\end{appendixdefinition}
This allows us to derive the following lemma.
\begin{appendixlemma}[PL Inequality] \label{lemma:pl_mmd}
    Assume $k$ is continuously differentiable with $L$-Lipschitz gradient, i.e. for all $x,y,x',y'\in\R^d$, $\|\nabla k(x,x') - \nabla k(y,y')\|\le L\big(\|x-x'\| + \|y-y'\|\big)$. Then, if for all $t\ge 0$, $\|\mu_{\theta_t,\rho}^* - \mu_{t,\rho}\|_{\dot{H}^{-1}(\mu_t)} \le C$,
    \begin{equation*}
        \cF_{\mu_{\theta,\rho}^*}(\mu)^2 \le \frac{C}{4} \|\gW\cF_{\mu_{\theta,\rho}^*}(\mu)\|_{L^2(\mu)}^2.
    \end{equation*}
\end{appendixlemma}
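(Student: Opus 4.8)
\textbf{Proof plan for Lemma~\ref{lemma:pl_mmd}.} This is an adaptation of \citep[Proposition~7]{mmdflow}, and the plan is to reduce the claim to the duality between the weighted Sobolev seminorm $\|\cdot\|_{\dot H(\mu)}$ and the weighted negative Sobolev distance $\|\cdot\|_{\dot H^{-1}(\mu)}$. Fix $\nu := \mu_{\theta,\rho}^*$ and write $f_{\mu,\nu} := \int k(\cdot,y)\,\d\mu(y) - \int k(\cdot,y)\,\d\nu(y)$ for the witness function. The regularity of $k$ (Lipschitz gradient, together with $k(x,x)\le 1$, which for a positive-definite $k$ yields $|k|\le 1$ and hence a bounded witness $\|f_{\mu,\nu}\|_\infty\le 2$) is exactly what lets us differentiate under the integral and invoke \citep{mmdflow} to obtain that $\cF_\nu = \tfrac12\mathrm{MMD}^2(\cdot,\nu)$ is Wasserstein differentiable with $\gW\cF_\nu(\mu) = \nabla f_{\mu,\nu}$; consequently
\begin{equation*}
    \|\gW\cF_\nu(\mu)\|_{L^2(\mu)}^2 \;=\; \int \|\nabla f_{\mu,\nu}(x)\|^2\,\d\mu(x) \;=\; \|f_{\mu,\nu}\|_{\dot H(\mu)}^2 .
\end{equation*}
If this quantity is $+\infty$ the asserted inequality is trivial (its right-hand side is infinite while $\cF_\nu(\mu)\le 2$), so I may assume $f_{\mu,\nu}$ has finite $\dot H(\mu)$-seminorm, i.e.\ it is an admissible competitor in the supremum defining $\|\cdot\|_{\dot H^{-1}(\mu)}$.

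The second step is the elementary identity
\begin{equation*}
    \mathrm{MMD}^2(\mu,\nu) \;=\; \int f_{\mu,\nu}\,\d(\mu-\nu),
\end{equation*}
obtained by expanding the definition of $f_{\mu,\nu}$ and collecting the four double integrals of $k$. Since $\mu$ and $\nu$ are probability measures, the signed measure $\mu-\nu$ has zero total mass, so the pairing $\int g\,\d(\mu-\nu)$ is unchanged by adding a constant to $g$ and is therefore controlled by the seminorm of $g$: directly from the definition of the weighted negative Sobolev distance, $\big|\int g\,\d(\mu-\nu)\big| \le \|g\|_{\dot H(\mu)}\,\|\mu-\nu\|_{\dot H^{-1}(\mu)}$ for every $g$ of finite seminorm (apply the defining supremum to $g/\|g\|_{\dot H(\mu)}$). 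Taking $g = f_{\mu,\nu}$ and using the first step gives
\begin{equation*}
    \mathrm{MMD}^2(\mu,\nu) \;\le\; \|f_{\mu,\nu}\|_{\dot H(\mu)}\,\|\mu-\nu\|_{\dot H^{-1}(\mu)} \;=\; \|\gW\cF_\nu(\mu)\|_{L^2(\mu)}\,\|\mu-\nu\|_{\dot H^{-1}(\mu)} .
\end{equation*}

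Finally I would specialize to $\nu = \mu_{\theta,\rho}^*$ and, along the flow, to $\mu = \mu_{t,\rho}$, square both sides, bound the last factor by the hypothesis on $\|\mu_{\theta_t,\rho}^*-\mu_{t,\rho}\|_{\dot H^{-1}(\mu_t)}$, and divide by $4$ using $\cF_\nu = \tfrac12\mathrm{MMD}^2$; this produces $\cF_{\mu_{\theta,\rho}^*}(\mu)^2 \le \tfrac{C}{4}\,\|\gW\cF_{\mu_{\theta,\rho}^*}(\mu)\|_{L^2(\mu)}^2$, the exact power of $C$ being fixed by the normalization of the hypothesis. The only genuinely delicate point is the first step — establishing $\gW\cF_\nu(\mu) = \nabla f_{\mu,\nu}$ together with the finiteness of $f_{\mu,\nu}$ in $\dot H(\mu)$ so that it may serve as a test function for $\|\cdot\|_{\dot H^{-1}(\mu)}$ — which is precisely where the smoothness assumptions on $k$ are consumed; the identity for $\mathrm{MMD}^2$ and the Cauchy--Schwarz-type Sobolev duality are then routine.
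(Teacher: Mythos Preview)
Your proposal is correct and is precisely the argument behind \citep[Proposition~7]{mmdflow}, which is all the paper invokes for its proof: identify $\gW\cF_\nu(\mu)=\nabla f_{\mu,\nu}$, rewrite $\mathrm{MMD}^2(\mu,\nu)=\int f_{\mu,\nu}\,\d(\mu-\nu)$, and apply the $\dot H(\mu)/\dot H^{-1}(\mu)$ duality. One cosmetic remark: squaring the inequality $2\cF_\nu(\mu)\le \|\gW\cF_\nu(\mu)\|_{L^2(\mu)}\,\|\mu-\nu\|_{\dot H^{-1}(\mu)}$ yields the constant $C^2/4$ rather than $C/4$; the paper's stated constant is inherited verbatim from the reference, and your caveat about ``the exact power of $C$ being fixed by the normalization of the hypothesis'' is the right way to flag this.
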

\begin{proof}
    See \citep[Proof of Proposition 7]{mmdflow}.
\end{proof}

We now discuss what changes in the MMD case compared to the setting developed above. First, we will also assume Assumptions \ref{assumption:V}, \ref{assumption:gamma} and \ref{assumption:grads}. Assumption \ref{assumption:pl} is replaced by the assumptions of Lemma \ref{lemma:pl_mmd} to get the PL inequality for the MMD.

Under Assumption \ref{assumption:V}, we recover \eqref{eq:ineq}, i.e.
\begin{equation*}
    \frac{1}{T} \int_0^T \|\nabla\mathcal{L}(\theta_t)\|^2\ \mathrm{d}t \le \frac{2}{\varepsilon_T T} \big(\mathcal{L}(\theta_0) - \inf_\theta \mathcal{L}(\theta)\big) + \frac{1}{\varepsilon_T T} \int_0^T \varepsilon_t \mathbb{E}_{\rho,y}\big[\|\Gamma(\mu_{\theta_t,\rho}^*, \theta_t) - \Gamma(\mu_{t,\rho},\theta_t)\|^2\big]\ \mathrm{d}t.
\end{equation*}
Remember also that $\Gamma$ can be written as
\begin{equation*}
    \Gamma(\mu,\theta) = \nabla_1 \ell\big(h(\mu),y\big) \cdot \mathrm{Cov}_\mu\left(\frac{\delta h}{\delta \mu}(\mu)(Z), \nabla_2 V(Z,\theta)\right).
\end{equation*}
First, we need to assume $\|\Gamma(\mu,\theta)-\Gamma(\nu,\theta)\| \le \frac{K_\Gamma}{2} \mathrm{MMD}(\mu,\nu)$. Thus, we need to upper bound $d:t\mapsto \frac12 \mathrm{MMD}^2(\mu_t, \mu_{\theta_t,\rho}^*)$.

Let $d(t) = \frac12 \mathrm{MMD}^2(\mu_t, \mu_{\theta_t,\rho}^*)$. By differentiating, we also get
\begin{equation*}
    \begin{aligned}
        d'(t) = \frac{\mathrm{d}}{\mathrm{d}t}\D(\mu_t, \mu_{\theta_t,\rho}^*) &= \underbrace{\int \frac{\delta \cF_1^{\mu_{\theta_t,\rho}^*}}{\delta\mu}(\mu_t) \frac{\partial\mu_t}{\partial t}}_\textrm{(1)} + \underbrace{\int \frac{\delta \cF_2^{\mu_t,\rho}}{\delta \mu}(\mu_{\theta_t,\rho}^*)\ \frac{\partial \mu_{\theta_t,\rho}^*}{\partial t}}_\textrm{(2)}.
    \end{aligned}
\end{equation*}
By following the same computations as in Theorem \ref{thm:cv_loss}, assuming that Assumption \ref{assumption:grads} holds and using Lemma \ref{lemma:pl_mmd}, we have
\begin{equation}
    (1) \le - \frac{3}{C_{PL}} d(t)^2 + M \varepsilon_t.
\end{equation}
For (2), we have by using \eqref{eq:bound_(2)},
\begin{equation} 
    \begin{aligned}
        (2) &= -\varepsilon_t \left\langle \int \frac{\delta \cF_2}{\delta\mu}(\mu_{\theta_t,\rho}^*)(x) \cdot (\nabla_2 V_\rho(x,\theta_t) - \mathbb{E}_{\mu_{\theta_t,\rho}^*}[\nabla_2 V_\rho(Z,\theta)])\ \mathrm{d}\mu_{\theta_t,\rho}^*(x), \mathbb{E}_{\rho,y}[\Gamma(\mu_{t,\rho},\theta_t)]\right\rangle \\
        &\le \varepsilon_t \|\mathbb{E}_{\rho,y}[\Gamma(\mu_{t,\rho},\theta_t)]\| \int \left|\frac{\delta \cF_2}{\delta \mu}(\mu_{\theta_t^*,\rho})(x)\right|\cdot \|\nabla_2 V_\rho(x,\theta_t) - \mathbb{E}_{\mu_{\theta_t,\rho}^*}[\nabla_2 V_\rho(Z,\theta_t)]\|\ \mathrm{d}\mu_{\theta_t,\rho}^*(x) \\
        &\le 2 \varepsilon_t C_\Gamma C_V \int \left|\frac{\delta\cF_2}{\delta\mu}(\mu_{\theta_t,\rho}^*)\right|\ \mathrm{d}\mu_{\theta_t,\rho}^* \\
        &= 2 \varepsilon_t C_\Gamma C_V \int \left|\int k(x,x')\ \mathrm{d}\mu_{\theta_t,\rho}^*(x') - \int k(x,y)\ \mathrm{d}\mu_t(y)\right|\ \mathrm{d}\mu_{\theta_t,\rho}^* \\
        &\le 2 \varepsilon_t C_\Gamma C_V \cdot \frac12\mathrm{MMD}(\mu_{\theta_t,\rho}^*, \mu_t) \\
        &= 2 \varepsilon_t C_\Gamma C_V d(t) \\
        &\le 2\varepsilon_t C_\Gamma C_V \big(1+d(t)^2\big),
    \end{aligned}
\end{equation}
where we used that for $\mu_{\theta_t,\rho}$-almost every $x$, $k(x,\cdot) \in \mathcal{H}$ and $\|k(x,\cdot)\|_\mathcal{H} = k(x,x)^\frac12 \le 1$ and taking the supremum.

Putting everything together, we get,
\begin{equation} \label{eq:gronwall}
    d'(t) \le \left(2C_\Gamma C_V - \frac{3}{C_{PL}}\right) d(t)^2 + (M  + 2C_\Gamma C_V)\varepsilon_t.
\end{equation}
Here, we observe that we need to assume $2C_\Gamma C_V - \frac{3}{C_{PL}}\le 0$ for the rate to be converging.

To apply the Grönwall lemma to \eqref{eq:gronwall}, we need to solve the following Riccati equation $y'(t) = -ay^2(t) + b\varepsilon(t)$\footnote{ \url{math.stackexchange.com/questions/4773818/generalized-gronwall-inequality-covering-many-different-applications}} for $a=\frac{3}{C_{PL}} -2 C_\Gamma C_V\ge 0$, $b=M+2C_\Gamma C_V$, which however does not have a simple solution for $b\neq 0$ \citep{zaitsev2002handbook}.

For $b=0$, we would have
\begin{equation*}
    d(t) \le \frac{d(0)}{1 + a d(0) t},
\end{equation*}
and 
\begin{equation*}
    \begin{aligned}
        \int_0^T \varepsilon_t d(t)\ \mathrm{d}t &\le d(0) \int_0^T \frac{1}{1+ad(0) t}\ \mathrm{d}t \\
        &= d(0) \cdot \frac{\log(1 + a d(0) T)}{a d(0)} \\
        &= \mathcal{O}(\log T).
    \end{aligned}
\end{equation*}
Thus, the rate would be in $\mathcal{O}\big(\log T / \sqrt{T}\big)$.

If we bound $\varepsilon_t$ by $1$, then we can show that
\begin{equation*}
    d(t) \le \frac{\sqrt{ba}}{a} \cdot \frac{1}{\tanh(\sqrt{ba} t + K)},
\end{equation*}
for $K=\mathrm{arctanh}\left(\frac{\sqrt{ba}}{a d(0)}\right)$. Observing that
\begin{equation*}
    \begin{aligned}
        \int_0^T \varepsilon_t \frac{1}{\tanh(\sqrt{ba}t + K)} \ \mathrm{d}t &\le \int_0^T \frac{1}{\tanh(\sqrt{ba}t + K)}\ \mathrm{d}t \\
        &= \frac{1}{\sqrt{ba}}\big(\log\big(\sinh(\sqrt{ba} T + K)\big) - \log\big(\sinh (K)\big)\big) \\
        &= \mathcal{O}(T).
    \end{aligned}
\end{equation*}
However, this bound is too big, and will diverge.

\subsection{EI Property}

\setcounter{theoremcounter}{4}
\begin{proposition}
    Let $F^{\textnormal{bil}}:\mathbb{R}^{N\times p}\times\mathbb{R}^{M\times d}\to\mathbb{R}^{N\times p}$, $(\Z,\X)\mapsto F^{\textnormal{bil}}(\Z,\X)$. Then $F^{\textnormal{bil}}$ is a bilinear map that fulfills the EI property
    if and only if $F^{\textnormal{bil}}$ is of the form
    \begin{equation}
        F^{\textnormal{bil}}(\Z,\X)=\Z\alpha \X^\top 1_M+1_N 1_N^\top \Z\beta \X^\top 1_M\label{eq:9}
    \end{equation}
    for some tensors $\alpha,\beta\in\mathbb{R}^{p\times p\times d}$, where $1_N,1_M$ denote the vectors with ones in $\mathbb{R}^N$ and $\mathbb{R}^M$.
\end{proposition}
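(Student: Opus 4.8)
The plan is to reduce both implications to a coordinate computation. Since $F^{\textnormal{bil}}$ is bilinear, each output entry is a bilinear form in the entries of $\Z$ and $\X$, so there is a tensor $T$ with
\begin{equation*}
F^{\textnormal{bil}}(\Z,\X)_{n,k}=\sum_{a,j,b,l}T_{n,k,a,j,b,l}\,\Z_{a,j}\X_{b,l},
\end{equation*}
and the EI property becomes a symmetry constraint on $T$. Throughout I would identify a permutation $\sigma_N\in\fS_N$ with its row-permutation matrix $P_N$ and use $P_N 1_N=1_N$, $1_N^\top P_N=1_N^\top$ (and likewise for $M$).

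For the \textbf{``if''} direction I would simply substitute $\sigma_N(\Z)=P_N\Z$ and $\sigma_M(\X)=P_M\X$ into the claimed formula: the first term becomes $P_N\Z\alpha\X^\top P_M^\top 1_M=P_N(\Z\alpha\X^\top 1_M)$ because $P_M^\top 1_M=1_M$, and the second becomes $1_N 1_N^\top P_N\Z\beta\X^\top P_M^\top 1_M=1_N 1_N^\top\Z\beta\X^\top 1_M=P_N(1_N 1_N^\top\Z\beta\X^\top 1_M)$ because $1_N^\top P_N=1_N^\top$ and $P_N 1_N=1_N$; summing gives $\sigma_N\bigl(F^{\textnormal{bil}}(\Z,\X)\bigr)$. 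Bilinearity is immediate since each term is linear in $\Z$ and in $\X$ separately.

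For the \textbf{``only if''} direction I would exploit the two symmetries in turn. Invariance in $\X$ forces $T_{n,k,a,j,\sigma_M(b),l}=T_{n,k,a,j,b,l}$ for all $b$ and all $\sigma_M$ (compare coefficients of $\Z_{a,j}\X_{b,l}$), so $T$ does not depend on $b$; writing it $S_{n,k,a,j,l}$ gives $F^{\textnormal{bil}}(\Z,\X)_{n,k}=\sum_{a,j,l}S_{n,k,a,j,l}\,\Z_{a,j}(\X^\top 1_M)_l$. Equivariance in $\Z$ then yields, after matching coefficients of $\Z_{a,j}(\X^\top 1_M)_l$, the relation $S_{\sigma_N(n),k,\sigma_N(a),j,l}=S_{n,k,a,j,l}$ for every $\sigma_N\in\fS_N$. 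For each fixed $(k,j,l)$ the $N\times N$ matrix $(n,a)\mapsto S_{n,k,a,j,l}$ is therefore invariant under simultaneous conjugation by all permutations, hence lies in $\mathrm{span}\{I_N,\,1_N 1_N^\top\}$; this produces scalars $\alpha_{j,k,l},\beta_{j,k,l}$ with $S_{n,k,a,j,l}=\alpha_{j,k,l}\delta_{n,a}+\beta_{j,k,l}$. Substituting back, using $\sum_a\delta_{n,a}\Z_{a,j}=\Z_{n,j}$ and $\sum_a\Z_{a,j}=(1_N^\top\Z)_j$, reassembles exactly $\Z\alpha\X^\top 1_M+1_N 1_N^\top\Z\beta\X^\top 1_M$.

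The routine parts are the coordinate bookkeeping and the ``if'' computation. I expect the one delicate point — and the main obstacle — to be the classification of the permutation-symmetric matrices arising for each $(k,j,l)$: one must argue that all diagonal entries coincide and all off-diagonal entries coincide (the latter using that any two ordered pairs of distinct row indices are related by a permutation, which holds for $N\ge 2$; for $N=1$ the statement degenerates harmlessly since $I_N=1_N 1_N^\top$), and one must be careful that the two permutation actions genuinely decouple so that the $\X$-reduction can be performed before imposing the $\Z$-symmetry.
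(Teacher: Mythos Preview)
Your proposal is correct and follows essentially the same approach as the paper: both write the bilinear map in tensor coordinates and reduce the EI property to the symmetry constraint $T_{\sigma_N(n),k,\sigma_N(a),j,\sigma_M(b),l}=T_{n,k,a,j,b,l}$, then classify the solutions by removing the $b$-dependence and splitting the remaining $(n,a)$-dependence into diagonal and constant parts. Your version is slightly more polished in that you verify the ``if'' direction explicitly with permutation matrices and phrase the $\Z$-step as conjugation invariance forcing $\mathrm{span}\{I_N,1_N1_N^\top\}$, whereas the paper does a direct case split on $i=k$ versus $i\neq k$; these are cosmetic differences only.
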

\begin{proof}
    $F^{\textnormal{bil}}$ is bilinear if it can be written as
    \begin{equation*}
        F^{\textnormal{bil}}(Z,X)_{ij}=\sum_{klmn}F^{\textnormal{bil}}_{ijklmn}Z_{kl}X_{mn}
    \end{equation*}
    for some $F^{\textnormal{bil}}_{ijklmn}\in\mathbb{R}$.
    By definition, $F^{\textnormal{bil}}$ is equivariant in $Z$ and invariant in $X$ if and only if for any $\sigma_N$, $\sigma_M$, it holds that
    \begin{equation*}
        F^{\textnormal{bil}}(Z,X)_{\sigma_N(i)j}=\sum_{klmn}F^{\textnormal{bil}}_{ijklmn}Z_{\sigma_N(k)l}X_{\sigma_M(m)n},
    \end{equation*}
    which is equivalent to
    \begin{equation*}
        F^{\textnormal{bil}}(Z,X)_{ij}=\sum_{klmn}F^{\textnormal{bil}}_{\sigma_N^{-1}(i)j\sigma_N^{-1}(k)l\sigma_M^{-1}(m)n}Z_{kl}X_{mn}.
    \end{equation*}
    Hence, $F^{\textnormal{bil}}$ has the desired properties if and only if for any $\sigma_N$ and $\sigma_M$, it holds that for all indices $i,j,k,l,m,n$,
    \begin{equation*}
        F^{\textnormal{bil}}_{ijklmn}=F^{\textnormal{bil}}_{\sigma_N(i)j\sigma_N(k)l\sigma(m)n}.
    \end{equation*}
    (Note we can replace the inverse permutations by permutations since the set of all inverse permutations is the same as the set of all permutations.)
    This is the case if and only if the following two conditions hold:
    \begin{itemize}
        \item $F^{\textnormal{bil}}_{ijklmn}=F^{\textnormal{bil}}_{\sigma_N(i)j\sigma_N(k)lmn}$. For fixed $i,k$, consider two cases:
        \begin{itemize}
            \item if $i\neq k$, then for all $i'\neq k'$, we can find a $\sigma_N$ s.t. $\sigma_N(i)=i'$ and $\sigma_N(k)=k'$, which implies $F^{\textnormal{bil}}_{ijklmn}=F^{\textnormal{bil}}_{i'jk'lmn}$
            \item if $i=k$, then $\sigma_N(i)=\sigma_N(k)$ for any $\sigma_N$, hence for all $i'$, $F^{\textnormal{bil}}_{ijilmn}=F^{\textnormal{bil}}_{i'ji'lmn}$ (since we can always find $\sigma_N$ such that $\sigma_N(i)=i'$).
        \end{itemize}
        This means we can write $F^{\textnormal{bil}}_{ijklmn}=\tilde{\alpha}_{ljmn}\delta_{ik}+\tilde{\beta}_{ljmn}(1-\delta_{ik})$ for some $\tilde{\alpha}$ and $\tilde{\beta}$ in $\mathbb{R}^{p\times p\times d}$, where $\delta_{ik}$ is the Kronecker delta;
        \item $F^{\textnormal{bil}}_{ijklmn}=F^{\textnormal{bil}}_{ijkl\sigma_M(m)n}$, which is equivalent to $F^{\textnormal{bil}}_{ijklmn}=F^{\textnormal{bil}}_{ijklm'n}$ for all $m,m'$. This means we can write $F^{\textnormal{bil}}_{ijklmn}=F^{\textnormal{bil}}_{ijkln}$.
    \end{itemize}
    Putting these two things together, this means $F^{\textnormal{bil}}$ has the desired properties if and only if it can be written as
    \begin{equation*}
        F^{\textnormal{bil}}_{ijklmn}=\tilde{\alpha}_{ljn}\delta_{ik}+\tilde{\beta}_{ljn}(1-\delta_{ik})
    \end{equation*}
    for some $\tilde{\alpha}$ and $\tilde{\beta}$ in $\mathbb{R}^{p\times p\times d}$. If we let $\alpha:=\tilde{\alpha}-\tilde{\beta}$ and $\beta:=\tilde{\beta}$, this is equivalent to
    \begin{equation*}
        F^{\textnormal{bil}}_{ijklmn}=\alpha_{ljn}\delta_{ik}+\beta_{ljn}.
    \end{equation*}
    By writing out equation (\ref{eq:9}) with indices, one can verify that this is equivalent.
\end{proof}

In order to proof that MultiHead attention layers fulfill the EI property, we prove two easy, but helpful lemmas.
\addtocounter{theoremcounter}{3}
\begin{appendixlemma}\label{lem:entrywise}
    Let $F:\mathbb{R}^{N\times p}\times \R^{M\times d}\to\mathbb{R}^{N\times p}$, such that it can be written as
    \begin{equation*}
        F(\Z,\X)=F(\Z)=\begin{bmatrix}
                f(z_1)\\
                f(z_2)\\
                \vdots\\
                f(z_N)
            \end{bmatrix}
    \end{equation*}
    for some function $f:\mathbb{R}^p\to\mathbb{R}^p$. Then $F$ fulfills the EI property.
\end{appendixlemma}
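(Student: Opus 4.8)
The plan is to unwind the definition of the EI property directly, exploiting the two structural features of $F$ in the hypothesis: it ignores its second argument, and it acts on $\Z$ one row at a time through the same map $f$. So there is really nothing to do beyond careful index bookkeeping, and the argument will mirror the per-variable analysis already used in Proposition~\ref{prop:bilinear}.

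First I would dispense with invariance in $\X$: since $F(\Z,\X)=F(\Z)$ by hypothesis, for any $\sigma_M\in\fS_M$ we have $F(\Z,\sigma_M(\X))=F(\Z)$, so it suffices to check equivariance in the first slot. For that, fix $\sigma_N\in\fS_N$ and write $z_1,\dots,z_N\in\R^p$ for the rows of $\Z$. Adopting the convention that $\sigma_N$ acts on a matrix by permuting its row indices, the $i$-th row of $\sigma_N(\Z)$ is $z_{\sigma_N(i)}$. By the assumed block form of $F$, the $i$-th row of $F\big(\sigma_N(\Z)\big)$ is then $f$ applied to the $i$-th row of $\sigma_N(\Z)$, namely $f(z_{\sigma_N(i)})$; meanwhile, the $i$-th row of $\sigma_N\big(F(\Z)\big)$ is the $\sigma_N(i)$-th row of $F(\Z)$, which is also $f(z_{\sigma_N(i)})$. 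The two matrices agree row by row, hence $F\big(\sigma_N(\Z)\big)=\sigma_N\big(F(\Z)\big)$, and combining this with the $\X$-invariance gives $F\big(\sigma_N(\Z),\sigma_M(\X)\big)=\sigma_N\big(F(\Z,\X)\big)$, which is exactly the EI property.

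The only point requiring any care — and the closest thing to an obstacle — is to fix once and for all how a permutation acts on the rows of a matrix (as a relabelling of row indices versus its inverse) and to use that convention consistently; with the other convention the same computation goes through verbatim with $\sigma_N$ replaced by $\sigma_N^{-1}$ in the intermediate expressions, and the conclusion is unchanged. This lemma is intended as the base case for showing that per-particle building blocks of the architecture (Layer Normalization, and the two-layer feedforward networks applied particlewise) satisfy the EI property; it then dovetails with the analogous statement for the attention/cross-operations and with the fact that the EI property is closed under composition to cover the full network $F_\theta$ depicted in Figure~\ref{fig:arch}.
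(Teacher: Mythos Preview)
Your proof is correct and follows exactly the same approach as the paper's: first noting that $F$ is trivially invariant in $\X$ since it does not depend on $\X$, and then observing equivariance in $\Z$ because the $i$-th row of $F(\Z)$ depends only on $z_i$. The paper's version is simply terser, stating both facts in a single sentence each without the explicit row-by-row index computation you carry out.
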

\begin{proof}
    $F$ is trivially invariant in $\X$, as it does not depend on $\X$. Furthermore, it is clear that $F$ is equivariant in $\Z$, as the $i^{th}$ entry of $F$ only depends on $z_i$.
\end{proof}

\begin{appendixlemma}\label{lem:composition}
    The EI property is preserved under compositions of functions that each fulfill the EI property.
\end{appendixlemma}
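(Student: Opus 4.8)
The plan is to prove the single-composition case — chaining two EI blocks by feeding the output of one into the latent slot of the next, with the source $\X$ passed along (possibly after a permutation-equivariant transformation) — and then to extend to finite compositions by induction. I expect no real obstacle here: the core argument is a one-line diagram chase, and the only things requiring attention are bookkeeping of which symmetric group acts on which argument and checking that feature dimensions line up so that each composite is well-defined (the EI property, as stated for $\mathbb{R}^{N\times p}\times\mathbb{R}^{M\times d}\to\mathbb{R}^{N\times p}$, applies verbatim to maps landing in $\mathbb{R}^{N\times p'}$, since $\sigma_N$ permutes rows irrespective of the number of columns).

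First I would fix $F$ and $G$ both satisfying the EI property with compatible dimensions and set $H(\Z,\X) := F\big(G(\Z,\X),\X\big)$. For arbitrary $\sigma_N\in\fS_N$ and $\sigma_M\in\fS_M$ I would unwind the definition, $H\big(\sigma_N(\Z),\sigma_M(\X)\big) = F\big(G(\sigma_N(\Z),\sigma_M(\X)),\,\sigma_M(\X)\big)$, use the EI property of $G$ to rewrite the inner term as $\sigma_N\big(G(\Z,\X)\big)$, and then apply the EI property of $F$ with latent $\sigma_N\big(G(\Z,\X)\big)$ and source $\sigma_M(\X)$ to pull $\sigma_N$ outside, obtaining $\sigma_N\big(F(G(\Z,\X),\X)\big) = \sigma_N\big(H(\Z,\X)\big)$. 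Since the permutations were arbitrary, $H$ satisfies the EI property.

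Second, I would record the companion fact that precomposing the source argument of an EI map with a map $T$ that is permutation-equivariant in the one-argument sense (i.e. $T(\sigma_M(\X)) = \sigma_M(T(\X))$) again yields an EI map, by the identical computation; this accounts for the source self-encoder and the row-wise operations on $\X$ in the architecture of Figure \ref{fig:arch}. A straightforward induction on the number of composed modules then promotes the two-block statement to arbitrary finite compositions, which is exactly what is invoked to conclude that $F_\theta$ fulfills the EI property. The only ``difficulty'', such as it is, is purely notational: being explicit about the roles of the two arguments at each stage so that the two invocations of the EI property — one for $G$, one for $F$ — are applied to the correct slots.
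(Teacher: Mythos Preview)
Your proposal is correct and follows essentially the same approach as the paper, which dispatches the lemma in a single sentence: ``This follows immediately from the fact that both row-equivariance as well as -invariance are preserved under compositions.'' Your write-up is simply a more explicit unpacking of that sentence, making precise what ``composition'' means in the two-argument setting and verifying the claim by direct computation; nothing in your plan deviates from or adds to the paper's argument beyond spelling out the details.
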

\begin{proof}
    This follows immediately from the fact that both row-equivariance as well as -invariance are preserved under compositions.
\end{proof}

\setcounter{theoremcounter}{5}
\begin{proposition}
    Denote by MultiHead(tgt, src) multi-head attention \citep{transformer} between a target and source sequence. Then MultiHead$(\Z, \Z)$ and MultiHead$(\Z, \X)$ both fulfill the EI property.
\end{proposition}
\begin{proof}
Recall that single-head cross-attention between $\X$ and $\Z$ is defined as
\begin{equation*}
    \textnormal{Att}(\Z,\X)_i=\text{LayerNorm}\left(\frac{\sum_j e^{\tau\langle Qz_i,Kx_j\rangle}Vx_j}{\sum_j e^{\tau \langle Qz_i,Kx_j\rangle}}\right)
    \end{equation*}
for some matrices $Q\in\R^{h\times p},\ K\in\R^{h\times d},\ V\in\R^{p\times d}$, where $h\in\R$ (typically, $h=p=d$), and some $\tau\in\R$; here, the $z_i$ denote the rows of $\Z$, and the $x_i$ denote the rows of $\X$.
The term $\frac{\sum_j e^{\tau\langle Qz_i,Kx_j\rangle}Vx_j}{\sum_j e^{\tau \langle Qz_i,Kx_j\rangle}}$ only depends on $z_i$ but none of the other $z$'s, and is clearly invariant under permutations of the $x_i$, which makes it fulfill the EI property. Layer normalization fulfills the EI property by lemma \ref{lem:entrywise}, and thus by lemma \ref{lem:composition} single-head cross-attention does as well. Again by lemma \ref{lem:composition}, the result holds for multi-head cross-attention as well. Similarly, we can show that multi-head self-attention on $\Z$ fulfills the EI property, since it is trivially invariant under permutations of $\X$ (as it does not depend on $\X$).
\end{proof}

\section{Training Details}\label{sec:training_details}

\subsection{Architecture}\label{sec:appendixarchitecture}
In this section, we provide additional details about the architecture. The encoder blocks are stacked multi-head attention layers \citep{transformer}, as can be seen in Figure \ref{fig:mha}.
\begin{figure}[h]
\centering
\begin{tikzpicture}
\definecolor{myyellow}{rgb}{1.0, 1.0, 0.6}
\definecolor{mygreen}{rgb}{.0, 1.0, .0}
    \sffamily
    \tikzset{
        myarrow/.style={
            -{Latex[length=2.5mm]},
            thick,         
            line width=1.5pt
        }
    }
    \tikzstyle{block} = [rectangle, draw, line width=1.5pt, text centered, minimum height=1.2cm, minimum width=3cm, rounded corners=5pt]
    \tikzstyle{addnorm} = [rectangle, draw, fill=myyellow!80, line width=1.5pt, text centered, minimum height=0.8cm, minimum width=3.5cm, rounded corners=5pt]
    \tikzstyle{attention} = [rectangle, draw, fill=orange!40, line width=1.5pt, text centered, minimum height=1.2cm, minimum width=3.5cm, rounded corners=5pt]
    \tikzstyle{feedforward} = [rectangle, draw, fill=blue!70!green!30, line width=1.5pt, text centered, minimum height=1.2cm, minimum width=3.5cm, rounded corners=5pt]
    \tikzstyle{residual} = [myarrow]  

    \draw[line width=1.5pt, rounded corners=10pt, thick, fill=gray!20] (-2.5, -5) rectangle (2.5, .75);

    \node[attention] (mha) at (0, -3.9) {Multi-Head Attention};

    \draw[myarrow] (0, -5.25) -- (mha);

    \node at (-3, -2) {\large{N$\times$}};

    \node[addnorm] (addnorm1) at (0, -2.8) {Add \& Norm};

    \node[feedforward] (ffn) at (0, -1.1) {FFN};

    \node[addnorm] (addnorm2) at (0, 0) {Add \& Norm};

    \draw[myarrow] (0, 0.4) -- (0, 1.1);

    \draw[-, thick, line width=1.5pt] (mha) -- (addnorm1);
    \draw[myarrow] (addnorm1) -- (ffn);
    \draw[-, thick, line width=1.5pt] (ffn) -- (addnorm2);

    \draw[residual] (0, -4.9) -| (-2.1, -4.9) -- (-2.1, -2.8) -- (addnorm1.west);

    \draw[residual] (0, -2.1) -- (-2.1, -2.1) -- (-2.1, 0.0) -- (addnorm2.west);

\end{tikzpicture}
\caption{Our encoders are stacked multi-head attention layers \citep{transformer}.}
\label{fig:mha}
\end{figure}
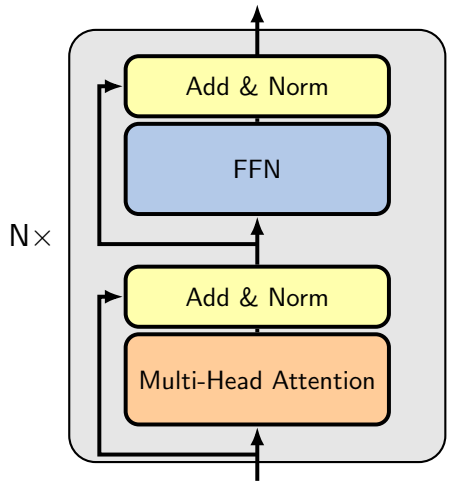

\textbf{Layer Sizes.} The hidden dimensions of the first two feedforward networks (the ones before the bilinear layer) are the same as the dimension of the bilinear layer (16). The hidden dimensions of the feedforward networks right after the bilinear layer is the same as the encoder dimension (128). The feedforward network at the end is a typical transformer feedforward layer, where the hidden dimension is four times that of the encoder dimension (i.e. 512). We would like to highlight here that our head size (of four), which we tuned extensively, is extremely small, compared to what is common in the literature. Further investigation into what such a small head size means for the model and its training is outstanding.

\textbf{Invertible Network.} The invertible network used in the point cloud completion pipeline takes the form described in the paper, i.e. for the padded $\tilde{\Z}$ (which is $\X$ with the free particles, padded with zeros to match the encoder dimension):
\begin{align*}
    \Z=q(\tilde{\Z})&=[\tilde{\Z}_1, \tilde{\Z}_2 e^{\phi(\tilde{\Z}_1)}+\psi(\tilde{\Z}_1)]\\
    q^{-1}(\Z)&=[\Z_1, (\Z_2-\psi(\Z_1))e^{-\phi(\Z_1)}],
\end{align*}
where $\phi$ and $\psi$ are two-layer neural networks, and $\tilde{\Z}=[\tilde{\Z}_1,\tilde{\Z}_2]$ is a partition of $\X$ into two equal-sized chunks along the particle dimension. Here, both the input and output dimensions, as well as the hidden dimensions of $\phi$ and $\psi$ are equal to 64, i.e. half the encoder dimension. It can be readily seen that these layers are indeed invertible, as we have
\begin{equation*}
    q^{-1}(q(\Z))=[\Z_1, (\Z_2 e^{\phi(\Z_1)}+\psi(\Z_1)-\psi(\Z_1))e^{-\phi(\Z_1)}]=[\Z_1,\Z_2]=\Z.
\end{equation*}

We have tried various different methods for the invertible layer. This includes other invertible neural networks, but also non-parametric methods, such as an orthogonal matrix, where for the inverse we use the pseudo-inverse; using the real Fourier transform the map from a lower- to a higher-dimensional space; and also just using a regular, non-invertible, trainable linear layer, and simply setting the output particles equal to the input particles manually once a fixed point has been found. Except for the latter, all of these worked comparably well, but the one we opted for seemed to have a slight edge in practice. Since all methods that are provably invertible are equivalent on the input points, as they will faithfully recover the input points, the only difference in performance can stem from how they process points that are not contained in the partial input point cloud. Possibly, the architecture we opted for is naturally well suited for this type of task; this is supported by the fact that it originated in the invertible normalizing flow literature \citep{nice}, which also deals with mapping distributions to distributions.

\textbf{Masking.} Since different samples have different numbers of particles, our batches are padded with zeros. To account for this, all intermediate states of the network are masked accordingly. For point cloud completion, we fix the particles that correspond to the input. We do this by setting the outputs of all encoders at the respective positions equal to the inputs to the network at those positions, and also by masking out the gradient in the MMD flow with zeros for the fixed particles. This ensures that the particles remain unchanged, and are converted to the original low-dimensional particles by the inverse invertible layer.

\subsection{Training}
In this section, we provide some more details regarding the training procedure. 

\textbf{Approximating the Wasserstein Gradient.} In practice, we are not directly using the formula for the Wasserstein gradient we derived, but instead rely on autodifferentiation in \verb|PyTorch|. This is both for simplicity, as well as to account for the fact that, as discussed in Section \ref{sec:pushforward}, our network architecture cannot strictly be written as a pushforward operator on the particles, hence the Wasserstein gradient formula is not exact\footnote{We found autodifferentiation to have slightly better performance than using the formula for the Wasserstein gradient, which can likely be accounted for by the fact that our network is only ``almost'' a pushforward operator.}. To account for the fact that in the discrete measure corresponding to $\Z$, each particle is assigned a mass of $1/N_\Z$, where $N_\Z$ denotes the number of particles in $\Z$, we rescale the gradient obtained from autodifferentiation by the number of particles in $\Z$, similar to \citep[Definition 2.2]{chizat2018globalconvergencegradientdescent}.

\textbf{Phantom Gradient.} As described in the paper, we rely on the phantom gradient to compute the gradients in the backward pass of the DDEQ. For a definition of the phantom gradient, we refer the reader to the original paper \citep{phantomgradient}. We use the \verb|DEQSliced| implementation from the \verb|torchdeq| library \citep{torchdeq}, and use just a single gradient step for the phantom gradient, as well as a single state (where increasing the number of states corresponds to sampling multiple fixed points from the forward solver, and using the best one).

\section{Additional Experiments}\label{sec:add_expes}

\subsection{Empirical Verification of Theorem \ref{thm:cv_loss}}\label{sec:empiricalverification}
In this section, we empirically verify that Theorem \ref{thm:cv_loss}, as well as Assumptions \ref{assumption:pl} and \ref{assumption:grads} hold true in practice across all four datasets.

\textbf{Assumption \ref{assumption:pl}.} We plot
\begin{equation*}
    \frac{\cF_{\mu^*_{\theta_t,\rho}}(\mu)}{\norm{\nabla_W \cF_{\mu^*_{\theta_t,\rho}}(\mu)}^2_{L^2(\mu)}}
\end{equation*}
averaged across $8$ randomly created $\mu\in\cP(\R^d)$ over the course of training. The plots can be seen in Figure \ref{fig:pl}. We see that across all four datasets, the above quantity is bounded, which verifies Assumption \ref{assumption:pl}.

\begin{figure}[h]
    \centering
    \begin{subfigure}{0.24\textwidth}
        \includegraphics[width=\linewidth]{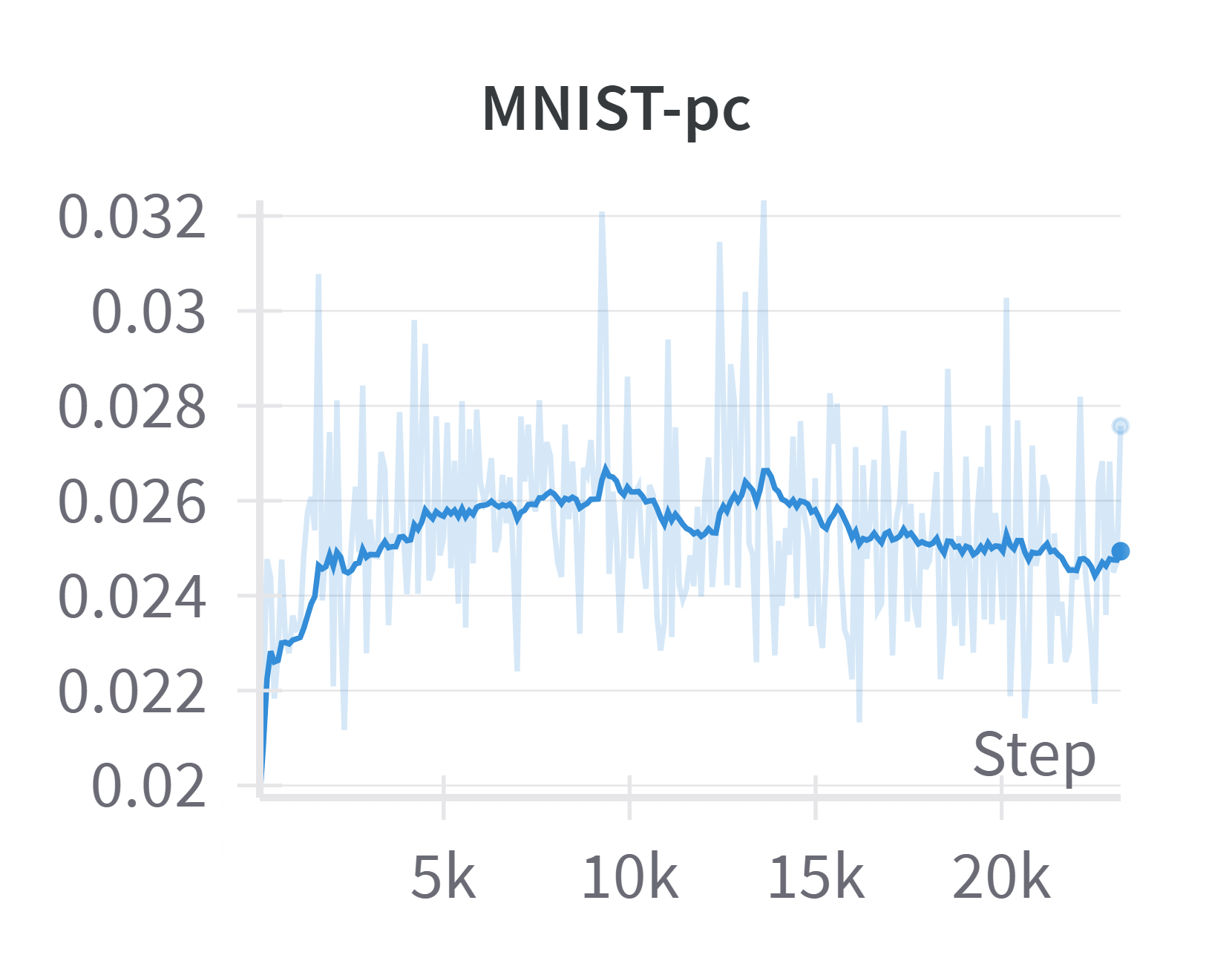}
    \end{subfigure}
    \hspace{-0.2cm}
    \begin{subfigure}{0.24\textwidth}
        \includegraphics[width=\linewidth]{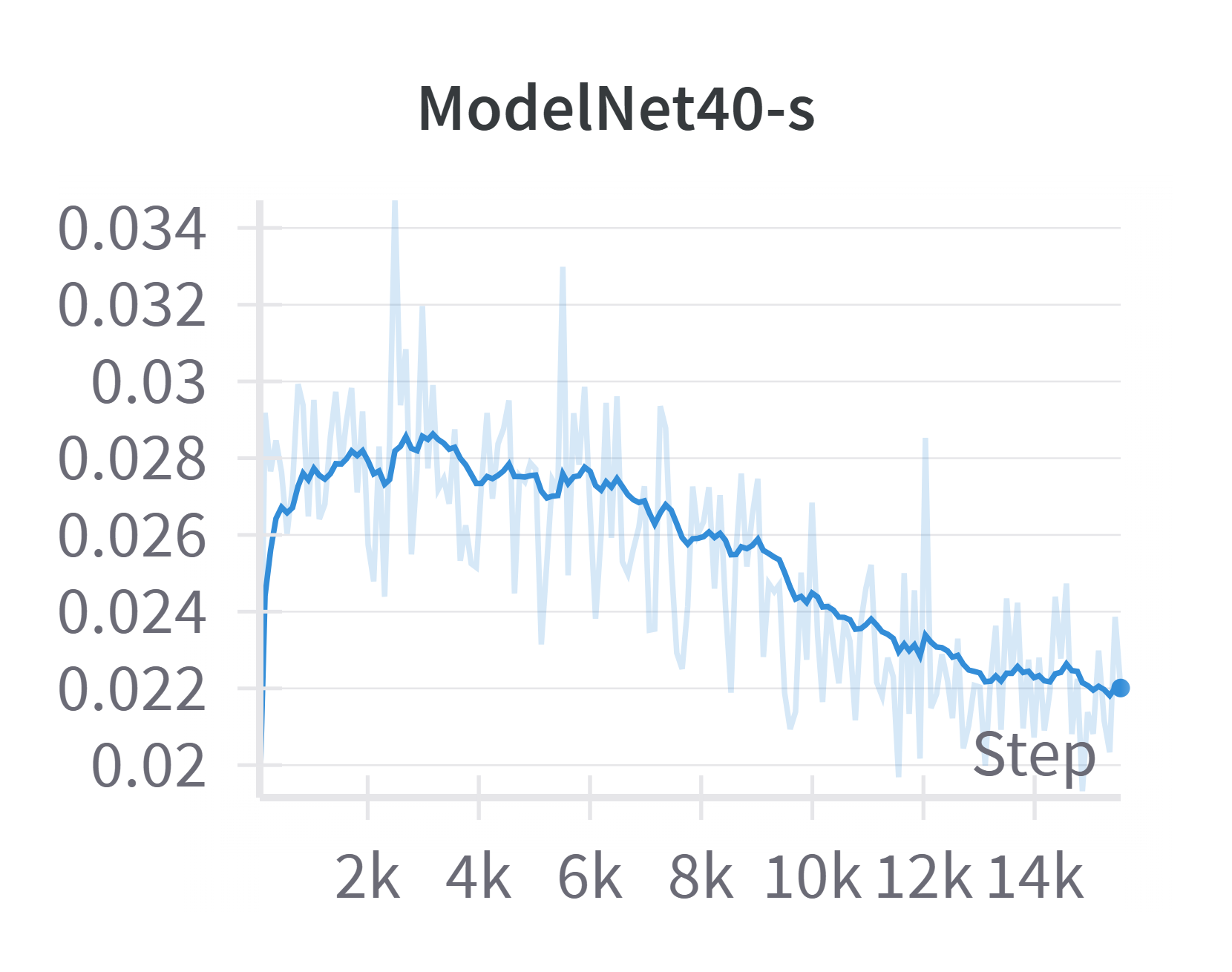}
    \end{subfigure}
    \hspace{-0.2cm}
    \begin{subfigure}{0.24\textwidth}
        \includegraphics[width=\linewidth]{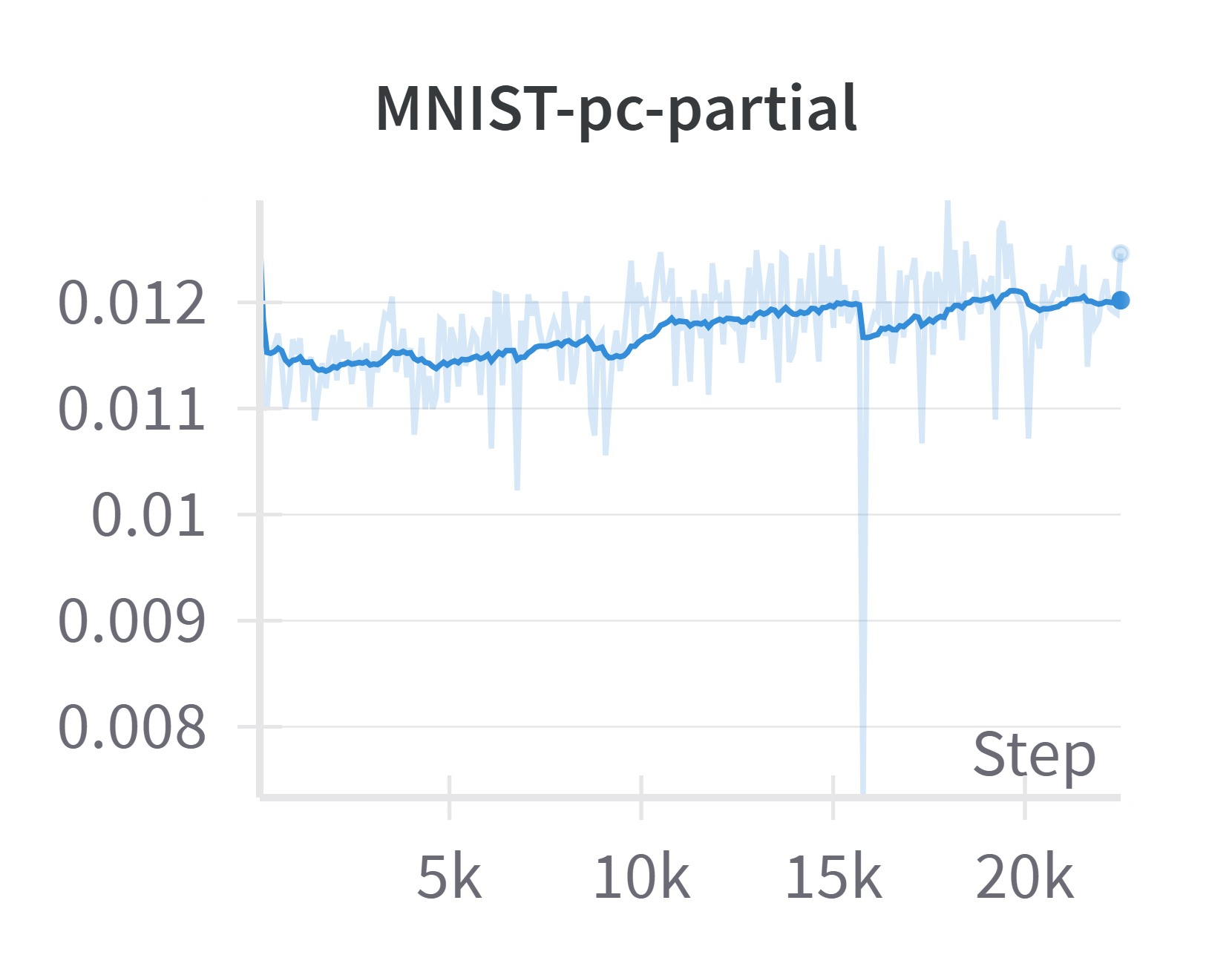}
    \end{subfigure}
    \hspace{-0.2cm}
    \begin{subfigure}{0.24\textwidth}
        \includegraphics[width=\linewidth]{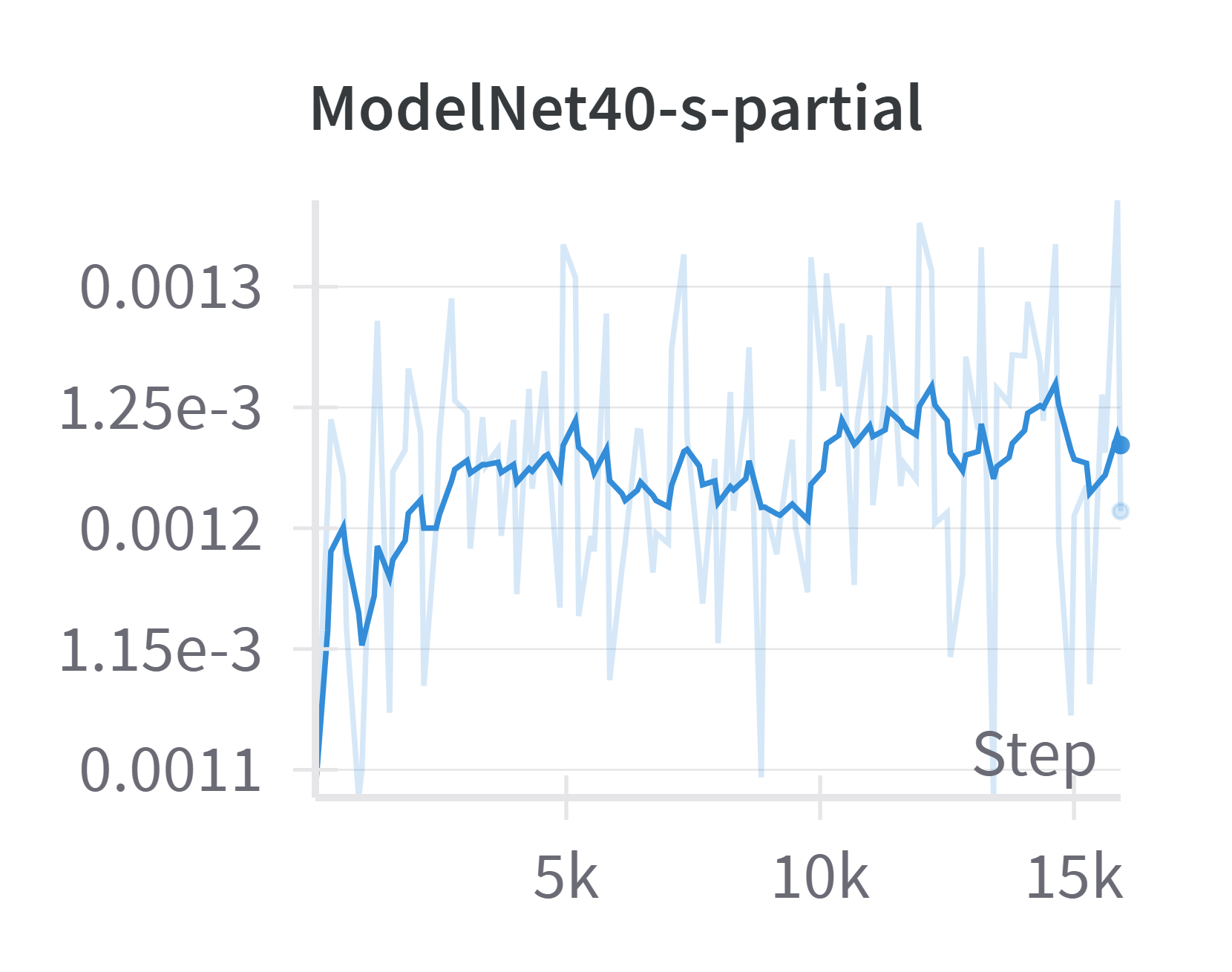}
    \end{subfigure}
    \caption{Empirical Verification of Assumption \ref{assumption:pl} across all four datasets, with a smoothed EMA.}
    \label{fig:pl}
\end{figure}

\textbf{Assumption \ref{assumption:grads}.}
We plot
\begin{equation*}
    \frac{\norm{\nabla_W\cG_{\theta_t,\rho}(\mu_t)-\nabla_W \cF_{\mu^*_{\theta_t,\rho}}(\mu_t)}^2_{L^2(\mu_t)}}{\epsilon_t},
\end{equation*}
where we set $\epsilon_t=\min{(1,t^{-1/2})}$, as in Theorem \ref{thm:cv_loss}, and choose $\mu_t=\mu^*_{\theta_t,\rho}$ for simplicity; the plots can be seen in Figure \ref{fig:wg-diff}. Again, we have boundedness across all four datasets.

\begin{figure}[h]
    \centering
    \begin{subfigure}{0.24\textwidth}
        \includegraphics[width=\linewidth]{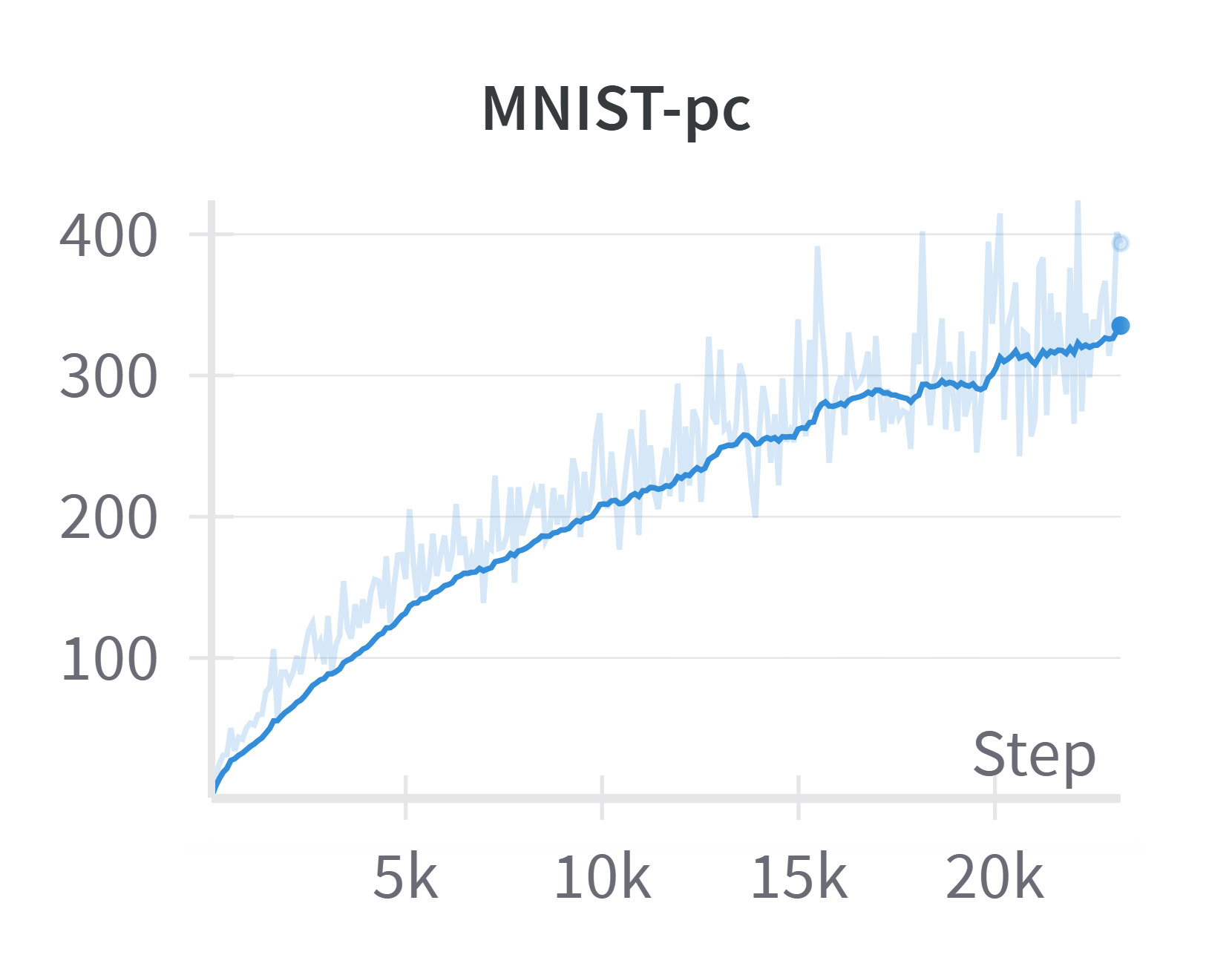}
    \end{subfigure}
    \hspace{-0.2cm}
    \begin{subfigure}{0.24\textwidth}
        \includegraphics[width=\linewidth]{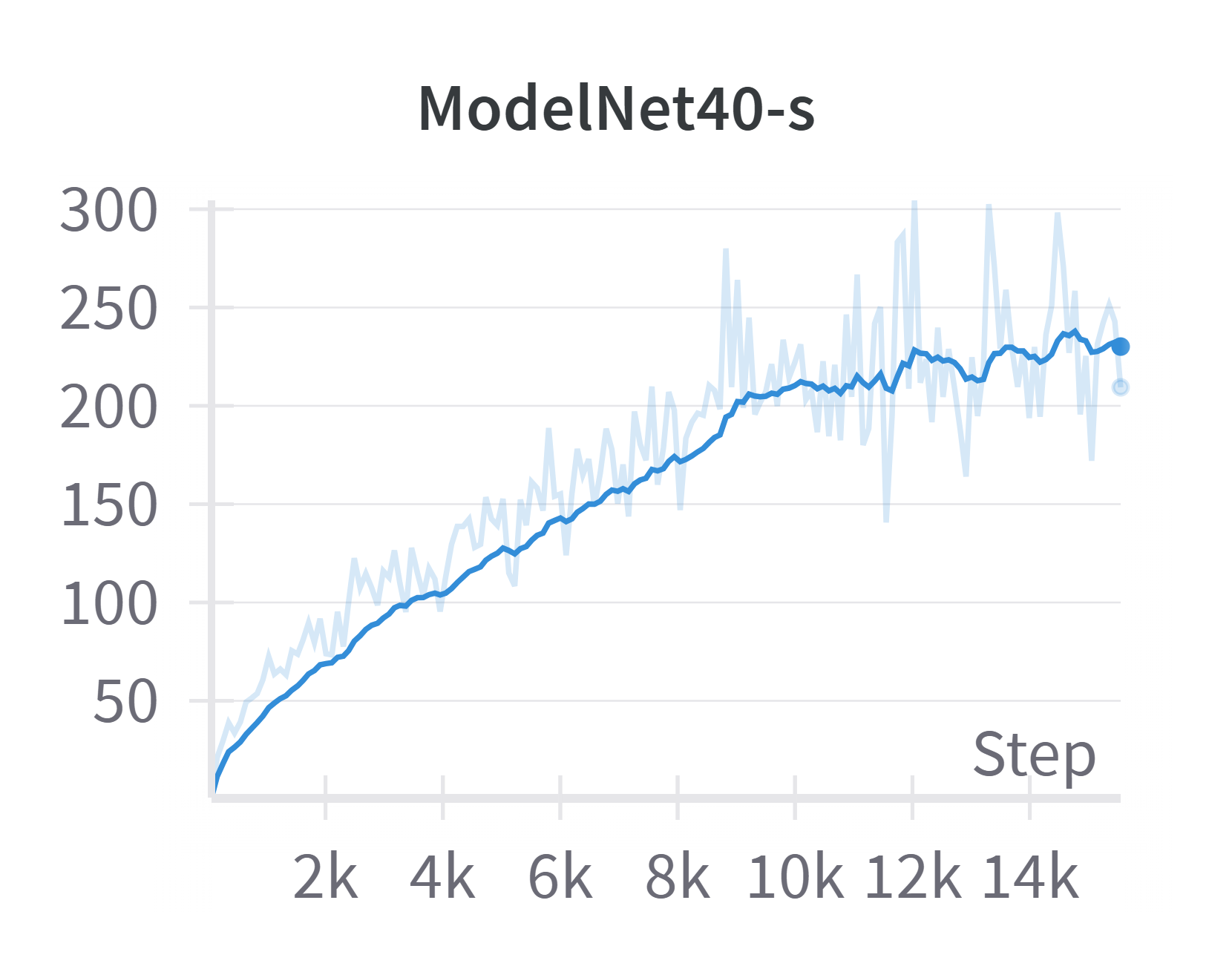}
    \end{subfigure}
    \hspace{-0.2cm}
    \begin{subfigure}{0.24\textwidth}
        \includegraphics[width=\linewidth]{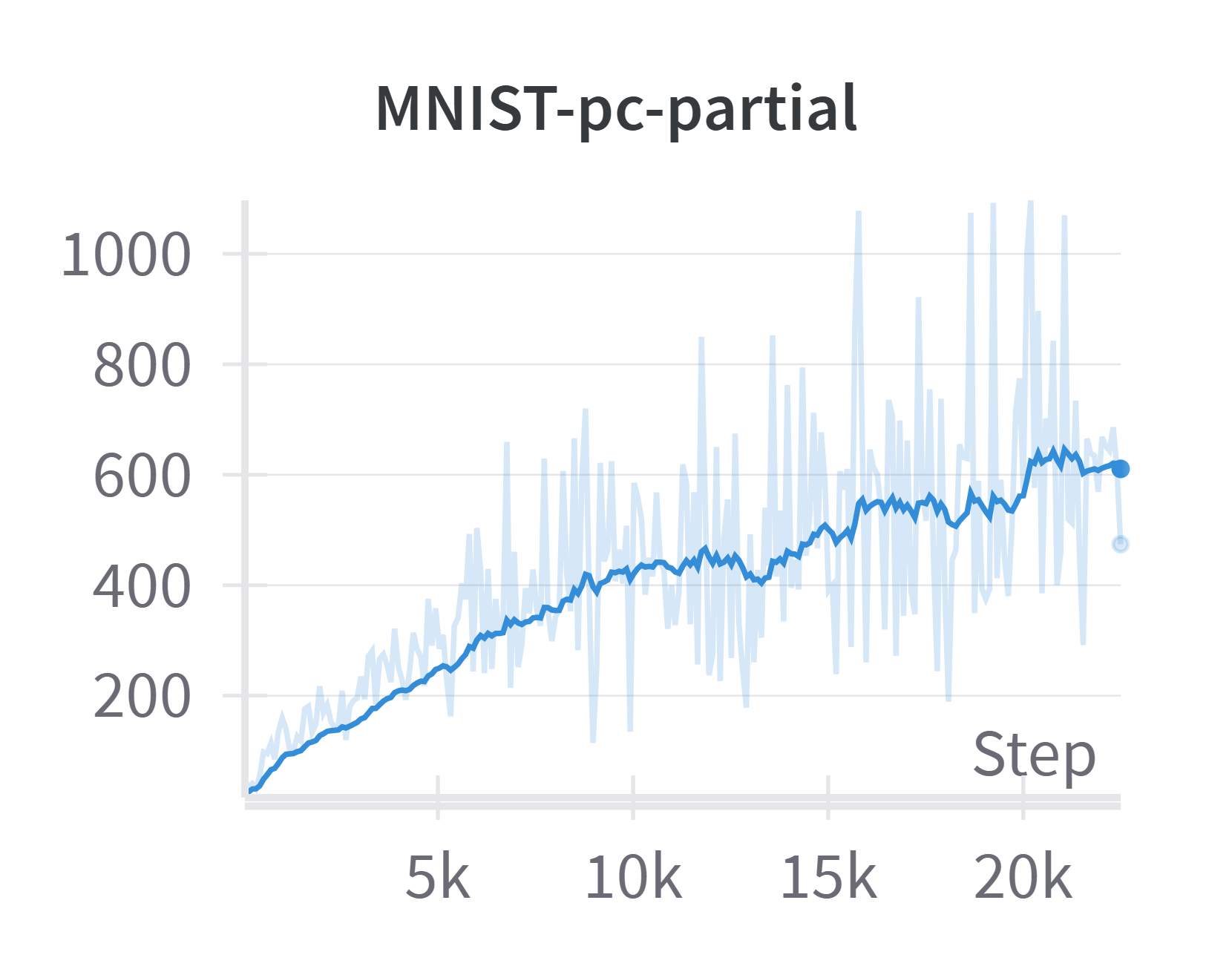}
    \end{subfigure}
    \hspace{-0.2cm}
    \begin{subfigure}{0.24\textwidth}
        \includegraphics[width=\linewidth]{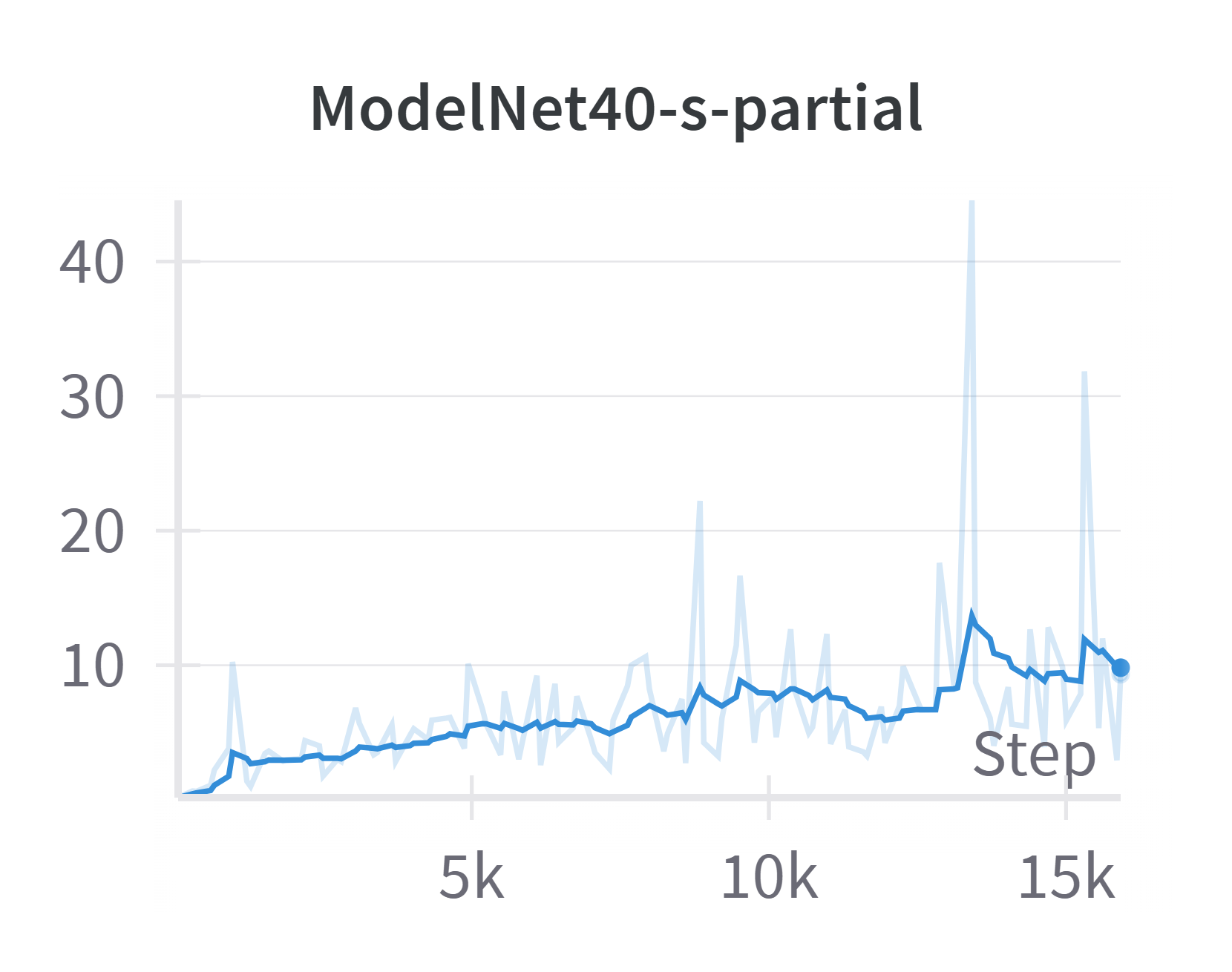}
    \end{subfigure}
    \caption{Empirical Verification of Assumption \ref{assumption:grads} across all four datasets, with a smoothed EMA.}
    \label{fig:wg-diff}
\end{figure}

\textbf{Theorem \ref{thm:cv_loss}.}

We plot
\begin{equation*}
    \frac{\int_0^T \norm{\nabla \cL(\theta_t)}^2\d t}{\sqrt{T} (\log T)^2}
\end{equation*}
over the course of training across all three datasets in Figure \ref{fig:theorem}; here, integration is discrete over the training steps. We see that across datasets, the above quantity is bounded, which verifies Theorem \ref{thm:cv_loss} empirically.

\begin{figure}[h]
    \centering
    \begin{subfigure}{0.24\textwidth}
        \includegraphics[width=\linewidth]{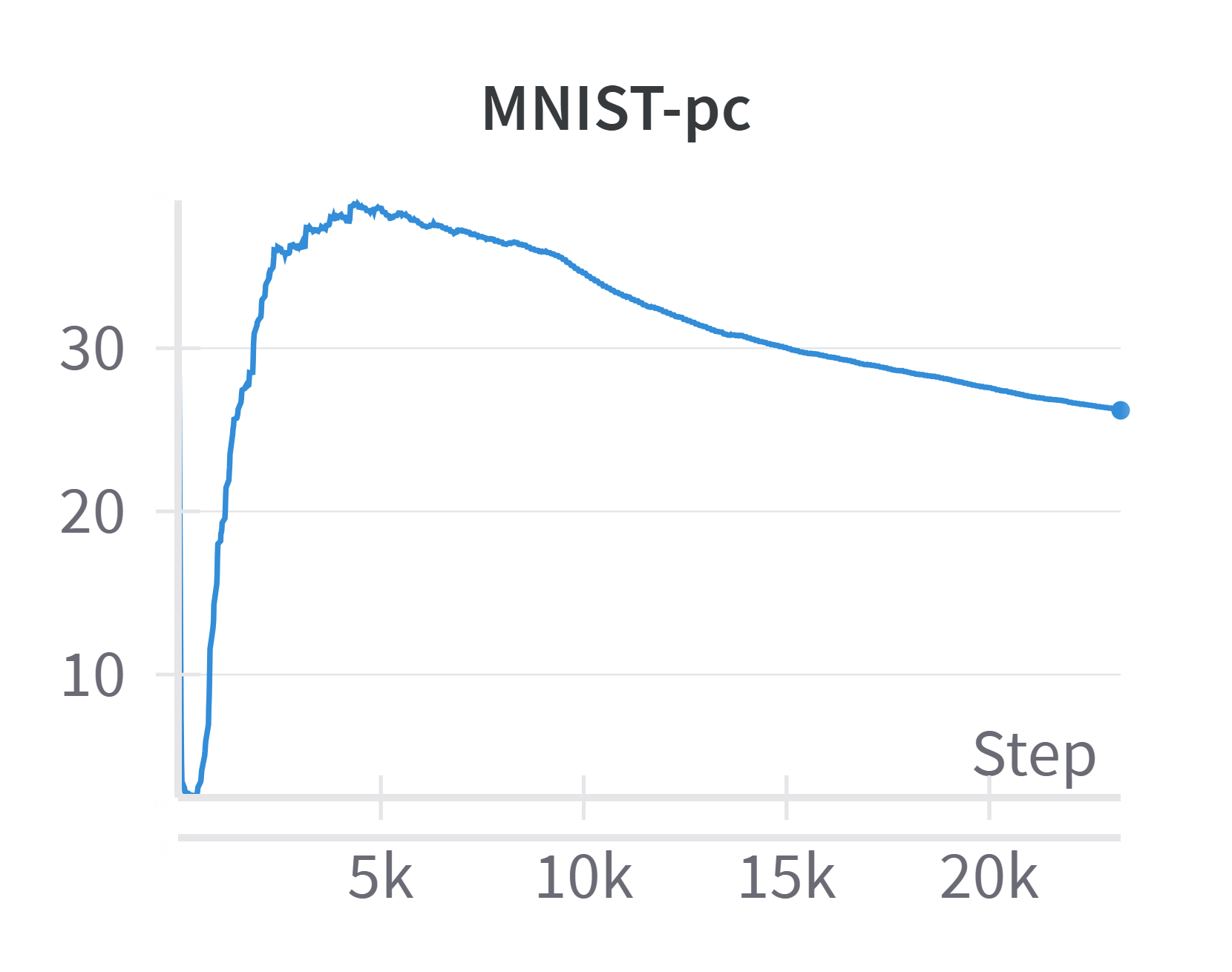}
    \end{subfigure}
    \hspace{-0.2cm}
    \begin{subfigure}{0.24\textwidth}
        \includegraphics[width=\linewidth]{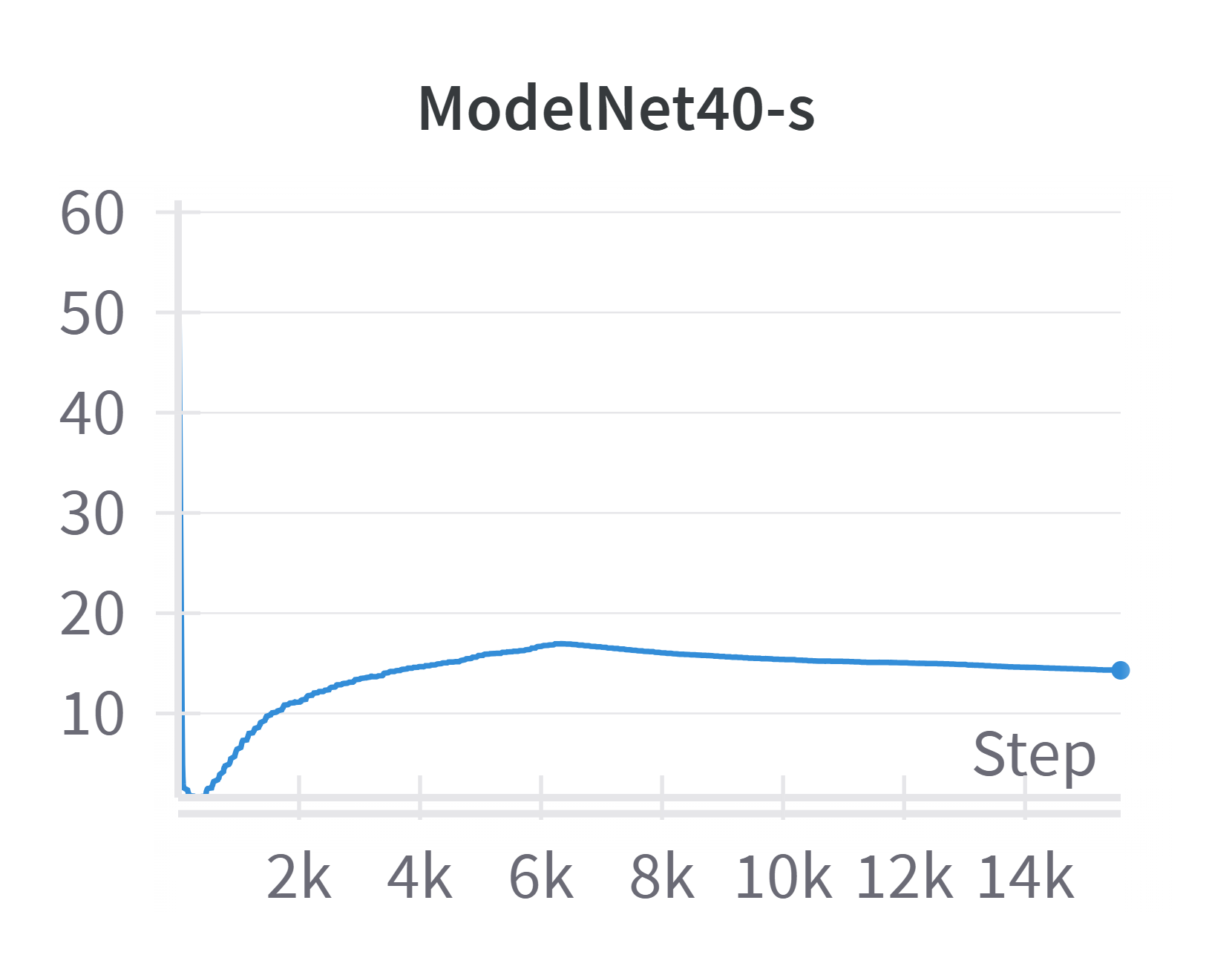}
    \end{subfigure}
    \hspace{-0.2cm}
    \begin{subfigure}{0.24\textwidth}
        \includegraphics[width=\linewidth]{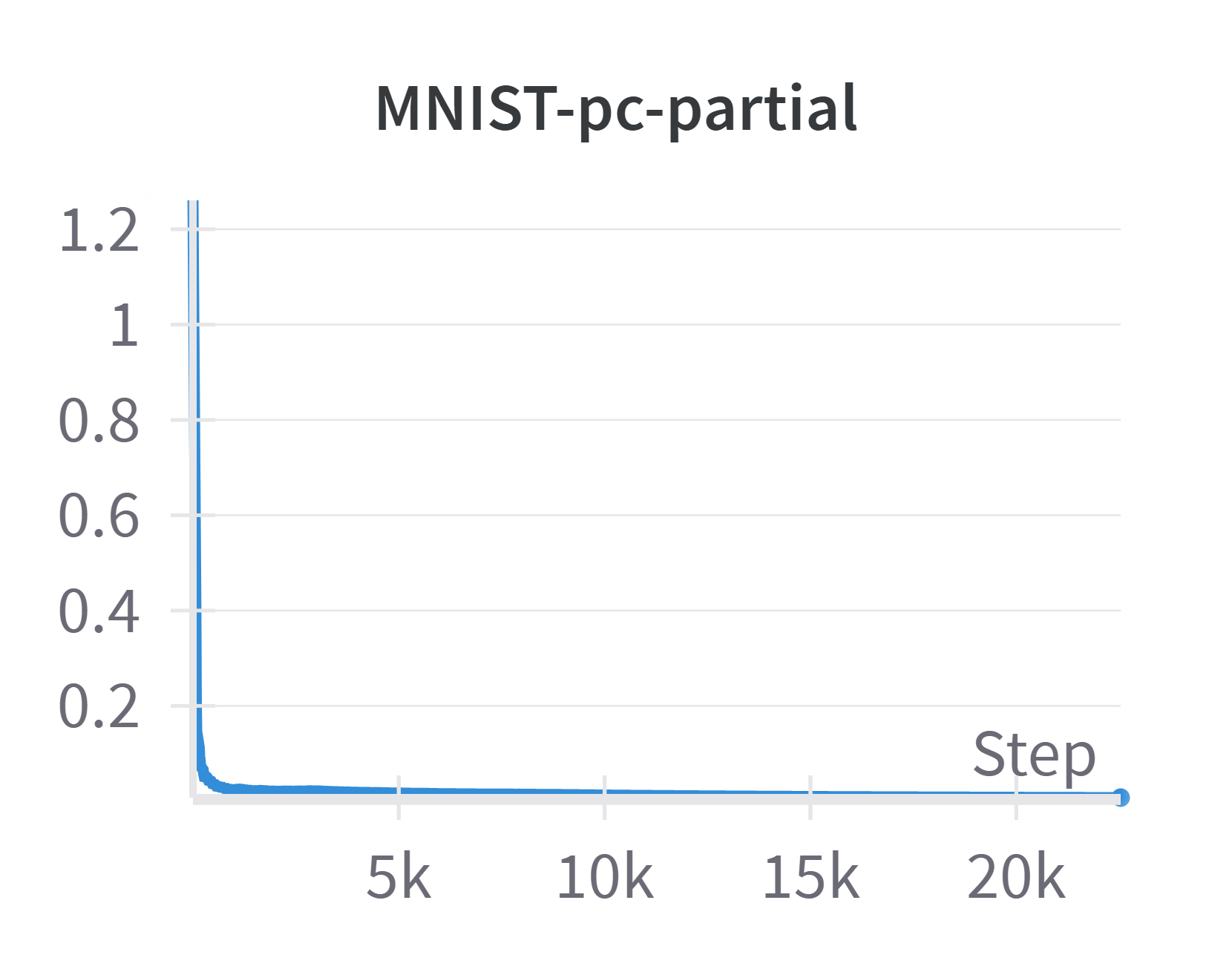}
    \end{subfigure}
    \hspace{-0.2cm}
    \begin{subfigure}{0.24\textwidth}
        \includegraphics[width=\linewidth]{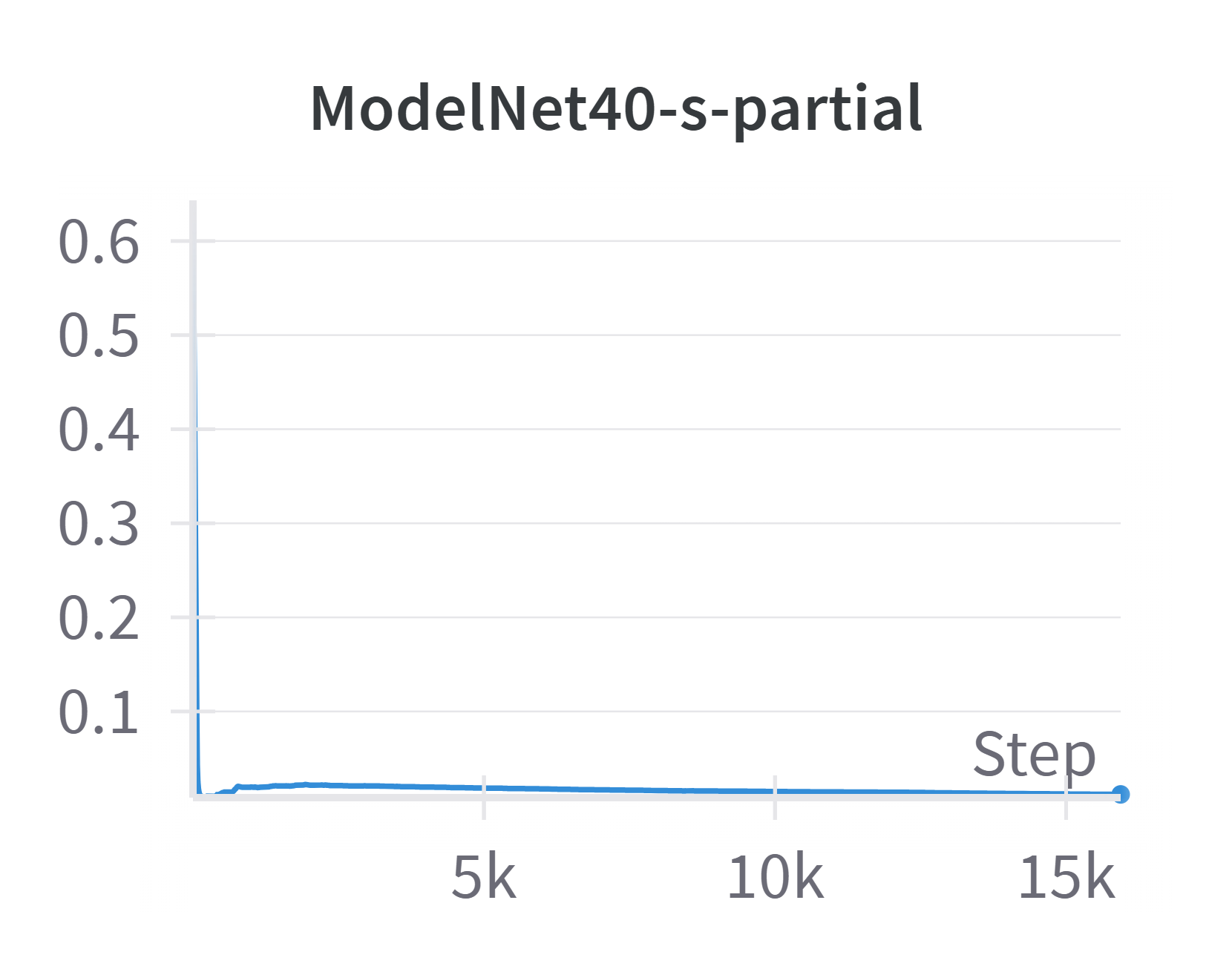}
    \end{subfigure}
    \caption{Empirical Verification of Theorem \ref{thm:cv_loss} across all four datasets.}
    \label{fig:theorem}
\end{figure}

\subsection{Ablation Study on Number of Particles in $\Z$}\label{sec:ab_study_z}
We provide an ablation study on the number of particles in $\Z$. While for point cloud completion, this number is dictated by the number of particles in the input/target, for point cloud classification, this is a free hyperparameter which can be tuned. In Figure \ref{fig:lenz}, we see the average accuracy of a DDEQ for the two classification datasets ModelNet40-s and MNIST-pc over varying numbers of particles in $\Z$.

\begin{figure}[h]
    \centering
    \includegraphics[width=0.5\linewidth]{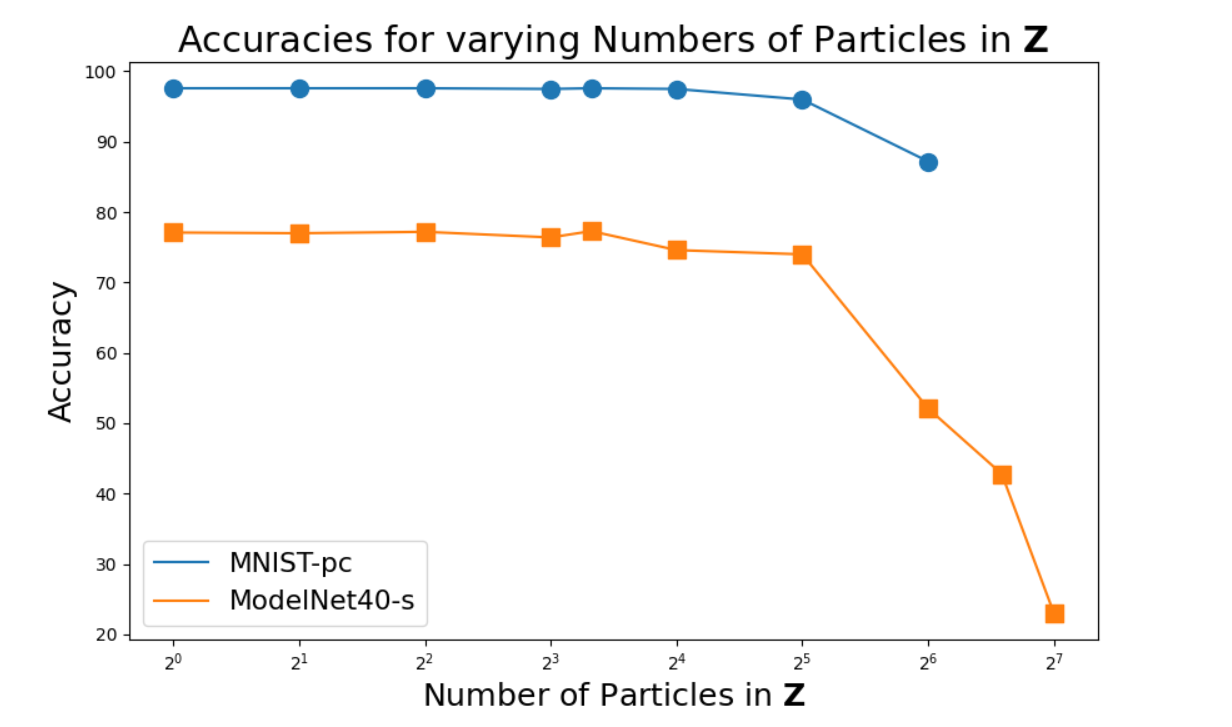}
    \caption{Average accuracy on ModelNet40-s and MNIST-pc for varying numbers of particles in $\Z$.}
    \label{fig:lenz}
\end{figure}
We can see that for both datasets, the accuracy tends to be very similar for between one and 16 particles in $\Z$ (with a very slight increase in accuracy in the range of four to ten particles), and then quickly drops when increasing the number of particles even further.

\subsection{Ablation Study on Pushforward Architecture}\label{sec:pushforward}
In the derivation of the Wasserstein gradient of $\cG(\mu)=\frac{1}{2}\mmds{\mu}{{F_\theta}_\#\mu}$, as well as in the proof of Theorem \ref{thm:cv_loss}, we assumed that we can write $F_\theta(\mu)={F_\theta}_\#(\mu)$ as a pushforward operator. This means there cannot be any interactions between particles in how $F_\theta$ processes them. However, in the architecture we use in practice, two parts do not comply with this assumption, namely the bilinear layer and the self-encoder on $\Z$ (which corresponds to $\mu$). In this section, we study how making our architecture a pushforward operator on $\Z$, namely by changing the bilinear layer from

\begin{equation*}
    F^{\textnormal{bil}}(\Z,\X)=\Z\alpha \X^\top 1_M+1_N 1_N^\top \Z\beta \X^\top 1_M
\end{equation*}

to

\begin{equation*}
    F^{\textnormal{bil}}(\Z,\X)=\Z\alpha \X^\top 1_M,
\end{equation*}
i.e. removing the part that introduces interactions between particles in $\Z$, and by removing the self-encoder on $\Z$ (while leaving the self-encoder on $\X$ and the cross-encoder unchanged) affects the performance.

\begin{table}[h!]
\centering
\caption{Accuracies and test losses of DDEQs, for the pushforward architecture compared against the complete architecture we use in practice.}
\begin{tabular}{c|cc|cc|c}
 & \multicolumn{2}{c|}{\textbf{MNIST-pc}} & \multicolumn{2}{c|}{\textbf{ModelNet40-s}} & \textbf{MNIST-pc-partial} \\

 & \textbf{Acc.} & \textbf{Cross-Entr. Loss} & \textbf{Acc.} & \textbf{Cross-Entr. Loss} & \textbf{MMD Loss} \\
\midrule
\textbf{pushforward}& 97.7 &0.101 & 75.8& 0.938& 0.0034 \\
\textbf{full architecture}  &98.1 &0.086 & 77.3& 0.888& 0.0033\\

\end{tabular}

\label{tab:pushforward}
\end{table}

In Table \ref{tab:pushforward}, we can see the differences in accuracy and test loss between the complete architecture, and the one that can be written as a pushforward. We see that using the architecture that's strictly a pushforward is overall comparable to the full architecture, but slightly worse in all metrics.

\subsection{Computational Complexity}
In this section, we address the computational complexity of DDEQs, and how it changes w.r.t. the number of forward iterations in the inner loop, as well as the number of points in the input measures. The results are for point cloud completion on \textit{MNIST-pc-partial} trained on a single H100 GPU. When reducing the number of forward pass iterations of the inner loop, we increase the inner loop learning rate accordingly, such that \textit{learning rate} $\times$ \textit{forward iterations} remains constant.

\begin{table}[h]
    \centering
    \begin{tabular}{lcccccc}
        \toprule
        \textbf{Number of Iterations} & 10 & 50 & 100 & 200 & 500 & 1000 \\
        \midrule
        \textbf{Runtime (1 Epoch)}    & 13m & 47m & 1h32m & 3h1m & 8h12m & 14h53m \\
        \bottomrule
    \end{tabular}
    \caption{Runtime per epoch for different numbers of iterations.}
    \label{tab:iterations_runtime}
\end{table}

\begin{table}[h]
    \centering
    \begin{tabular}{lccccc}
        \toprule
        \textbf{Input Particles (\%)} & 20\% & 40\% & 60\% & 80\% & 100\% \\
        \midrule
        \textbf{Runtime (1 Epoch)}    & 1h16m & 1h29m & 1h51m & 2h20m & 3h1m \\
        \bottomrule
    \end{tabular}
    \caption{Runtime per epoch for different input particle percentages.}
    \label{tab:particles_runtime}
\end{table}
In contrast, the runtime for PCN is 4min. This shows that the number of forward iterations is crucial in controlling the runtime of DDEQs. It can be reduced to around 50 iterations without significant loss in performance, but deriving methods to improve accuracy for fewer iterations is an important line of future research, as well as deriving other methods to speed up training, such as improving the optimizer, possibly with momentum. Fixed point acceleration methods such as Anderson acceleration that exist in a Euclidean setting could be derived for WGFs; another approach would be to use the sliced MMD \citep{rieszkernels}, which reduces the computational complexity of computing (the gradient of) the MMD from $O(n^2)$ to $O(n\log n)$ (where $n$ is the number of particles).

\subsection{Comparing DDEQs and DEQs}
To demonstrate that DDEQs outperform standard DEQs on point cloud completion, we trained classical DEQs with the same architecture, but with (1) Broyden, (2) Anderson, and (3) fixed point iteration forward solvers, three widely used forward solvers for DEQs. All three perform similarly and achieve a test loss, measured as the squared MMD between prediction and target, of $13\times$ that of DDEQs, and an average Wasserstein-2 loss of $4\times$ that of DDEQs, with very poor visual image quality. However, we want to mention that DEQs achieve (almost) comparable performance on point cloud classification, highlighting the fact that while DDEQs are capable of point cloud classification, this is not a task that necessitates them, whereas point cloud completion clearly is.

\subsection{Fixed Points}
In this section, we show what the fixed points for the classification tasks look like, and how close they are to being real fixed points. In Figures \ref{fig:fixedpointsmnist} and \ref{fig:fixedpointsmodelnet}, we see the fixed points $\Z^*$ obtained for both datasets, alongside $F_\theta(\Z^*)$ -- for perfect fixed points, these two should be identical. Since $\Z$ is high dimensional, we project it to two dimensions with UMAP \citep{mcinnes2020umapuniformmanifoldapproximation}, both with two, as well as 30 neighbors. Clearly, with both projection methods $\Z*$ and $F_\theta(\Z^*)$ tend to be visibly fairly close, but noticeable differences remain.

\begin{figure}[h!]
    \centering
    \includegraphics[width=0.8\linewidth]{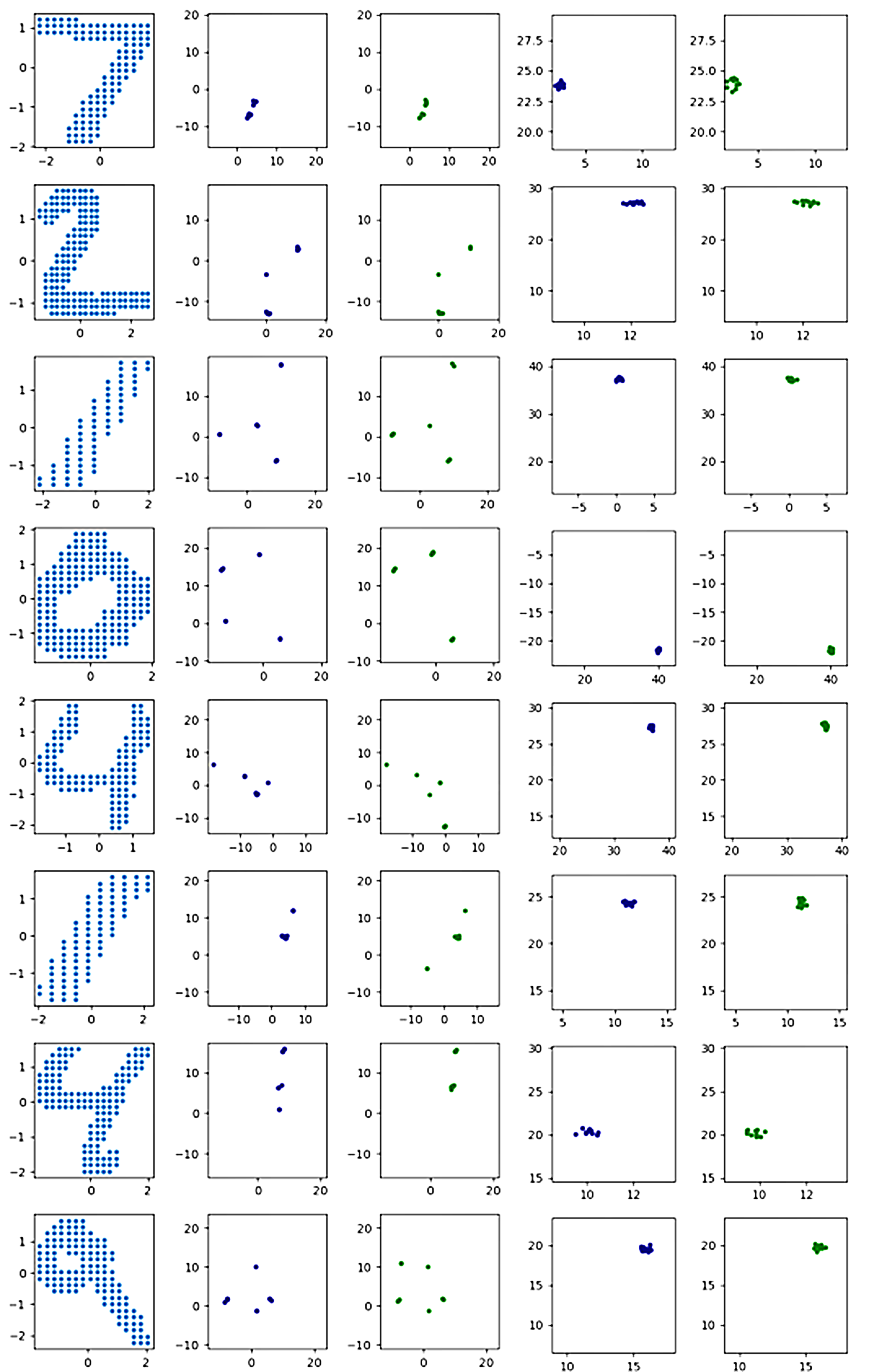}
    \caption{Fixed points for MNIST-pc classification. First column is inputs, second and third column are $\Z^*$ resp. $F_\theta(\Z*)$, mapped with UMAP with \texttt{n\_neighbors}=2; and the fourth and fifth column are $\Z^*$ resp. $F_\theta(\Z*)$, mapped with UMAP with \texttt{n\_neighbors}=30.}
    \label{fig:fixedpointsmnist}
\end{figure}

\begin{figure}[h!]
    \centering
    \includegraphics[width=0.8\linewidth]{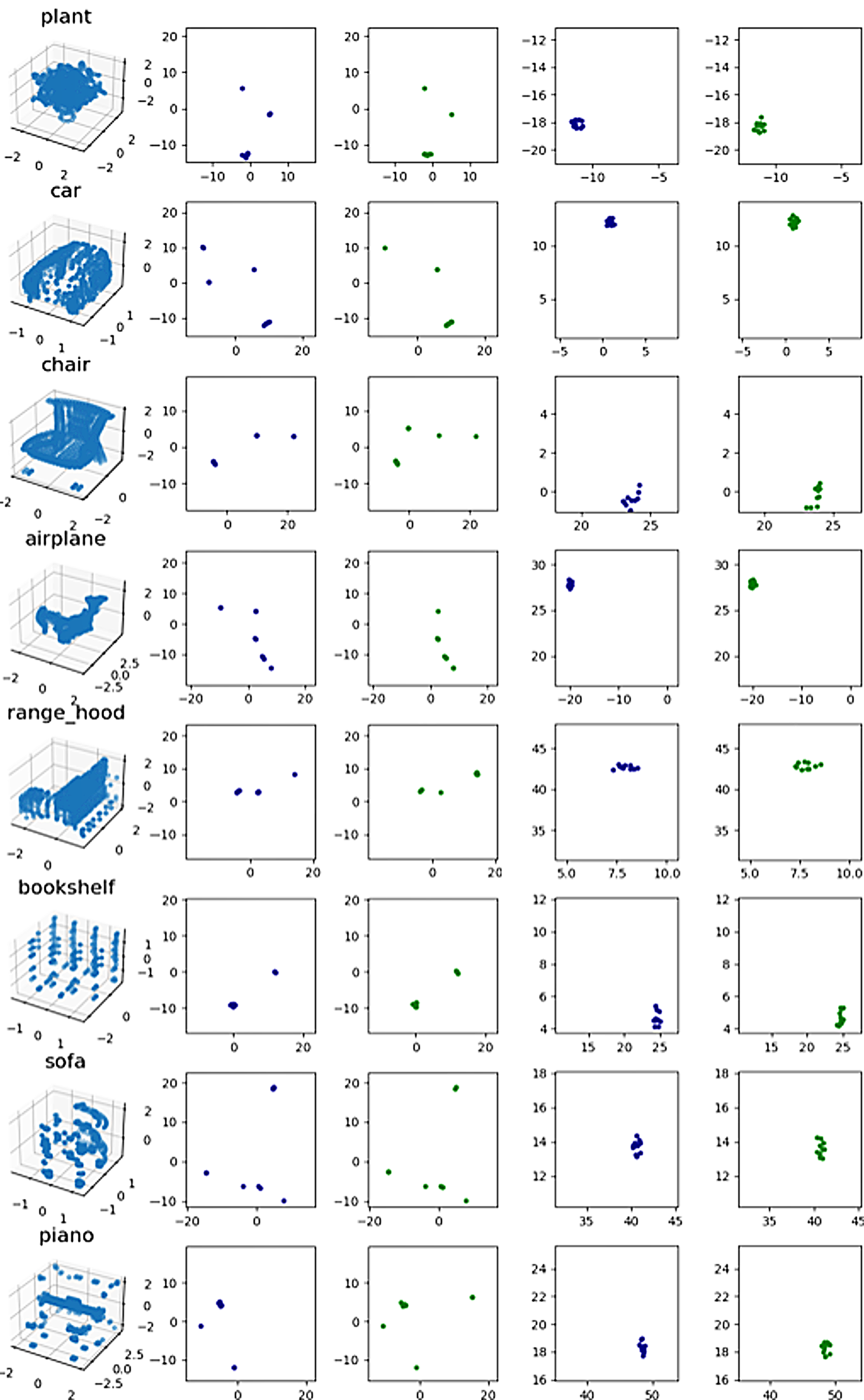}
    \caption{Fixed points for ModelNet40-s classification. First column is inputs, second and third column are $\Z^*$ resp. $F_\theta(\Z*)$, mapped with UMAP with \texttt{n\_neighbors}=2; and the fourth and fifth column are $\Z^*$ resp. $F_\theta(\Z*)$, mapped with UMAP with \texttt{n\_neighbors}=30.}
    \label{fig:fixedpointsmodelnet}
\end{figure}

\subsection{Class Embeddings}
In our classification models, the classification head is a single linear layer which gets as input only the MaxPooling of the fixed points. Thus, the DDEQ needs to embed the class information in the MaxPooling of the fixed points. In Figures \ref{fig:distributionmnist} and \ref{fig:distributionmodelnet}, we see the output of the MaxPooling for the samples of three batches of the test split, respectively.

\begin{figure}[h!]
    \centering
    \includegraphics[width=0.8\linewidth]{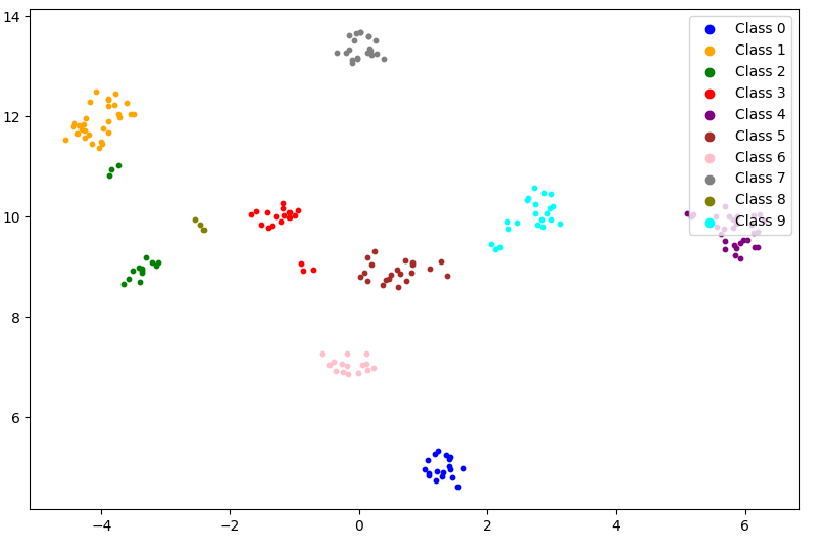}
    \caption{Distribution of the MaxPooling of the fixed points on MNIST-pc, visualized with UMAP with \texttt{n\_neighbors}=80.}
    \label{fig:distributionmnist}
\end{figure}

\begin{figure}[h!]
    \centering
    \includegraphics[width=1\linewidth]{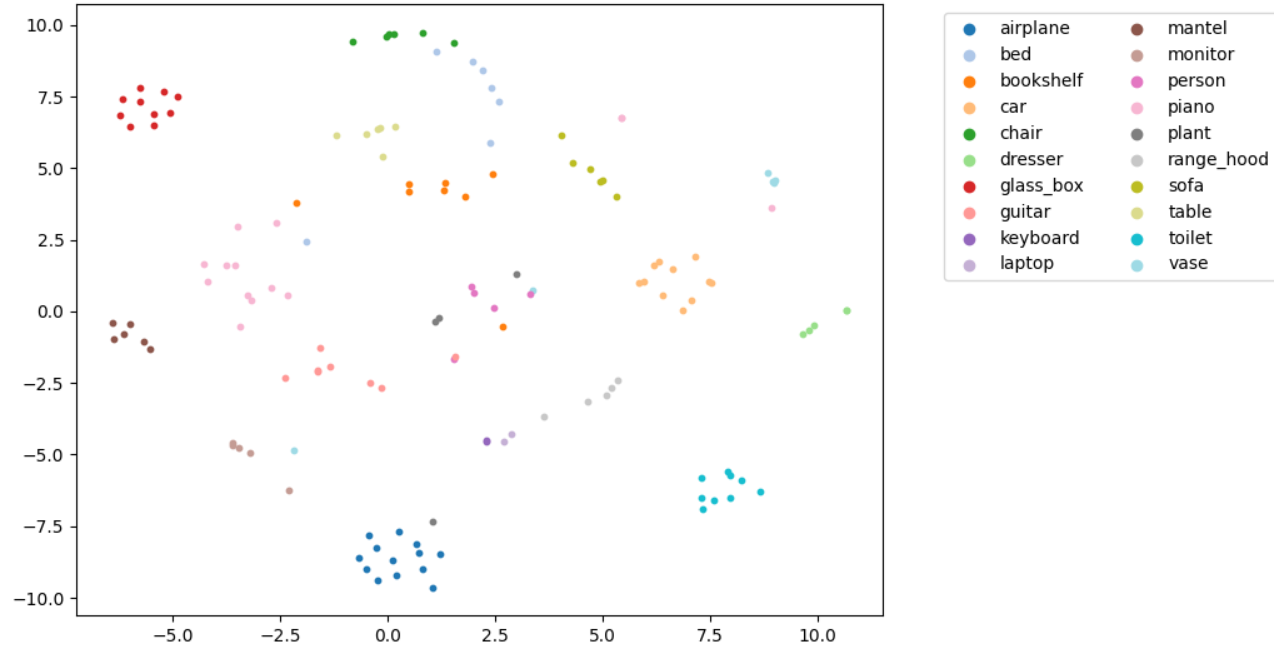}
    \caption{Distribution of the MaxPooling of the fixed points on ModelNet40-s, for a subset of 20 of the 40 classes. Visualized with TSNE \citep{tsne} with \texttt{perplexity}=30.}
    \label{fig:distributionmodelnet}
\end{figure}

As can be seen, the embeddings for MNIST-pc classes are very distinct, and the embeddings for ModelNet40-s are distinct for most classes, but for some, there is significant overlap. Note that these are visualizations of 128-dimensional points in two dimensions, and suffer from the limitations of the representation techniques used.

\subsection{Additional ModelNet40-s-partial Samples}\label{sec:additionalmodelnet}
Figure \ref{fig:modelnetsamples} contains additional samples for point cloud completion on ModelNet40-s-partial with DDEQ and PCN. We see from the samples that DDEQ excels at certain classes, such as airplane, but other classes, such as bowls or cones, can be failure modes. PCN tends to do better on airplanes as well, but generally produces point clouds that are too diffuse.
In Table \ref{tab:modelnetot}, we report the average OT distances (with Euclidean distance as cost function) between the predictions and ground truth for DDEQ and PCN. For DDEQ, we compute both the distance between the complete prediction and ground truth, as well as only for the "free particles" and the corresponding particles in the target.

\begin{table}[htbp]
\centering
\begin{tabular}{l  cc  c}

 & \multicolumn{2}{c}{\textbf{DDEQ}} & \hspace{1cm}\textbf{PCN} \\[5pt]
 & free particles & full point cloud & \\ \hline
\textbf{OT Distance}\hspace{.5cm} & $1.27\pm 0.04$ & $0.154 \pm 0.002$ & \hspace{1cm} $0.61 \pm 0.02$ \\ \hline
\end{tabular}
\caption{OT distances between predictions and targets for DDEQ and PCN on ModelNet40-s-partial.}
\label{tab:modelnetot}
\end{table}

\begin{figure}
    \centering
    \includegraphics[width=\linewidth]{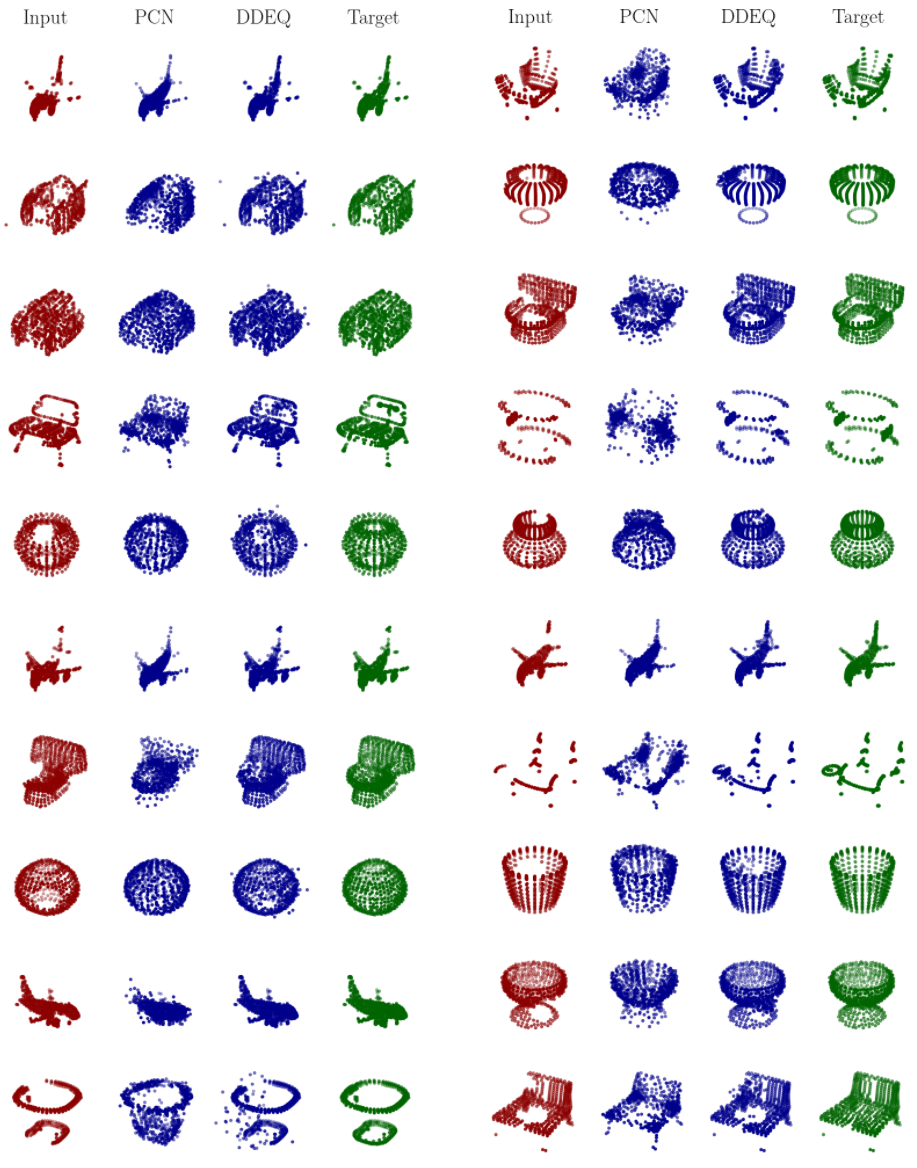}
    \caption{ModelNet40-s-partial samples.}
    \label{fig:modelnetsamples}
\end{figure}

\subsection{Visualization of MMD Gradient Flow With Rotational Target}
In the literature, MMD gradient flows are typically studied with a fixed target measure, i.e. gradient flows of $\mu\mapsto \mmds{\mu}{\nu}$ for a fixed measure $\nu$. To our knowledge, gradient flows of functionals of the form $\mu\mapsto \mmds{\mu}{F(\mu)}$, where $F$ is a functional, have not been studied yet. To illustrate what interesting emergent properties these gradient flows can have even for very simple target functions, we show what the gradient flow looks like when $F$ is a rotation of $2\pi/8$ resp. $2\pi/5$ around the origin, in Figures \ref{fig:rotation5} and \ref{fig:rotation8}. We initialize $x\sim\mathcal{N}\left(\begin{bmatrix}-10\\ 0\end{bmatrix},\begin{bmatrix}
    1&0\\0&12\end{bmatrix}\right)$, such that the initial particles are not centered around the origin, and again use the Riesz kernel, with $\eta=10$ and $\gamma_\eta=0.999$.

We see that instead of converging to a point cloud uniformly distributed around the origin, the MMD flow converges to diverse geometric patterns which are perfectly symmetric.

\begin{figure}
    \centering
    \includegraphics[width=\linewidth]{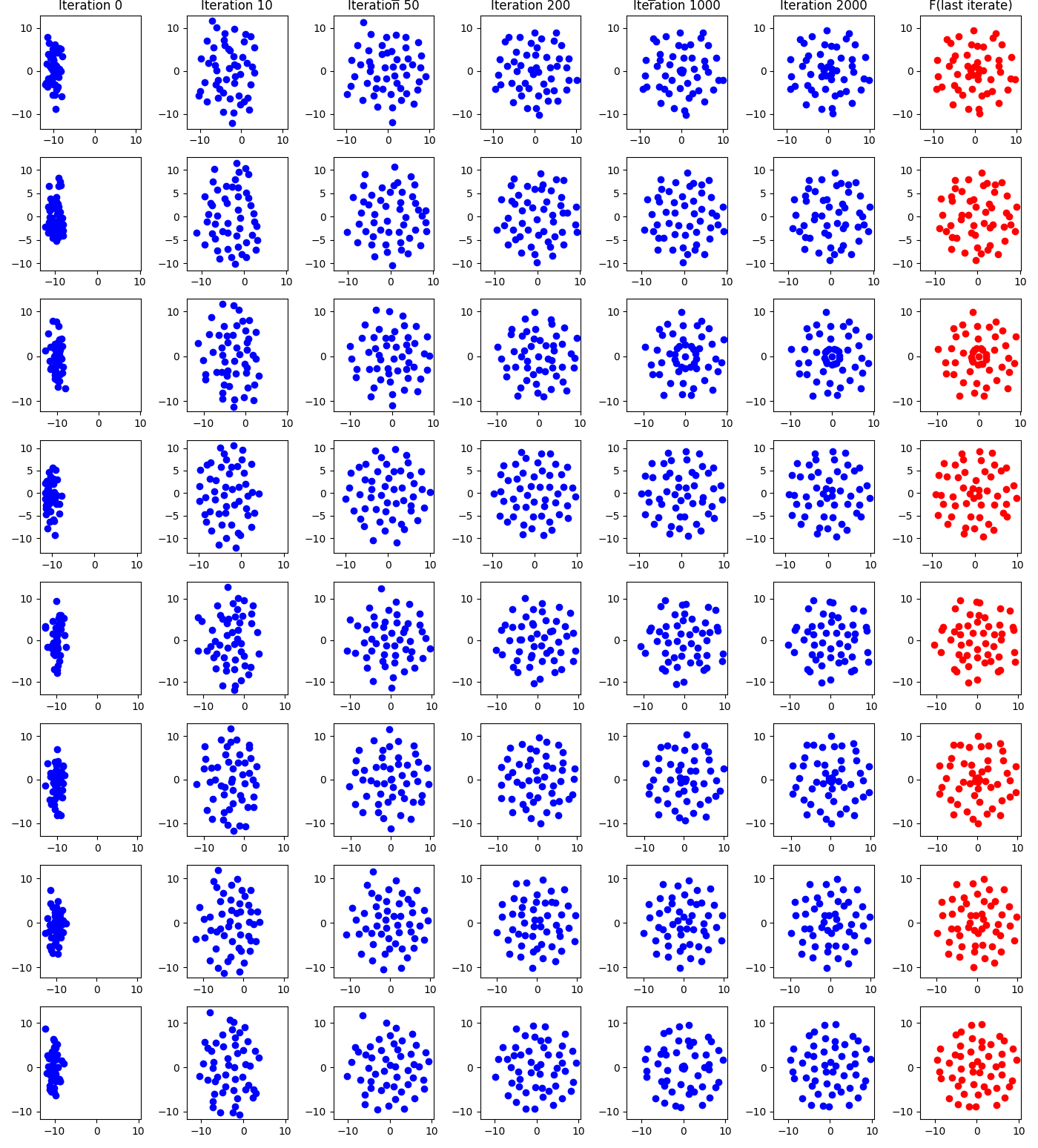}
    \caption{MMD gradient flow where the target functional is a rotation by $2\pi/5$. The columns show the flow at initialization, after 10, 50, 200, 1000, and 2000 iterations, and the rotation applied to the last iterate (in red).}
    \label{fig:rotation5}
\end{figure}

\begin{figure}
    \centering
    \includegraphics[width=\linewidth]{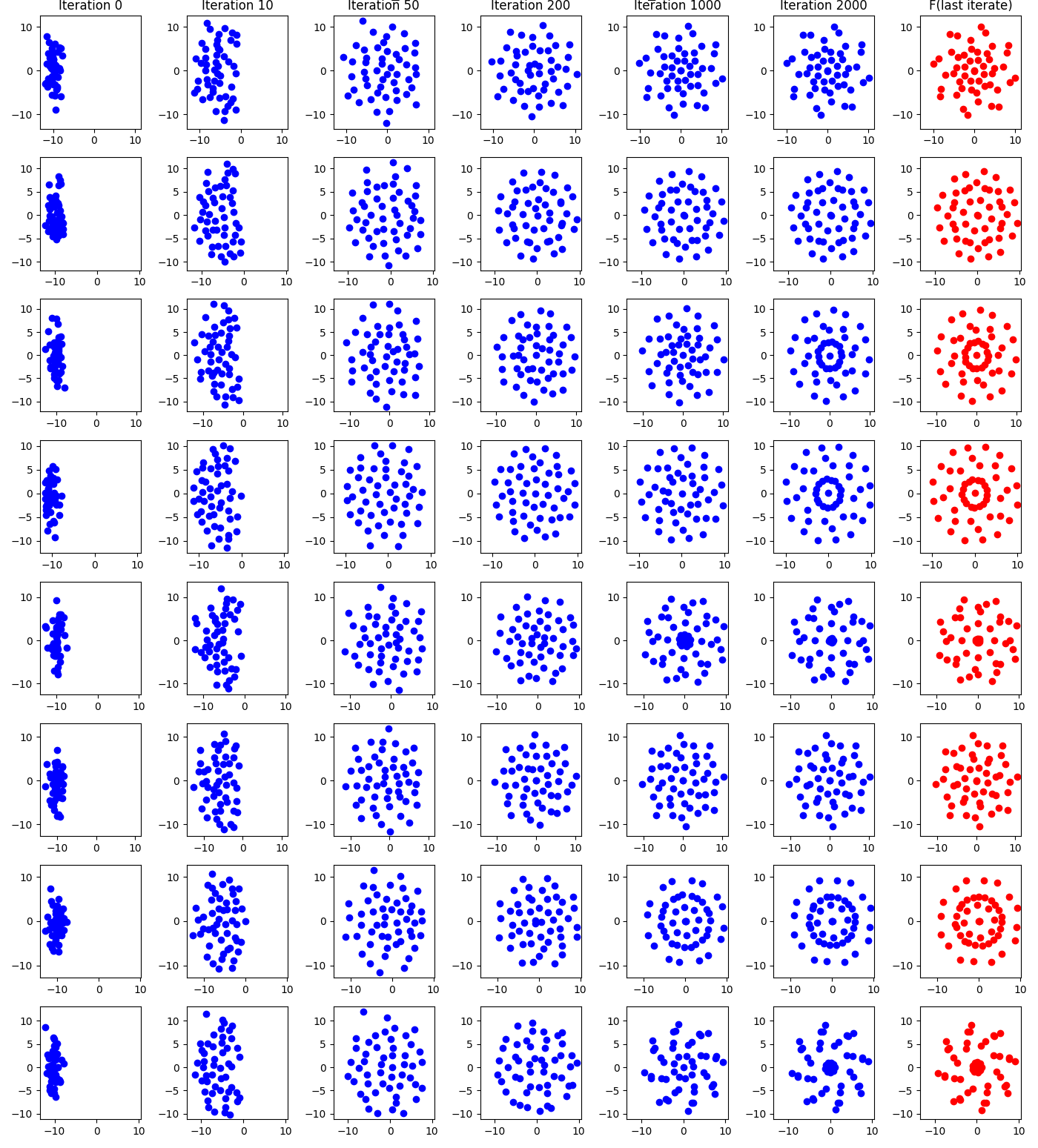}
    \caption{MMD gradient flow where the target functional is a rotation by $2\pi/8$. The columns show the flow at initialization, after 10, 50, 200, 1000, and 2000 iterations, and the rotation applied to the last iterate (in red).}
    \label{fig:rotation8}
\end{figure}
\end{document}